\documentclass[11pt]{article}
\usepackage[margin=1in]{geometry}
\usepackage{booktabs} 
\usepackage{times}
\usepackage[dvipsnames]{xcolor}
\usepackage[backref,colorlinks,citecolor=blue,bookmarks=true]{hyperref}
\usepackage{mathtools, amssymb, amsthm}

\usepackage[ruled,vlined]{algorithm2e}
\usepackage{svg}
 \usepackage{tikz}
\SetAlCapSkip{1em}
\usepackage{algorithmic}
\usepackage[title]{appendix}

\title{Testing Noise Assumptions of Learning Algorithms}
\author{
    \begin{tabular}{cc}
       \begin{tabular}{c}
       Surbhi Goel \thanks{\texttt{surbhig@cis.upenn.edu}. Supported by OpenAI Superalignment Fast Grant. Part of this work was conducted while the author was
visiting the Simons Institute for the Theory of Computing.} \\
     University of Pennsylvania
     \end{tabular}  &  \begin{tabular}{c}
     Adam R. Klivans\thanks{\texttt{klivans@cs.utexas.edu}. Supported by NSF award AF-1909204 and the NSF AI Institute for Foundations of Machine Learning (IFML).} \\
	 UT Austin
     \end{tabular} \\\\
       \begin{tabular}{c}
         Konstantinos Stavropoulos\thanks{\texttt{kstavrop@cs.utexas.edu}. Supported by the NSF AI Institute for Foundations of Machine Learning (IFML) and by scholarships from Bodossaki Foundation and Leventis Foundation.} \\
	 UT Austin
       \end{tabular}  &  \begin{tabular}{c}
            Arsen Vasilyan\thanks{\texttt{arsenvasilyan@gmail.com}. Supported in part by NSF awards CCF-2006664, DMS-2022448, CCF-1565235, CCF-1955217, CCF-2310818, the NSF AI Institute for Foundations of Machine Learning (IFML) Big George Fellowship and Fintech@CSAIL. Part of this work was conducted while the author was
visiting the Simons Institute for the Theory of Computing.} \\
	 UT Austin
       \end{tabular}
    \end{tabular}    
}
\date{}

\usepackage{mathtools, amssymb, amsthm}
\usepackage[capitalize]{cleveref}
\usepackage{enumitem}

%   Theorems
\theoremstyle{plain}
\newtheorem{theorem}{Theorem}[section]
\newtheorem{lemma}[theorem]{Lemma}
\newtheorem{corollary}[theorem]{Corollary}
\newtheorem{proposition}[theorem]{Proposition}
\newtheorem{fact}[theorem]{Fact}

\newtheorem{assumption}[theorem]{Assumption}
\newtheorem{observation}[theorem]{Observation}
\newtheorem{claim}{Claim}

\theoremstyle{definition}
\newtheorem{definition}[theorem]{Definition}

\theoremstyle{remark}

\numberwithin{equation}{section}

%   Macros
\def\A{\mathcal{A}}

\def\D{\mathcal{D}}

\def\H{\mathcal{H}}

\def\S{\mathbb{S}}

\newcommand*{\N}{{\mathbb{N}}}
\newcommand*{\Z}{{\mathbb{Z}}}
\newcommand*{\R}{{\mathbb{R}}}

\let\eps\epsilon
\let\phi\varphi

\DeclareMathOperator*{\pr}{\mathbb{P}}

\DeclareMathOperator*{\E}{\mathbb{E}}

\DeclareMathOperator{\unif}{Unif}

\DeclarePairedDelimiter{\norm}{\|}{\|}

\DeclareMathOperator{\poly}{poly}

\DeclareMathOperator{\sign}{\mathsf{sign}}

\DeclareMathOperator{\ind}{\mathbf{1}}

\newcommand{\opt}{\mathsf{opt}}

\newcommand{\cube}[1]{\{\pm 1\}^{#1}}

\newcommand{\ignore}[1]{}

\newcommand*{\vv}{\mathbf{v}}

\newcommand*{\x}{\mathbf{x}}

\newcommand*{\Dgeneric}{\D}

\newcommand{\nats}{\mathbb{N}}

\newcommand{\Gauss}{\mathcal{N}}

\newcommand{\e}{\mathbf{e}}

\newcommand{\angleparam}{\theta}
\newcommand{\coefficientvector}{\alpha}

\newcommand{\Oracles}{\mathcal{O}}
\newcommand{\Oracle}{\mathsf{EX}}
\newcommand{\Massart}{\Oracle^{\mathsf{Massart}}}
\newcommand{\Halfspaces}{\H_{\mathsf{hs}}}
\newcommand{\RCN}{\Oracle^{\mathsf{RCN}}}

\newcommand{\accept}{\mathsf{Accept}}
\newcommand{\reject}{\mathsf{Reject}}

\newcommand{\Sunlabeled}{S}

%%%%%

%

%

%

%

%

\global\long\def\poly{\mathrm{poly}}%

\global\long\def\R{\mathbb{R}}%

\global\long\def\Z{\mathbb{Z}}%

\global\long\def\indicator{\mathbf{1}}%

\global\long\def\norm#1{\left\Vert #1\right\Vert }%

\global\long\def\abs#1{\left|#1\right|}%

\global\long\def\d{\,d}%

\global\long\def\vect#1{\mathbf{#1}}%

\global\long\def\N{\mathcal{N}}%

\global\long\def\dpairs{\mathcal{D_{\text{pairs}}}}%

\global\long\def\Tspectral{\mathcal{T}_{\text{spectral}}}%

\global\long\def\Tdisagree{\mathcal{T}_{\text{disagreement}}}%

\begin{document}

\maketitle

\begin{abstract}%
  We pose a fundamental question in computational learning theory: \textit{can we efficiently test whether a training set satisfies the assumptions of a given noise model?} This question has remained unaddressed despite decades of research on learning in the presence of noise. 
  In this work, we show that this task is tractable and present the first efficient algorithm to test various noise assumptions on the training data. 
  
 To model this question, we extend the recently proposed testable learning framework of Rubinfeld and Vasilyan \cite{rubinfeld2022testing} and require a learner to run an associated test that satisfies the following two conditions: (1) whenever the test accepts, the learner outputs a classifier along with a \textit{certificate of optimality}, and (2) the test must pass for any dataset drawn according to a specified modeling assumption on both the marginal distribution and the noise model.  We then consider the problem of learning halfspaces over Gaussian marginals with Massart noise (where each label can be flipped with probability less than $1/2$ depending on the input features), and give a fully-polynomial time testable learning algorithm. 
 
We also show a separation between the classical setting of learning in the presence of structured noise and testable learning. In fact, for the simple case of random classification noise (where each label is flipped with fixed probability $\eta = 1/2$), we show that testable learning requires super-polynomial time while classical learning is trivial.

\end{abstract}
\thispagestyle{empty}

\newpage
\setcounter{page}{1}

\section{Introduction}

Developing efficient algorithms for learning in the presence of noise is one of the most fundamental problems in machine learning with a long line of celebrated research.  Assumptions on the noise model itself vary greatly.  For example, the well-studied random classification noise model (RCN) assumes that the label corruption process is independent across examples, whereas malicious noise models allow a fraction of the (joint) data-generating distribution to be changed adversarially.  Understanding the computational landscape of learning with respect to different noise models remains a challenging open problem, serving as the central focus of numerous works in the theory of supervised learning \cite{blum1998polynomial, awasthi2015efficient,awasthi2016learning,yan2017revisiting,zhang2017hitting,mangoubi2019nonconvex,diakonikolas2020learning, diakonikolas2020learning, diakonikolas2022learning_general,diakonikolas2018learning, BSHOUTY2002255, diakonikolas2024near} and unsupervised learning \cite{unsupervised_1,unsupervised_2,unsupervised_3,unsupervised_4,unsupervised_5,unsupervised_6,unsupervised_7,unsupervised_8,unsupervised_9,unsupervised_10,unsupervised_11,unsupervised_12}.

In this paper, we address for the first time whether it is possible to efficiently test if the assumptions of a specific noise model hold for a given training set. There are two key reasons for developing such a test. First, without verifying the assumptions of the noise model, we cannot guarantee that our resulting hypothesis achieves the optimal error rate. Second, it is essential to select the learning algorithm best suited to the noise properties of the training set. Specifically, highly structured noise models often admit faster algorithms, and we should choose these algorithms whenever possible.

We use the recently introduced {\em testable learning} \cite{rubinfeld2022testing} framework to model these questions.  In this framework, a learner first runs a test on the training set.  Whenever the test accepts, the learner outputs a classifier along with a proof that the classifier has near-optimal error.  Furthermore, the test must accept with high probability whenever the training set is drawn from a distribution satisfying some specified set of modeling assumptions.  If the test rejects, the learner recognizes that one of the modeling assumptions has failed and will therefore refrain from outputting a classifier.  Here, our modeling assumptions will include both the structure of the noise model and the structure of the marginal distribution from which the data is generated.

More concretely, we will consider the problem of learning halfspaces under Gaussian marginals with respect to {\em Massart} noise, an extensively studied problem where an adversary flips binary labels independently with probability at most $1/2$ (the probability of flipping can vary across instances). The goal is to find a halfspace 
$\sign(\vv \cdot \x)$ with near-optimal misclassification error rate $\opt+\eps$, where $\opt$ is the best misclassification error rate achievable by a halfspace. For this problem, a long line of work~\cite{awasthi2015efficient,awasthi2016learning,yan2017revisiting,zhang2017hitting,mangoubi2019nonconvex,diakonikolas2020learning} resulted in the algorithm 
of Diakonikolas et al. \cite{diakonikolas2020learning}  
that runs in time $\poly(d/\eps)$ and achieves the optimal error rate.
In contrast, the worst-case-noise version of this problem (i.e. agnostic learning or, equivalently, learning with adversarial label noise) is believed to require exponential time in the accuracy parameter, even with respect to Gaussian marginals \cite{kalai2008agnostically,diakonikolas2021agnostic,goel2020statistical}.

In this work, we give a testable learning algorithm for halfspaces that runs in time $\poly(d/\eps)$ and \emph{certifies the optimality} of its output hypothesis whenever it accepts. 
 Additionally, the algorithm is guaranteed to accept (with high probability) and output a classifier if the marginal distribution is Gaussian and the noise satisfies the Massart condition.

\subsection{Our Results}

\noindent\textbf{Noise Model.} We focus on the class of i.i.d. oracles where the marginal distribution on $\R^d$ is the standard Gaussian and the labels are generated by an origin-centered halfspace with Massart noise, as defined below.

\begin{definition}[Massart Noise Oracle]\label{definition:massart}
    Let $f :\R^d \to \cube{}$ be a concept, let $\eta:\R^d\to[0,1/2]$ and let $\D$ be a distribution over $\R^d$. The oracle $\Massart_{\D,f,\eta}$ receives $m\in \nats$ and returns $m$ i.i.d. examples of the form $(\x,y)\in \R^d\times\cube{}$, where $\x\sim \D$ and $y = \xi\cdot f(\x)$, with $\xi = 1$ w.p. $1-\eta(\x)$ and $\xi = -1$ w.p. $\eta(\x)$. The quantity $\sup_{\x\in \R^d} \eta(\x)\in [0,1/2]$ is called the noise rate.  
\end{definition}

Formally, we consider the oracle class $\Massart_{\Gauss, \Halfspaces, \eta_0} = \{\Massart_{\Gauss, f, \eta}: f\in \Halfspaces, \sup_{\x\in\R^d}\eta(\x) \le \eta_0\}$, where $\Gauss$ is the standard Gaussian distribution in $d$ dimensions and $\Halfspaces$ is the class of origin-centered halfspaces over $\R^d$, which is formally defined as follows.

\begin{definition}[Origin-Centered Halfspaces]\label{definition:origin-centered-halfspaces}
    We denote with $\Halfspaces$ the class of origin-centered halfspaces over $\R^d$, i.e., the class of functions $f:\R^d\to\cube{}$ of the form $f(\x) = \sign(\vv\cdot \x)$ for some $\vv\in\S^{d-1}$, where $\sign(t) = 1$ if $t\ge 0$ and otherwise $\sign(t) = -1$.
\end{definition}

\noindent\textbf{Learning Setting.} Our results work in the following extension of testable learning \cite{rubinfeld2022testing}.

\begin{definition}[Testable Learning, extension of Definition 4 in \cite{rubinfeld2022testing}]\label{definition:testable-noise}
    Let $\H \subseteq \{\R^d \to \{\pm 1\}\}$ be a concept class, $\Oracles$ a class of (randomized) example oracles and $m:(0,1)\times(0,1)\to\nats$. The tester-learner receives $\eps,\delta\in(0,1)$ and a dataset $\bar{S}$ consisting of i.i.d. points from some distribution $\D_{\x,y}$ over $\R^d\times \cube{}$ and then either outputs $\reject$ or $(\accept,h)$ for some $h:\R^d\to \cube{}$, satisfying the following. 
    \begin{enumerate}
        \item (Soundness). The following event happens with probability at least $1-\delta$.
        \[
            \text{ If the algorithm accepts then }\pr_{(\x,y)\sim \D_{\x,y}}[y\neq h(\x)] \le \opt + \eps,\text{ where }\opt = \min_{f\in \H} \pr_{(\x,y)\sim \D_{\x,y}}[y\neq f(\x)]
        \]
        \item (Completeness). If $\bar S$ is generated by $\Oracle(m')$, for some i.i.d. oracle $\Oracle \in \Oracles$ and $m'\ge m(\eps,\delta)$, then the algorithm accepts with probability at least $1-\delta$.
    \end{enumerate}
\end{definition}

The difference between \Cref{definition:testable-noise} and the definition of \cite{rubinfeld2022testing} is that the completeness criterion does not only concern the marginal distribution on $\R^d$, but the joint distribution over $\R^d\times \cube{}$. The choice of the oracle class $\Oracles$ encapsulates all of the modeling assumptions under which our algorithm should accept (both on the marginal distribution on $\R^d$, as well as on the labels). Note that the probability of success can be amplified through repetition (see \cite{rubinfeld2022testing}), so it suffices to solve the problem for $\delta = 1/3$. Our main results and their relation to prior work are summarized in \Cref{table:sq-complexity}.

\begin{table*}[h]\begin{center}
\begin{tabular}{l c c} 
 \toprule
 \textbf{Noise Model} &  \textbf{Classical Setting} & \textbf{Testable Setting} \\ \midrule
  Massart, $\eta_0 = 1/2 - c$ & $\poly(d, 1/\eps)$ \cite{diakonikolas2020learning} & $\poly(d,1/\eps)$ {\color{orange}[\Cref{theorem:main-result}]} \\ \midrule
  Strong Massart, $\eta_0 = \frac{1}{2}$ \begin{tabular}{c}  (Upper) \\ (Lower) \end{tabular}  &  \begin{tabular}{c}$d^{O(\log(1/\eps))} 2^{\poly(1/\eps)}$ \cite{diakonikolas2022learning_general} \\$d^{\Omega(\log(1/\eps))}$ \cite{diakonikolas2022learning_general} \end{tabular} & \begin{tabular}{c}  $d^{\tilde{O}(1/\eps^2)}$ \cite{rubinfeld2022testing,gollakota2022moment}  \\ $d^{\Omega(1/\eps^2)}$ {\color{orange}[\Cref{theorem:sq-lower-bound}]} \end{tabular} 
  \\ \midrule
 Adversarial \hspace{4.8em}\begin{tabular}{c}  (Upper) \\ (Lower) \end{tabular}& \begin{tabular}{c}  $d^{\tilde{O}(1/\eps^2)}$ \cite{kalai2008agnostically}  \\ $d^{\Omega(1/\eps^2)}$ \cite{diakonikolas2021optimality} \end{tabular} & \begin{tabular}{c}  $d^{\tilde{O}(1/\eps^2)}$ \cite{rubinfeld2022testing,gollakota2022moment}  \\ $d^{\Omega(1/\eps^2)}$ (implied) \end{tabular} \\
 \bottomrule
\end{tabular}
\end{center}
\caption{Runtime upper and lower bounds (in the Statistical Query model) for learning the class of origin-centered halfspaces $\Halfspaces$ over the standard Gaussian distribution with respect to different noise assumptions. 
}
\label{table:sq-complexity}
\end{table*}

\noindent\textbf{Upper Bound.} In \Cref{theorem:main-result}, we show that there is a polynomial-time tester-learner for the class $\Halfspaces$ with respect to $\Massart_{\Gauss,\Halfspaces,\eta_0}$ for any $\eta_0\le 1/2-c$, where $c$ is any positive constant. Moreover, whenever our algorithm accepts, it is guaranteed to output the optimal halfspace with respect to the input dataset $\bar S$, even if $\bar S$ is not generated from i.i.d. examples and can, therefore, be completely arbitrary. Given the upper bounds of \Cref{table:sq-complexity} our algorithm can be used as a first step before applying the more powerful (but also more expensive) tester-learner of \cite{rubinfeld2022testing,gollakota2022moment}. If our algorithm accepts, then we do not need to run the more expensive algorithm. In other words, our results highlight that testable learning can be used for algorithm selection for problems where different assumptions motivate different algorithmic approaches.

\vspace{.5em}\noindent\textbf{Lower Bounds.} Our upper bound holds when the noise rate is bounded away below $1/2$. We show that this is necessary: in the high-noise regime ($\eta_0 = 1/2$), the best known lower bounds for learning under adversarial label noise also hold in the testable setting, with respect to random classification noise of rate $1/2$ (\Cref{definition:rcn}), which is a special case of Massart noise. We give both cryptographic lower bounds (\Cref{theorem:crypto-hardness}) assuming subexponential hardness on the problem of learning with errors (LWE), as well as statistical query lower bounds (\Cref{theorem:sq-lower-bound}).
Our lower bounds are inherited from lower bounds from the literature of agnostic learning \cite{diakonikolas2021optimality,tiegel2023hardness,diakonikolas2023near} (combined with \cref{observation:distinguishing-via-testable-learning}). Our testable learning model highlights an underappreciated aspect of these agnostic learning lower bounds, namely, that the hard instances are in fact indistinguishable from completely random instances (i.e., random classification noise of rate $1/2$).

Our results imply a separation between the classical and testable settings in the high-noise regime ($\eta_0 = 1/2$, see second row of \Cref{table:sq-complexity}), demonstrating that the complexity of testable learning displays a sharper transition with respect to varying noise models compared to classical learning. For the of RCN at noise rate $1/2$ case, the separation is even stronger, since learning is trivial in the classical setting.

\subsection{Our Techniques}
The techniques we employ in this work are significantly more sophisticated than recently developed tools from testable learning.  In fact, it is not even clear that techniques from testable learning should apply, as assumptions on the marginal distribution are quite different from assumptions on the noise model.  Concretely, we depart from prior work in testable learning \cite{gollakota2023efficient,gollakota2023tester} where the testers are designed to certify specific properties of a particular learning algorithm.  Instead, here we obtain a ``black-box" result that can take {\em any} learner that is guaranteed to output a near-optimal halfspace in the Massart setting and certify optimality properties of the learner's output hypothesis. To do this, we decompose the error of the candidate output in terms of quantities for which we can provide certifiable bounds by developing appropriate testers (see \Cref{section:poly-time} for more details on the decomposition). In particular, we provide a disagreement tester with significantly sharper guarantees compared to the one developed for standard testable learning \cite{gollakota2023tester}, and a spectral tester that combines and expands ideas from \cite{gollakota2023tester} as well as recent work on tolerant testable learning \cite{goel2024tolerant}. The main technical tool we develop to provide these improved guarantees is a notion of families of sandwiching approximators with respect to partitions of $\R^d$.

An outline of the proof of our main result (\Cref{theorem:main-result}) is provided in \Cref{section:poly-time}. As a warm-up, in \Cref{section:poly-time-rcn}, we consider the special case of random classification noise, whose analysis is simpler. We complete the proof sketch of our main result in \Cref{section:poly-time-massart}. In the following, we give an overview of the disagreement and the spectral testers.

\vspace{.5em}\noindent\textbf{Disagreement tester and sandwiching polynomials.} Let $\vv$ be a unit vector and $S$ be a dataset of size $\poly(d/\epsilon)$. If $S$ consists of Gaussian data-points, then for every unit vector $\vv'$ (w.h.p. over $S$)
\[
\pr_{\x \in S}
\left[
\sign(\vv\cdot\x)\neq \sign(\vv'\cdot\x)\right] =\measuredangle(\vv,\vv')/\pi\pm \eps
.
\]
Suppose, given $S$ and $\vv$, one would like to certify that this property approximately holds for every $\vv'$. 
The method of exhaustive search - i.e. checking this property for different candidate vectors $\vv'$ - can be shown to require at least $2^{\Omega(d)}$ time.
Using a moment-based approach, we show how to improve this run-time exponentially. In particular, in time $\poly(d,1/\eps)$, we can certify that for all $\vv'\in\S^{d-1}$
\begin{equation}
\label{eq: angle condition}
\pr_{\x \in S}
\left[
\sign(\vv\cdot\x)\neq \sign(\vv'\cdot\x)\right] =(1\pm 0.01)\measuredangle(\vv,\vv')/ \pi\pm \eps
.
\end{equation}
Moreover, whenever $S$ is Gaussian, our tests are guaranteed to pass (\Cref{theorem:disagreement-tester-up}). Note that \cite{gollakota2023efficient,gollakota2023tester,diakonikolas2023efficient} provided disagreement testers that certified one-sided bounds and suffered constant multiplicative error factors\footnote{i.e. certified that $\pr_{\x \in S}
\left[
\sign(\vv\cdot\x)\neq \sign(\vv'\cdot\x)\right] \leq O(\measuredangle(\vv,\vv'))+ \eps$.}, while here our disagreement testers certify both upper and lower bounds on the disagreement probability, with a small and controllable multiplicative error factor.

{As mentioned earlier, directly checking the disagreement for each candidate vector $\vv'$ in a Euclidean cover of the sphere $\S^{d-1}$ does not work, since their number is exponential to the dimension $d$.}
Instead, our tester discretizes {$\R^d$} into buckets corresponding to $\vv\cdot \x\in [i\eps, (i+1)\eps]$ for varying $i$ (\Cref{figure:regions}) and checks for any constant-degree monomial $m$ that
\begin{equation}
\label{eq: band tester}
\E_{\vect x \sim S}[m(\vect x)\cdot\indicator_{i \epsilon \leq\vect x\cdot\vect v\leq (i+1) \epsilon}]
\approx
\E_{\vect x\sim\N(0, I_{d})}[m(\vect x)\cdot\indicator_{i \epsilon \leq\vect x\cdot\vect v\leq (i+1)\epsilon}]
.
\end{equation}
We show that passing this test for constant-degree $m$ is sufficient for our purposes (\Cref{lemma:chow-matching-main}). The previous work \cite{gollakota2023efficient,gollakota2023tester,diakonikolas2023efficient} considered only tests involving monomials $m$ of degree at most $4$, and (as explained earlier) achieved bounds far weaker than \Cref{eq: angle condition}.
A key ingredient to our improved testers is extending the notion of sandwiching polynomials of \cite{gollakota2022moment} to much more general piecewise-polynomial functions (see \Cref{appendix:disagreement-tester}).

\vspace{.5em}\noindent\textbf{Spectral tester and monotonicity under removal.} The disagreement tester is only guaranteed to accept when the input $S$ is drawn i.i.d. from the standard Gaussian distribution. However, in our analysis it is important to have a tester that will accept even if given a set $S'$ which is a subset of a Gaussian sample. We call this property \emph{monotonicity under removal} and its importance is related to the fact that in the Massart noise model, the labels are not flipped independently of the corresponding features, but the probability of receiving a flipped label can adversarially depend on $\x$. Note that tester in \Cref{eq: band tester} is not monotone under removal.

To obtain a tester for the disagreement region that is monotone under removal (\Cref{theorem:spectral-tester-up}), we augment our Disagreement Tester using ideas from the recent work by \cite{goel2024tolerant} on tolerant testable learning (see \Cref{appendix:spectral-tester}). In particular, instead of checking \Cref{eq: band tester}, our Spectral Tester checks that for every constant-degree polynomial $p$ we have
\begin{equation}
\E_{\vect x \sim S}[p(\vect x)^2\cdot\indicator_{i \epsilon \leq\vect x\cdot\vect v\leq (i+1) \epsilon}]
\lesssim
\E_{\vect x\sim\N(0, I_{d})}[p(\vect x)^2\cdot\indicator_{i \epsilon \leq\vect x\cdot\vect v\leq (i+1)\epsilon}],
\end{equation}
which can be verified efficiently by computing the spectrum of an appropriate matrix.
The main difference between our spectral tester and the one in \cite{goel2024tolerant} is that ours partitions $\R^d$ into a number of strips and performs a test for each of them, while the one by \cite{goel2024tolerant} runs the same test on the whole $\R^d$ iteratively, each time removing a number of points from the input set. See \Cref{algorithm:spectral-tester} for the full algorithm description. As in the case of the Disagreement Tester, the analysis again leverages the method of piecewise-polynomial sandwiching functions introduced in this work.

\subsection{Related Work}
\noindent\textbf{Learning with label noise.}
Learning of halfspaces under label noise has been the topic of a large number of works. Perhaps the most well-studied noise model is the framework of \emph{agnostic learning} which corresponds to adversarial (i.e. worst-case) labels. In case of halfspaces, the literature exhibits a tradeoff between run-time and the classification error achievable:
\begin{itemize}
    \item In time $d^{\tilde{O}(1/\epsilon^2)}$ one can find a hypothesis with accuracy $\opt+\epsilon$ under Gaussian data distribution \cite{kalai2008agnostically,diakonikolas2010bounded}.  See also \cite{diakonikolas2021agnostic} for a proper learning algorithm. An algorithm with a run-time $d^{O(1/\epsilon^{2-\Omega(1)})}$ (and let alone a polynomial run-time) is believed to be impossible due to statistical query lower bounds \cite{goel2020statistical,diakonikolas2020near,diakonikolas2020near}, as well as recent cryptographic reductions from lattice problems \cite{diakonikolas2023near, tiegel2023hardness}. This works utilize reductions to the continuous LWE problem \cite{bruna2021continuous}, shown in \cite{gupte2022continuous} to be harder than the LWE problem widely used in lattice-based cryptography (a quantum reduction was given in \cite{bruna2021continuous}).  
    \item A worse error bound of $O(\opt)+\epsilon$ can be obtained in time $\poly(d/\epsilon)$ \cite{awasthi2017power}. Despite various refinements \cite{daniely2015ptas,diakonikolas2020non,diakonikolas2021agnostic}, the improvement of the error bound to $\opt+\eps$ is precluded by the aforementioned hardness results.
\end{itemize}
Overall, if one is not allowed to assume anything about data labels, one has to choose between a high run-time of $d^{\tilde{O}(1/\epsilon^2)}$  and or a higher error of $O(\opt)+\epsilon$.

In order to obtain an error bound of $\opt+\eps$ in time $\poly(d/\eps)$ a large body of works focused on moving beyond worst-case models of label noise. In the Random Classification Noise (RCN) model \cite{angluin1988learning} the labels are flipped independently with probability $\eta$.
It was shown in \cite{blum1998polynomial} that in the RCN model halfspaces can be learned up to error $\opt+\eps$ in time $\poly(d/\eps)$. See also  \cite{cohen1997learning, diakonikolas2021forster,diakonikolas2023strongly, diakonikolas2024near}.

The Massart noise model, introduced in \cite{massart2006risk}, is more general than the RCN model and allows the noise rate $\eta(\x)$ to differ across different points $\x$ in space $\R^d$, as long as it is at most some rate $\eta_0$. First studied in \cite{awasthi2015efficient}, learning halfspaces up to error $\opt+\eps$ under Massart noise model has been the focus of a long line of work \cite{awasthi2016learning,mangoubi2019nonconvex,yan2017revisiting,zhang2017hitting,diakonikolas2020learning, zhang2021improved,zhang2020efficient, diakonikolas2022learning_general}. 

We would like to note that intermediate steps in the algorithm \cite{diakonikolas2022learning_general} work by finding what is referred in  \cite{diakonikolas2022learning_general} as sum-of-squares certificates of optimality for certain halfspaces. We would like to emphasize that certificates in the sense of \cite{diakonikolas2022learning_general} have to be sound only \emph{assuming} that the labels satisfy the Massart property. In contrast with this, certificates developed in this work satisfy soundness without making any assumptions on the label distribution (which is the central goal of this work).  

There has also been work on distribution-free learning under Massart noise
\cite{diakonikolas2019distribution, chen2020classification}, which achieves an error bound of $\eta_0+\eps$, but as a result can lead to a much higher error than the information-theoretically optimal bound of $\opt+\eps$.

\vspace{.5em}\noindent\textbf{Testable learning.} The framework of testable learning was introduced in \cite{rubinfeld2022testing} with a focus on developing algorithms in the agnostic learning setting that provide certificates of (approximate) optimality of the obtained hypotheses or detect that a distributional assumption does not hold. Many of the existing agnostic learning results have since been shown to have testable learning algorithms with matching run-times. This has been the case for agnostic learning algorithms with $\opt+\eps$ error guarantee \cite{rubinfeld2022testing,klivans2023testable,goel2024tolerant, slot2024testably}, as well as $O(\opt)+\eps$ error bounds \cite{gollakota2024agnostically,gollakota2023tester,diakonikolas2023efficient,diakonikolas2024testable}.

We note that \cite{gollakota2024agnostically,gollakota2023tester} also give testable learning algorithms in the setting where data labels are \emph{assumed} to be Massart (and the algorithm needs to either output a hypothesis with error $\opt+\eps$ or detect that data distribution is e.g. not Gaussian). We emphasize that the results of \cite{gollakota2024agnostically,gollakota2023tester} do not satisfy soundness when the user is not promised that data labels satisfy the Massart noise condition (which is the central goal of this work).

\vspace{.5em}\noindent\textbf{Testable Learning with Distribution Shift (TDS learning).}
The recently introduced TDS framework \cite{klivans2023testable,klivans2024learning,chandrasekaran2024efficient, goel2024tolerant} considers a setting in which the learning algorithm is given a labeled training dataset and an unlabeled test dataset and aims to either (i) produce accurate labeling for the testing dataset (ii) detect that distribution shift has occurred and the test dataset is not produced from the same data distribution as the training dataset. Although conceptually similar, the work in TDS learning addresses a different assumption made in learning theory. Nevertheless, as we note, the spectral testing technique introduced in \cite{goel2024tolerant} is a crucial technical tool for our results in this work.

\section{Polynomial-Time Tester-Learners}\label{section:poly-time}
We first focus on the simpler RCN noise model, and the \emph{Disagreement Tester} we design to test the RCN noise model. We then proceed to the challenging Massart noise model and describe the \emph{Spectral Tester}, which we design to handle it.

\vspace{.5em}\noindent\textbf{Notation.}
We denote with $\R,\nats,\Z$ the sets of real, natural and integer numbers correspondingly. 
For simplicity, we denote the $d$-dimensional standard Gaussian distribution as $\Gauss_d$ or $\Gauss$ if $d$ is clear by context. For any set $S$, let $\unif(S)$ denote the uniform distribution over $S$. We may also use the notation $\x\sim S$ in place of $\x\sim \unif(S)$. For a set of points in $\R^d$, we denote with $\bar{S}$ the corresponding labeled set over $\R^{d}\times \cube{}$ where the corresponding labels are those in the input of the algorithm unless otherwise specified. For a vector $\x\in \R^d$, we denote with $x_i$ its $i$-th coordinate, i.e., $x_i = \x\cdot \e_i$.
\subsection{Warm-up: Random Classification Noise Oracles}\label{section:poly-time-rcn}

We first restrict our attention to tester-learners that are only guaranteed to accept a narrower class of noise models, corresponding to random classification noise with Gaussian marginal, formally defined as follows. 

\begin{definition}[Random Classification Noise (RCN) Oracle]\label{definition:rcn}
    Let $f :\R^d \to \cube{}$ be a concept, let $\eta_0 \in [0,1/2]$ and let $\D$ be a distribution over $\R^d$. The oracle $\RCN_{\D,f,\eta_0}$ receives $m\in \nats$ and returns $m$ i.i.d. examples of the form $(\x,y)\in \R^d\times\cube{}$, where $\x\sim \D$ and $y = \xi\cdot f(\x)$, where $\xi = 1$ w.p. $1-\eta_0$ and $\xi = -1$ w.p. $\eta_0$. In other words, $\RCN_{\D,f,\eta_0} = \Massart_{\D,f,\eta}$ where 
    $\eta(\x)$ is the constant function with value $\eta_0$. In the special case $\eta_0 = 1/2$, the function $f$ does not influence the output distribution and we denote the corresponding oracle with $\RCN_{\D,1/2}$.
\end{definition}

Informally, for some ground-truth halfspace $f$, the RCN oracle $\RCN_{\D,f,\eta_0}$ outputs an example $\x\sim \D$ whose label is $f(\x)$ with probability $1-\eta_0$ and is flipped with probability $\eta_0$.
We consider the case that $\D = \Gauss_d$, and $\eta_0 \le 1/2 - c$ for some constant $c>0$ and $f\in \Halfspaces$.

\begin{theorem}[Warm-up: RCN]\label{theorem:rcn}
    Let $c\in(0,1/2)$ be any constant and $\eta_0 = 1/2 - c$. Then, there is an algorithm that testably learns the class $\Halfspaces$ with respect to $\RCN_{\Gauss, \Halfspaces, \eta_0} = \{\RCN_{\Gauss, f, \eta_0}: f\in \Halfspaces\}$ with time and sample complexity $\poly(d, 1/\eps) \log(1/\delta)$.
\end{theorem}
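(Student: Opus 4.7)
The algorithm has two phases. \emph{Candidate selection:} compute the empirical Chow vector $\hat\mu := \widehat{\E}_{(\x,y)\in\bar S}[y\x]$ and set $\vv := \hat\mu/\|\hat\mu\|_2$. \emph{Certification:} fix a bucketing parameter $\eps' = \Theta(c\eps)$, run the Disagreement Tester of \Cref{theorem:disagreement-tester-up} on the unlabeled points with respect to $\vv$, and, on each slab $B_i := \{\x : i\eps' \le \vv\cdot\x < (i+1)\eps'\}$ (only polynomially many are non-negligible, since Gaussian tails past $O(\sqrt{\log(1/\eps)})$ contribute $O(\eps)$), verify that $\widehat{\E}[y\mid \x\in B_i]$ matches its RCN-Gaussian target $(1-2\eta_0)\sign(i)$ up to slack $\eps'$ (with a special treatment of the $i=0$ bucket). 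If all checks pass, output $(\accept, h)$ with $h(\x) := \sign(\vv\cdot\x)$; otherwise output $\reject$.

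Completeness is routine concentration. Under an $\RCN_{\Gauss, f^*, \eta_0}$ oracle with ground truth $\vv^*$, the Chow identity gives $\E[y\x] = (1-2\eta_0)\sqrt{2/\pi}\,\vv^*$, so with $\poly(d,1/\eps)\log(1/\delta)$ samples $\hat\mu$ concentrates in $\ell_2$ to this vector and $\angle(\vv,\vv^*) \le O(\eps)$; the Disagreement Tester accepts Gaussian marginals by its own completeness guarantee; and since each non-negligible bucket receives $\poly(d,1/\eps)$ samples, the per-bucket label averages concentrate to their RCN values.

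For soundness, by VC generalization for halfspaces it suffices to certify $\widehat{\error}(\vv) \le \widehat{\error}(\vv') + \eps/2$ for every $\vv' \in \S^{d-1}$ on the empirical distribution. Rewriting via $\ind[y\ne s] = (1-ys)/2$ and using $\sign(\vv'\cdot\x) = -\sign(\vv\cdot\x)$ on $D(\vv,\vv') := \{\x : \sign(\vv\cdot\x)\ne \sign(\vv'\cdot\x)\}$ yields the identity
\[
\widehat{\error}(\vv') - \widehat{\error}(\vv) \;=\; \widehat{\E}\!\left[\ind_{D(\vv,\vv')}(\x)\, y\, \sign(\vv\cdot\x)\right],
\]
so it suffices to lower-bound the right-hand side by $-\eps/2$ uniformly in $\vv'$. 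Decomposing the expectation bucket-wise and invoking the per-bucket label check pins the conditional expectation of $y\,\sign(\vv\cdot\x)$ on $B_i$ to at least $1-2\eta_0-O(\eps) = 2c - O(\eps)$; combining this with the Disagreement Tester's guarantee that $\widehat{\pr}[\ind_{D(\vv,\vv')}\cap B_i]$ matches its Gaussian expectation (via the sandwiching-polynomial construction described in \Cref{appendix:disagreement-tester}) yields the required positive margin.

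The main obstacle, and the step requiring the most care, is the uniform-in-$\vv'$ control of $\widehat{\pr}[\ind_{D(\vv,\vv')}\cap B_i]$ jointly with the label profile on $D(\vv,\vv')\cap B_i$: within each bucket, $\ind_{D(\vv,\vv')}$ is the indicator of a halfspace in the $\vv^\perp$ hyperplane whose orientation depends on $\vv'$, and one must rule out an adversarial skew between the label distribution on $D(\vv,\vv')\cap B_i$ and the bucket-averaged label statistics that were certified. Handling this infinite family of orientations is exactly the role of the Disagreement Tester's sandwiching-polynomial machinery, and combining its guarantees with the per-bucket label certificates is the central technical step of the RCN warm-up, which also foreshadows the Massart proof built on the Spectral Tester.
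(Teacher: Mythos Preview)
Your error identity
\[
\widehat{\error}(\vv') - \widehat{\error}(\vv) \;=\; \widehat{\E}\!\left[\ind_{D(\vv,\vv')}(\x)\, y\, \sign(\vv\cdot\x)\right]
\]
is correct and in fact unwinds to exactly the decomposition the paper uses (their Equations~(2.1)--(2.4)). The Chow-vector candidate step is also fine. The gap is in your certification scheme.

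You certify, for each bucket $B_i$, only the \emph{scalar} $\widehat{\E}[y\mid B_i]$. But the quantity you need to lower-bound is $\widehat{\E}[\ind_{D(\vv,\vv')}\, y\,\sign(\vv\cdot\x)\mid B_i]$, and this does \emph{not} factor as $\widehat{\E}[y\,\sign(\vv\cdot\x)\mid B_i]\cdot\widehat{\pr}[D(\vv,\vv')\mid B_i]$ unless you know $y$ and $\ind_{D(\vv,\vv')}$ are conditionally independent on $B_i$. In the soundness analysis no such independence is available: an adversary can place all the wrong labels inside $B_i$ precisely on the half-slab $\{\vv_\perp\cdot\x \le t_i\}$ for some direction $\vv_\perp$ of its choosing, making the bucket average look perfectly RCN while $\widehat{\E}[\ind_{D(\vv,\vv')}\, y\,\sign(\vv\cdot\x)\mid B_i]$ is as negative as possible for the matching $\vv'$. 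You explicitly flag this (``adversarial skew'') and then assert that the Disagreement Tester's sandwiching machinery handles it --- but it cannot: that tester is run on the \emph{unlabeled} marginal and certifies only $\widehat{\pr}[D(\vv,\vv')\cap B_i]$; it says nothing about how the labels are distributed inside $D(\vv,\vv')\cap B_i$.

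The paper's fix is to run the Disagreement Tester a second time, not on $S$ but on $S_{\mathsf{False}}:=\{\x:(\x,y)\in\bar S,\ y\neq\sign(\vv\cdot\x)\}$. Writing your target as
\[
\widehat{\E}\!\left[\ind_{D}\, y\, \sign(\vv\cdot\x)\right]
= \widehat{\pr}_{S}[D(\vv,\vv')] - 2\,\frac{|S_{\mathsf{False}}|}{|S|}\,\widehat{\pr}_{S_{\mathsf{False}}}[D(\vv,\vv')],
\]
one sees that uniform-in-$\vv'$ control of the second term is exactly a Disagreement-Tester soundness statement \emph{on the label-defined subset $S_{\mathsf{False}}$}. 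Completeness then hinges on the RCN-specific fact that, when the noise is truly RCN, the event $\{y\neq\sign(\vv^*\cdot\x)\}$ is independent of $\x$, so $S_{\mathsf{False}}$ is itself (up to the $\eps'$ slack from $\vv\neq\vv^*$, handled by a thin-margin test) an i.i.d.\ Gaussian sample. Your per-bucket scalar checks are not enough to recover this; you need to move the tester onto the labeled subset.
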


Testing whether the noise is indeed RCN directly is impossible, since it requires estimating $\E[y | \x]$ for all $\x\in\R^d$, but we never see any example twice. Instead, we will need to design more specialized tests that only check the properties of the RCN model that are important for learning halfspaces. Specifically, we show that some key properties of the RCN noise can be certified using what we call the Disagreement Tester (Theorem \ref{theorem:disagreement-tester-up}).
Suppose, first, that when the samples are generated by an oracle $\RCN_{\Gauss, f^*, \eta_0}$, for some $f^*(\x) = \sign(\vv^*\cdot \x)$, then we can \emph{exactly} recover the ground-truth vector $\vv^*\in\S^{d-1}$ by running some algorithm $\A$ (in reality, $\vv^*$ can be recovered only approximately, and we will address this later). 

\vspace{.5em}\noindent\textbf{Relating the output error to optimum error.} Let $\bar S$ be the input set of labeled examples and let $\vv\in\S^{d-1}$ be the output of $\A$ on input $\bar S$. Note that, since $\bar S$ is not necessarily generated by $\RCN_{\Gauss, f^*, \eta_0}$, we do not have any a priori guarantees on $\vv$. We may relate the output error $\pr_{(\x,y)\sim \bar S}[ y \neq \sign(\vv\cdot \x)]$ to the optimum error $\pr_{(\x,y)\sim \bar S}[y \neq \sign(\vv^*\cdot \x)]$, by accounting for the set $\bar S_g$ of points $(\x,y)\in \bar S$ that are labeled correctly by $\vv$ but incorrectly by $\vv^*$, as well as the set $\bar S_b$ of points in $\bar S$ that are labeled incorrectly by $\vv$ but correctly by $\vv^*$. Overall, we have the following
\begin{equation}
    \pr_{(\x,y)\sim \bar S}[y\neq \sign(\vv\cdot \x)] = \pr_{(\x,y)\sim \bar S}[y\neq \sign(\vv^*\cdot \x)] + \frac{|\bar S_b|}{|\bar S|} - \frac{|\bar S_g|}{|\bar S|}\label{equation:error-bound}
\end{equation}
\begin{figure}[t]
    \centering
    \includegraphics[width=0.8\linewidth]{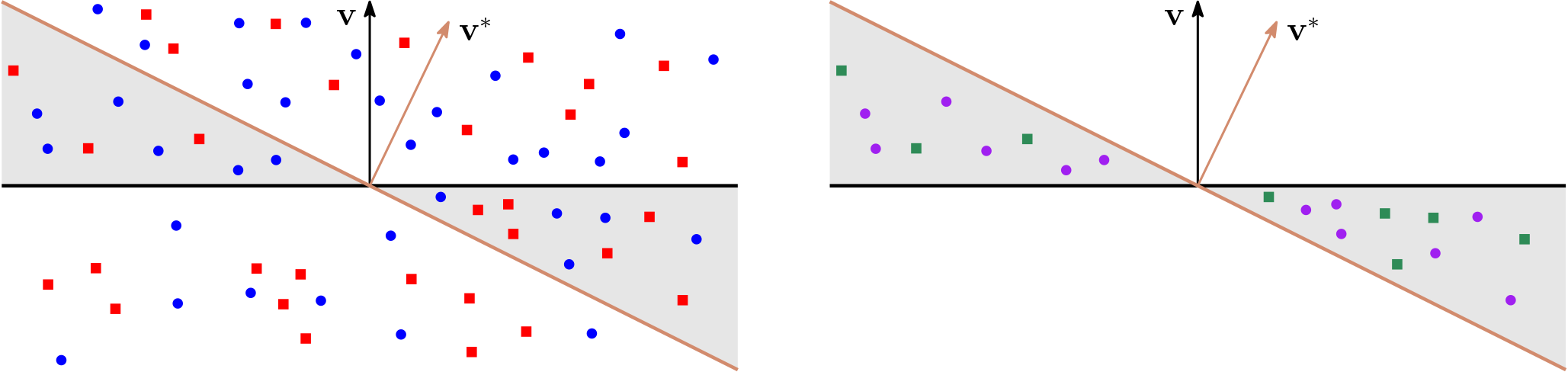}
    \caption{The shaded region is $\{\x\in\R^d: \sign(\vv\cdot \x) \neq \sign(\vv^*\cdot\x)\}$. Left: red square points have label $+1$, blue round points have label $-1$. Right: green square points are in $\bar S_g$ and purple round points are in $\bar S_b$.
    }
    \label{figure:points}
\end{figure}

\vspace{.5em}\noindent\textbf{Towards a testable bound.} We have assumed that if the noise was indeed RCN, then $\vv = \vv^*$. Therefore, in this case, $|\bar S_b| = |\bar S_g| = 0$. However, if the noise assumption is not guaranteed, given $\bar S$, we cannot directly compute the quantities $|\bar S_b|, |\bar S_g|$, as their definition involves the unknown vector $\vv^*$. Nevertheless, we show how to  obtain a certificate that $|\bar S_b|/|\bar S|- |\bar S_g|/|\bar S|$ is at most $O(\eps)$. We first express the ratios above as
\begin{align}
    {|\bar S_b|}/{|\bar S|} &= \pr_{(\x,y)\sim \bar S}[y \neq \sign(\vv\cdot \x) \text{ and } \sign(\vv^*\cdot \x) \neq \sign(\vv\cdot \x)], \label{equation:bad-set}\\
    {|\bar S_g|}/{|\bar S|} &= \pr_{(\x,y)\sim \bar S}[\sign(\vv^*\cdot \x) \neq \sign(\vv\cdot \x)] - {|\bar S_b|}/{|\bar S|}. \label{equation:good-set}
\end{align}

Combining equations \eqref{equation:error-bound}, \eqref{equation:bad-set} and \eqref{equation:good-set}, defining $\bar S_{\mathsf{False}} = \{(\x,y)\in \bar S: y\neq \sign(\vv\cdot \x)\}$ we obtain the following bound:
\begin{equation}
\label{eq: how much error changes}
\frac{|\bar S_b|}{|\bar S|} - \frac{|\bar S_g|}{|\bar S|}
\leq 2 
\frac{|\bar S_{\mathsf{False}}|}{|\bar S|}\underset{(\x,y) \sim \bar S_{\mathsf{False}}}{\pr}[\sign(\vv^*\cdot \x) \neq \sign(\vv\cdot \x) ] - \underset{(\x,y) \sim \bar S}{\pr}\bigr[\sign(\vv^*\cdot \x) \neq \sign(\vv\cdot \x)\bigr]
\end{equation}
The term ${|\bar S_{\mathsf{False}}|}/{|\bar S|}$ 
can be explicitly computed, since we have $\vv$ and $\bar S$ and we can verify whether its value is at most $1/2-c$, as would be the case if the noise was RCN. Otherwise, we may safely reject. Now, we want to obtain certificates that the first term in Equation \ref{eq: how much error changes} can't be too large and the second term can't be too small.

\vspace{.5em}\noindent\textbf{The disagreement tester and how it is applied.} 
Our goal is to certify that both probabilities in Equation \ref{eq: how much error changes} are approximately equal to ${\measuredangle(\vv,\vv^*)}/{\pi}$, which is what we would expect if the example oracle were indeed in $\RCN_{\Gauss,\Halfspaces,\eta_0}$. This is because of the following fact, as well as the fact that even after conditioning on the event $y\neq\sign(\vv^*\cdot \x)$, $\x$ remains Gaussian.
\begin{fact}\label{fact:angle-disagreement}
    Let $\x\sim \Gauss_d$ and $\vv,\vv^*\in\S^{d-1}$. Then $\pr_{\x\sim \Gauss_d}[\sign(\vv^*\cdot \x) \neq \sign(\vv\cdot \x)] = {\measuredangle(\vv,\vv^*)}/{\pi}$.
\end{fact}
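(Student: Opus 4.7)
The plan is to exploit the rotational invariance of the standard Gaussian to reduce to a two-dimensional calculation. Without loss of generality, assume $\vv$ and $\vv^*$ are linearly independent (if $\vv = \vv^*$ or $\vv = -\vv^*$, the probability is $0$ or $1$ and matches $\measuredangle(\vv,\vv^*)/\pi$ directly). Let $V = \spn\{\vv,\vv^*\}$ and let $P:\R^d \to V$ denote the orthogonal projection. Since the events $\{\sign(\vv\cdot\x) \neq \sign(\vv^*\cdot\x)\}$ depend on $\x$ only through the inner products $\vv\cdot\x$ and $\vv^*\cdot\x$, and both inner products only depend on $P\x$ (because $\vv,\vv^* \in V$), the probability in question depends on $\x$ only through $P\x$.

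Now, by the rotational invariance of $\Gauss_d$, the projection $P\x$ is distributed as a standard two-dimensional Gaussian $\Gauss_2$ supported on $V$. Picking an orthonormal basis for $V$ in which $\vv = (1,0)$ and $\vv^* = (\cos\theta, \sin\theta)$ with $\theta = \measuredangle(\vv,\vv^*) \in (0,\pi)$, we reduce to computing
\[
    \pr_{\z\sim \Gauss_2}[\sign(z_1) \neq \sign(z_1 \cos\theta + z_2 \sin\theta)].
\]

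The next step is to evaluate this in polar coordinates. Writing $\z = r(\cos\phi,\sin\phi)$, the density of $\Gauss_2$ factors as a radial term times a uniform density $\frac{1}{2\pi}$ on the angle $\phi\in [0,2\pi)$. The event $\sign(z_1) \neq \sign(z_1 \cos\theta + z_2 \sin\theta)$ is a cone condition depending only on $\phi$: the first halfspace flips sign across the line $\phi \in \{\pi/2, 3\pi/2\}$, while the second flips across the line rotated by $\theta$, namely $\phi\in\{\pi/2+\theta, 3\pi/2+\theta\}$. The disagreement region is therefore the symmetric difference of the two halfplanes, which is the union of two opposite wedges each of angular width $\theta$.

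Hence the total angular measure of the disagreement region is $2\theta$ out of $2\pi$, and integrating the radial Gaussian density against the uniform angular density gives the probability $\frac{2\theta}{2\pi} = \frac{\theta}{\pi} = \frac{\measuredangle(\vv,\vv^*)}{\pi}$, as claimed. There is no genuine obstacle here; the only care needed is handling the degenerate collinear cases and justifying that the projected measure is exactly $\Gauss_2$, both of which follow from rotational invariance.
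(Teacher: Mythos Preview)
Your proof is correct and is the standard argument for this classical identity. The paper does not actually prove this statement; it is stated as a basic fact and used without justification, so there is nothing to compare against beyond noting that your rotational-invariance-plus-polar-coordinates argument is exactly the usual way this fact is established.
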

Recall, however, that we do not make any assumptions on the input examples. Therefore, we would like to certify the guarantee of \Cref{fact:angle-disagreement}. We show that this is possible by developing the following tester.

\begin{theorem}[Disagreement tester, see \Cref{thm: disagreement tester}]\label{theorem:disagreement-tester-up}
    Let $\mu \in (0,1)$ be any constant. \Cref{algorithm:disagreement-tester} receives $\eps,\delta\in(0,1)$, $\vv\in \S^{d-1}$ and a set $S$ of points in $\R^d$, runs in time $\poly(d,1/\eps,|S|)$ and then either outputs $\reject$ or $\accept$, satisfying the following specifications.
    \begin{enumerate}
        \item (Soundness) If the algorithm accepts, then the following is true for any $\vv'\in\S^{d-1}$
        \[
            (1-\mu) {\measuredangle(\vv,\vv')}/{\pi} - \eps \le \pr_{\x\sim S}[\sign(\vv\cdot \x)\neq \sign(\vv'\cdot \x)] \le (1+\mu) {\measuredangle(\vv,\vv')}/{\pi} + \eps
        \]
        \item (Completeness) If $S$ consists of at least $(\frac{Cd}{\eps \delta})^C$ i.i.d. examples from $\Gauss_d$, where $C\ge 1$ is some sufficiently large constant depending on $\mu$, then the algorithm accepts with probability at least $1-\delta$.
    \end{enumerate}
\end{theorem}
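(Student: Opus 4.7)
The plan is to build a bucket-based tester that, for thin slabs of $\R^d$ orthogonal to $\vv$, verifies the empirical low-degree moments of the orthogonal coordinates match their Gaussian conditional counterparts; sandwiching polynomials for halfspaces then convert these moment checks into a uniform disagreement estimate for every $\vv'\in\S^{d-1}$.

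First I would partition $\R^d$ into slabs $R_i = \{\x : \vv\cdot\x \in [i\eps_1, (i+1)\eps_1)\}$ for integers $i$ with $|i|\eps_1 \le T$, choosing $\eps_1 = \Theta(\eps)$ and $T = \Theta(\sqrt{\log(1/\eps)})$; the Gaussian mass of the complement is at most $\eps$ and is absorbed into the additive error. The tester then checks, for each slab, that (a) the empirical slab mass $|S \cap R_i|/|S|$ is close to $\pr_{\z\sim\Gauss}[\z \in R_i]$, and (b) for every monomial of degree at most $k = k(\mu)$ in the orthogonal projection $P_{\vv^\perp}\x$, the empirical conditional expectation over $S \cap R_i$ is close to the Gaussian conditional expectation. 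There are $O(T/\eps_1)$ slabs and $d^{O(k)}$ monomials per slab, so the runtime is $\poly(d, 1/\eps, |S|)$.

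For soundness, fix any $\vv' \in \S^{d-1}$ and decompose $\vv' = \cos\theta\,\vv + \sin\theta\,\mathbf{u}$ with $\mathbf{u} \in \vv^\perp \cap \S^{d-1}$ and $\theta = \measuredangle(\vv,\vv')$. Setting $t = \vv\cdot\x$ and $s = \mathbf{u}\cdot\x$, the disagreement event $\sign(\vv\cdot\x) \neq \sign(\vv'\cdot\x)$ is equivalent to $t(\cos\theta\,t + \sin\theta\,s) < 0$; restricted to slab $R_i$, where $t$ varies by at most $\eps_1$, this reduces (up to slab-boundary slack) to a one-dimensional halfspace in $s$ with threshold $\tau_i \approx -i\eps_1 \cot\theta$. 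Invoking standard degree-$k$ sandwich polynomials $p_L \le \ind[s > \tau_i] \le p_U$ under the univariate Gaussian and using condition (b), the inequality transfers to $S$, yielding the bucketwise empirical disagreement within the Gaussian conditional one, up to the sandwich gap plus moment tolerance. Combined with (a) and the identity $\sum_i \pr_\Gauss[\text{disagree and } \x \in R_i] = \theta/\pi - o_\eps(1)$, summing across slabs gives the desired $(1\pm\mu)\theta/\pi \pm \eps$ bound, provided $k$ is chosen large enough as a function of $\mu$.

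Completeness is by standard concentration: with $|S| \ge (Cd/(\eps\delta))^C$ iid Gaussian points and $C$ sufficiently large depending on $\mu$, each slab contains $\Omega(|S|\eps_1)$ points and each tested low-degree monomial (bounded after a soft truncation at radius $\Theta(\sqrt{\log|S|})$) concentrates to within inverse-polynomial tolerance by Hoeffding; a union bound over all $O((T/\eps_1) d^{k})$ tested quantities then succeeds with probability at least $1-\delta$. The main obstacle is obtaining the \emph{multiplicative} $(1\pm\mu)$ factor rather than a purely additive $\eps$: for small $\theta$, the disagreement mass concentrates in slabs near the boundary $\vv\cdot\x = 0$, and one must ensure that far slabs contribute negligibly in both the empirical and Gaussian measures, so that the per-slab sandwich gaps, summed and weighted by slab Gaussian mass, scale with $\theta/\pi$ rather than a global constant.
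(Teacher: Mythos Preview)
Your overall architecture matches the paper's: partition into slabs orthogonal to $\vv$, test low-degree moments per slab, and use one-dimensional sandwiching polynomials for the halfspace in $s=\mathbf u\cdot\x$ with threshold $\tau_i\approx -i\eps\cot\theta$. The structure is right, and you correctly identify the crux: getting a \emph{multiplicative} $(1\pm\mu)$ error with degree $k=k(\mu)$ independent of $\eps$ and $\theta$.

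But you do not actually resolve that crux. With ``standard'' sandwiching polynomials the $L_1$ gap in each slab is $O(\log^3 k/\sqrt{k})$ \emph{uniformly in the threshold}, so the weighted sum over slabs is $\sum_j \Pr_\Gauss[R_j]\cdot O(\log^3 k/\sqrt{k})=O(\log^3 k/\sqrt{k})$, a quantity independent of $\theta$. To make this $\le \mu\theta/\pi$ for all $\theta$ you would need $k$ to grow with $1/\theta$ (hence with $1/\eps$), destroying the polynomial runtime. Your sentence ``far slabs contribute negligibly in both the empirical and Gaussian measures'' does not fix this: slabs at distance $\Theta(1)$ from the hyperplane still have mass $\Theta(\eps)$, and while the \emph{Gaussian} disagreement inside them is tiny, the sandwich gap you invoke is not.

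The paper closes this gap with a small but essential strengthening (Corollary~\ref{corr: slightly stengthened 1-d sandwiching}): for a threshold $t$ with $|t|$ large, replace the DGJSV polynomials by the trivial degree-$2$ pair $0\le \ind[z\le t]\le (z/t)^2$ (or $1-(z/t)^2\le \ind[z\le t]\le 1$), giving $L_1$ gap $O(1/t^2)$. Thus the per-slab gap is $O\bigl(\min(\log^3 k/\sqrt{k},\,(\tan\theta/(j\eps))^2)\bigr)$. Summing $\eps\cdot\min(\cdot,\cdot)$ over $j$ and balancing the two regimes (the paper's Equation~\eqref{eq: replacing the sum with an integral}) yields a total of $O(\theta\cdot \log^{1.5}k/k^{1/4})$, which is $\le \mu\theta$ once $k=O(1/\mu^4)$. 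This threshold-dependent decay of the sandwich error is the missing ingredient in your plan; once you add it, the rest of your outline goes through essentially as in the paper.
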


We choose $\mu = c$ and run the tester above on the datapoints in $\bar S$ and $\bar S_{\mathsf{False}} = \{(\x,y)\in \bar S: y\neq \sign(\vv\cdot \x)\}$. If the tester accepts, then (using Equation \ref{eq: how much error changes}) the excess error $|\bar S_b|/|\bar S|- |\bar S_g|/|\bar S|$ 
is at most $((1-2c)(1+c)-(1-c)) {\measuredangle(\vv,\vv^*)}/{\pi} + 2\eps$ which, in turn, is upper-bounded by $ 2\eps$, since $(1-2c)(1+c)-(1-c) = -2c^2 < 0$.

\vspace{.5em}\noindent\textbf{Designing the disagreement tester.} Our disagreement tester builds on ideas from prior work on testable agnostic learning by \cite{gollakota2023efficient}. In particular, \cite{gollakota2023efficient} show that when the angle between the input vector $\vv$ and some unknown vector $\vv'$ is $\eps$, then one can give a testable bound of $O(\eps)$ on the quantity $\pr_{\x\sim S}[\sign(\vv\cdot \x)\neq \sign(\vv'\cdot \x)]$ by running some efficient tester (Proposition D.1 in \cite{gollakota2023efficient}). To achieve this, the region $\{\x\in\R^d: \sign(\vv\cdot \x)\neq \sign(\vv'\cdot \x)\}$ is covered by a disjoint union of simple regions whose masses can be testably upper bounded. In order to bound the mass of the simple regions, it is crucial to use the fact that $\vv$ is known. 

Here, our approach needs to be more careful, since we (1) require both upper and lower bounds on the quantity $\pr_{\x\sim S}[\sign(\vv\cdot \x)\neq \sign(\vv'\cdot \x)]$, (2) we do not have a specific target threshold for the angle $\measuredangle(\vv,\vv')$, but we need to provide testable bounds that involve $\measuredangle(\vv,\vv')$ as a free parameter and (3) we can only tolerate a small constant multiplicative error factor $(1\pm\mu)$. To obtain this improvement, we combine the approach of \cite{gollakota2023efficient} with the notion of sandwiching polynomial approximators. Sandwiching polynomial approximators are also used to design testable learning algorithms (see \cite{gollakota2022moment}), but we use them here in a more specialized way, by allowing the sandwiching function to be piecewise-polynomial.

\begin{figure}[t]
    \centering
    \includegraphics[width=0.6\linewidth]{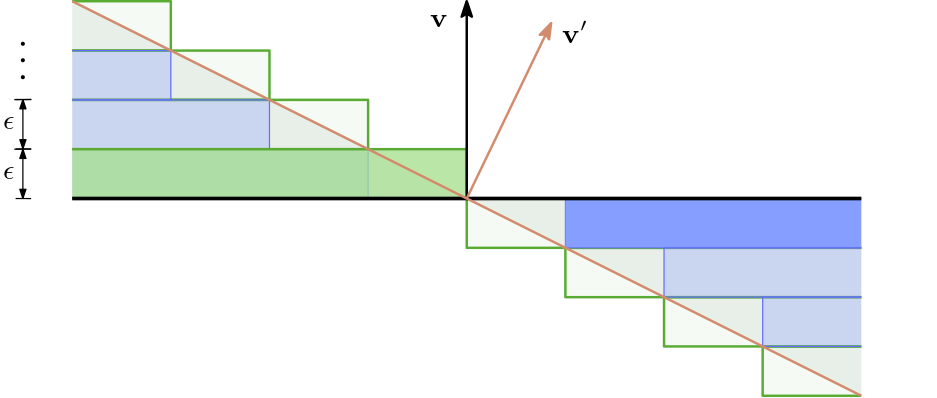}
    \caption{For vectors $\vv,\vv'\in\S^{d-1}$, the region $\{\x\in\R^d: \sign(\vv\cdot \x) \neq \sign(\vv'\cdot\x)\}$ is contained in the union of {\color{Green}green} regions 
    and it contains the union of {\color{Blue}blue} regions. In the diagram we highlight one of the green regions (top left) and one of the blue regions (bottom right).
    }
    \label{figure:regions}
\end{figure}
In particular, we first observe that the region $\{\x\in\R^d: \sign(\vv\cdot \x)\neq \sign(\vv'\cdot \x)\}$ can be approximated from above and from below by the disjoint union of a small number of simple regions (\Cref{figure:regions}). More precisely, if we let $\vv_\perp$ be the unit vector in the direction $\vv'-(\vv'\cdot \vv)\vv$, then we have
\begin{align}
    \pr_{\x\sim S}\Bigr[\vv\cdot \x \ge 0 > \vv'\cdot \x\Bigr] &\le \sum_{i=0}^\infty \pr_{\x\sim S}\Bigr[\vv\cdot \x \in [i\eps, (i+1)\eps], \vv_{\perp}\cdot \x \le -{i\eps}/{\tan(\vv,\vv')}\Bigr] \label{equation:lower-regions} \\
    \pr_{\x\sim S}\Bigr[\vv\cdot \x \ge 0 > \vv'\cdot \x\Bigr] &\ge \sum_{i=0}^\infty \pr_{\x\sim S}\Bigr[\vv\cdot \x \in [i\eps, (i+1)\eps], \vv_{\perp}\cdot \x \le -{(i+1)\eps}/{\tan(\vv,\vv')}\Bigr] \label{equation:upper-regions}
\end{align}
In fact, the number of interesting terms in the summations can be bounded by $O(\eps \log^{1/2}(1/\eps))$, because the remaining terms are testably negligible, due to Gaussian concentration and since we have access to $\vv$.

Each term in the summations of \eqref{equation:lower-regions}, \eqref{equation:upper-regions} is of the form $\E_{\x\sim S}[\mathcal{I}_i(\x)\cdot f_i(\x)]$, where $f_i$ is some unknown halfspace and $\mathcal{I}_i(\x) = \ind\{\vv\cdot\x \in [i\eps,(i+1)\eps]\}$. Since we know $\vv$, $\mathcal{I}_i(\x)$ is a known quantity for all $\x$ in $S$. The quantities $f_i(\x)$ are unknown, but we can effectively substitute them by polynomials, because, under the Gaussian distribution they admit low-degree sandwiching approximators. This allows us to provide a testable bound by matching the low degree Chow parameters of the functions $\mathcal{I}_i(\x)$ under $\unif(S)$ to the corresponding Chow parameters under $\Gauss_d$, due to the fact that polynomials are linear combinations of monomials and the number of low-degree monomials is small enough so that we can test them all. 

Moment matching and Chow matching in particular are known to have applications in testable learning (see, e.g., \cite{gollakota2022moment,rubinfeld2022testing,gollakota2023efficient,klivans2023testable,chandrasekaran2024efficient}), but here we have to use this tool in a careful way. 
More specifically, we prove the following lemma (see \Cref{prop: sandwiching wrt covering implies fooling}) based on a delicate argument that uses the sandwiching approximators of \cite{diakonikolas2010bounded,gopalan2010fooling} (see \Cref{section:sandwiching-1}).

\begin{lemma}[Informal]\label{lemma:chow-matching-main}
    Let $C\ge 1$ be some sufficiently large constant. Suppose that for all $\alpha\in\nats^d$ with $\|\alpha\|_1 \le C/\mu^4$ and for all $i$: $\left|\E_{\x\sim S}\left[\mathcal{I}_i(\x)\cdot \prod_{j\in[d]}x_j^{\alpha_j}\right] - \E_{\x\sim \Gauss}\left[\mathcal{I}_i(\x)\cdot \prod_{j\in[d]}x_j^{\alpha_j}\right]\right| \le \frac{\eps^2 \log(1/\eps)}{Cd^{C^2/\mu^4}}$. Then, we have that $\left|\pr_{\x\sim S}[\vv\cdot \x \ge 0 > \vv'\cdot \x] - \pr_{\x\sim \Gauss}[\vv\cdot \x \ge 0 > \vv'\cdot \x]\right| \le \mu\frac{\measuredangle(\vv,\vv')}{2\pi} + \eps $, for all $\vv'$.
\end{lemma}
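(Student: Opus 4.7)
The plan is to decompose $\pr[\vv\cdot\x\ge 0 > \vv'\cdot\x]$ via the bucket indicators $\mathcal{I}_i$, sandwich each bucket's halfspace contribution by a low-degree polynomial using the Gaussian halfspace $L^1$-sandwichers of \cite{diakonikolas2010bounded,gopalan2010fooling}, and transfer the resulting polynomial expectations from $\Gauss$ to $\unif(S)$ via the Chow-matching hypothesis. First I would truncate the sums in \eqref{equation:lower-regions}--\eqref{equation:upper-regions} to buckets $|i|\le N := \Theta(\sqrt{\log(1/\eps)}/\eps)$. Gaussian tail bounds on $\vv\cdot\x$ kill the tail under $\Gauss$ up to $\eps/3$, and applying the hypothesis with $\alpha = 0$ to the omitted $\mathcal{I}_i$ transfers this bound to $\unif(S)$ up to an additive $O(N\cdot\eps') = o(\eps)$.

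Next, within each retained bucket the disagreement event is sandwiched between halfspace indicators $h_i^\pm(\x) = \ind\{\vv_\perp\cdot\x \le \tau_i^\pm\}$, where $\vv_\perp$ is the unit vector in the direction of $\vv' - (\vv'\cdot\vv)\vv$ and $\tau_i^\pm$ are linear in $i\eps/\tan(\measuredangle(\vv,\vv'))$. The sandwichers of \cite{diakonikolas2010bounded,gopalan2010fooling} at degree $k = O(1/\mu^4)$ yield polynomials $p_i^-\le h_i^- \le h_i^+\le p_i^+$; a careful geometric accounting (using that $\vv_\perp\cdot\x$ conditional on $\mathcal{I}_i$ is standard Gaussian and that $\sum_i\E_\Gauss[\mathcal{I}_i\cdot h_i^+] = \measuredangle(\vv,\vv')/\pi + O(\eps)$) produces aggregate sandwiching slack $\sum_i \E_\Gauss[\mathcal{I}_i(p_i^+ - p_i^-)] \le \mu\cdot\measuredangle(\vv,\vv')/(2\pi) + \eps/3$.

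Then I would expand each $p_i^\pm$ in the monomial basis, writing $\E[\mathcal{I}_i p_i^\pm]$ as a linear combination of at most $d^k = d^{O(1/\mu^4)}$ Chow parameters of the form $\E[\mathcal{I}_i(\x)\prod_j x_j^{\alpha_j}]$ with $\|\alpha\|_1\le k$, each hypothesized to transfer from $\Gauss$ to $\unif(S)$ with error at most $\eps'$. Standard coefficient bounds on the sandwichers together with the setting $\eps' = \eps^2\log(1/\eps)/(Cd^{C^2/\mu^4})$ ensure per-bucket transfer error at most $\eps/(3N)$, summing to total transfer error $\le \eps/3$. Chaining $\E_S[D]\le \E_S[\sum_i\mathcal{I}_i p_i^+] = \E_\Gauss[\sum_i\mathcal{I}_i p_i^+] \pm (\text{transfer}) \le \E_\Gauss[D] + (\text{slack}) \pm (\text{transfer})$, and symmetrically for the lower bound, yields the claimed $|\pr_S[D] - \pr_\Gauss[D]|\le \mu\cdot\measuredangle(\vv,\vv')/(2\pi) + \eps$, where $D(\x) := \ind\{\vv\cdot\x\ge 0 > \vv'\cdot\x\}$.

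The main obstacle is controlling the sandwiching step so that the aggregate $L^1$-slack is \emph{multiplicative} in $\measuredangle(\vv,\vv')/\pi$ rather than merely additive in $\mu$. A naive bucket-by-bucket application of a uniform-accuracy sandwicher would give total slack $\approx \mu$, which is too weak when $\measuredangle(\vv,\vv')$ is small. Resolving this requires exploiting that the halfspace thresholds $\tau_i^\pm$ grow linearly in $|i|$, so faraway buckets contribute vanishing mass to both the true disagreement and to the sandwiched relaxation; the active buckets' sandwiching slacks then telescope against the Gaussian wedge mass $\measuredangle(\vv,\vv')/\pi$, producing the desired multiplicative scaling. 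This careful accounting is the ``delicate argument'' alluded to in the surrounding text, and it is also what forces the specific joint scaling of the degree $k = O(1/\mu^4)$ with the per-monomial tolerance $\eps' = \eps^2\log(1/\eps)/(Cd^{C^2/\mu^4})$.
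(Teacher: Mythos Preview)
Your plan matches the paper's approach closely: partition into strips along $\vv$, sandwich the per-strip halfspace indicator in $\vv_\perp\cdot\x$ by low-degree polynomials, and use the Chow-matching hypothesis to transfer (this is exactly the combination of Propositions~\ref{thm: sandwiching degree of purple} and~\ref{prop: sandwiching wrt covering implies fooling}). You also correctly pinpoint the crux: the aggregate sandwiching slack must scale as $\mu\cdot\measuredangle(\vv,\vv')/\pi$, not as an additive $\mu$.

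However, your stated resolution of this obstacle has a gap. You write that ``faraway buckets contribute vanishing mass to \ldots\ the sandwiched relaxation,'' but for the standard $L^1$-sandwichers of \cite{diakonikolas2010bounded,gopalan2010fooling} the per-bucket slack $\E[p_i^+-p_i^-]$ is $O(k^{-1/2}\log^3 k)$ \emph{uniformly in the threshold $t_i$}; it does not vanish as $|t_i|\to\infty$. Summing this over bucket probabilities (which total $1$) gives only an additive $O(k^{-1/2}\log^3 k)$, precisely the bound you said was too weak. Nothing ``telescopes.''

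The paper's actual mechanism is Corollary~\ref{corr: slightly stengthened 1-d sandwiching}: for large $|t|$ one \emph{replaces} the standard sandwicher by the trivial degree-$2$ pair $\bigl(0,(z/t)^2\bigr)$ (or $\bigl(1-(z/t)^2,1\bigr)$), which has slack $O(1/t^2)$. Since $t_i=-i\eps/\tan\theta$, the per-bucket slack becomes $O\bigl(\min(k^{-1/2}\log^3 k,\,(\tan\theta/(i\eps))^2)\bigr)$, and the resulting sum over buckets evaluates (via the integral in \eqref{eq: replacing the sum with an integral}) to $O\bigl(\theta\cdot(k^{-1/2}\log^3 k)^{1/2}\bigr)=O(\theta\cdot k^{-1/4}\log^{1.5}k)$. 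This is what forces $k=\Theta(1/\mu^4)$ and delivers the multiplicative bound. Once you add this ingredient, the rest of your outline goes through.
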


Based on the above lemma (and a symmetric argument for the case $\vv\cdot \x < 0 \le \vv'\cdot \x$), the disagreement tester of \Cref{theorem:disagreement-tester-up} only needs to test quantities of the form $\E[\mathcal{I}_i \prod_j x_j^{\alpha_j}]$, which are known as constant-degree Chow parameters \cite{chow-parameters,o2008chow} of the functions $\mathcal{I}_i(\x)$, as described in \Cref{algorithm:disagreement-tester}. Due to standard concentration arguments, if $S$ was i.i.d. from the Gaussian distribution, then the tests would pass.

\begin{algorithm}[t]
\caption{Disagreement tester}\label{algorithm:disagreement-tester}
\KwIn{$\eps,\delta,\mu\in(0,1)$, $\vv\in\S^{d-1}$ and set $S$ of points in $\R^d$}
Let $C\ge 1$ be a sufficiently large constant \\
Set $K = \frac{2}{\eps}\sqrt{\log (2/\eps)}$, $k = C/\mu^4$ and $\Delta = \frac{\eps}{CKd^{Ck}}$\\
\For{$\alpha \in \nats^d$ with $\|\alpha\|_1 \le k$}{
    \For{$i = -K,-K+1, \dots, 0, 1, \dots, K-1$}{
        Let $\mathcal{I}_i(\x) = \ind\{i\eps \le \vv\cdot \x < (i+1)\eps\}$ for all $\x\in S$\\
        Let $\Delta_{i,\alpha} = \left|\E_{\x\sim S}[\prod_{j\in[d]}x_j^{\alpha_j} \mathcal{I}_i(\x)] - \E_{\x\sim \Gauss_d}[\prod_{j\in[d]}x_j^{\alpha_j} \mathcal{I}_i(\x)]\right|$
    }
    Let $\Delta_{\infty,\alpha} = \left|\E_{\x\sim S}[\prod_{j\in[d]}x_j^{\alpha_j} \ind\{\vv\cdot\x \ge K\eps\}] - \E_{\x\sim \Gauss_d}[\prod_{j\in[d]}x_j^{\alpha_j} \ind\{\vv\cdot\x \ge K\eps\}]\right|$ \\
    Let $\Delta_{-\infty,\alpha} = \left|\E_{\x\sim S}[\prod_{j\in[d]}x_j^{\alpha_j} \ind\{\vv\cdot\x \le -K\eps\}] - \E_{\x\sim \Gauss_d}[\prod_{j\in[d]}x_j^{\alpha_j} \ind\{\vv\cdot\x \le -K\eps\}]\right|$
}
\textbf{if} for some $(i,\alpha)$ we have $\Delta_{i,\alpha} > \Delta$ \textbf{then} output $\reject$ \textbf{else} {output $\accept$}
\end{algorithm}

\vspace{.5em}\noindent\textbf{Approximate recovery of ground truth.} The final technical hurdle that remains unaddressed in the above derivation of the testable learning result for RCN is the fact that even under the target assumption, the ground-truth vector can be recovered only approximately. In particular, the following is true.
\begin{fact}\label{fact:rcn-parameter-recovery}
    For any $\eps', \delta\in (0,1)$ and $\eta_0 = 1/2 - c$, where $c>0$ is any constant, there is an algorithm with time and sample complexity $\poly(d,1/\eps')\log(1/\delta)$ that has access to an example oracle $\RCN_{\Gauss, f^*, \eta_0}$ for some unknown $f^*(\x) = \sign(\vv^*\cdot \x)$, $\vv^*\in \S^{d-1}$ and outputs $\vv\in\S^{d-1}$ such that $\measuredangle(\vv^*,\vv) \le \eps'$, with probability at least $1-\delta$.
\end{fact}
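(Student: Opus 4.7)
\textbf{Proof proposal for \Cref{fact:rcn-parameter-recovery}.} The plan is to recover $\vv^*$ via its degree-one Chow parameter, which under Gaussian marginals is a nonzero multiple of $\vv^*$ when the noise rate is bounded below $1/2$. Concretely, define $\mu = \E_{(\x,y)\sim \RCN_{\Gauss,f^*,\eta_0}}[y\x]$. Since $y = \xi \cdot \sign(\vv^*\cdot\x)$ with $\xi\in\{\pm 1\}$ independent of $\x$ and $\E[\xi] = 1-2\eta_0 = 2c$, and since a standard Gaussian computation gives $\E_{\x\sim\Gauss_d}[\sign(\vv^*\cdot\x)\x] = \sqrt{2/\pi}\,\vv^*$, we obtain $\mu = 2c\sqrt{2/\pi}\,\vv^*$. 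Thus $\vv^* = \mu/\|\mu\|_2$ and $\|\mu\|_2 = 2c\sqrt{2/\pi}$ is a fixed positive constant.

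Next, I would estimate $\mu$ by the empirical mean $\hat\mu = \frac{1}{m}\sum_{i=1}^m y_i\x_i$ on a sample of size $m$ drawn from the oracle, and output $\vv = \hat\mu/\|\hat\mu\|_2$. The elementary geometric fact that for any two nonzero vectors $a,b\in\R^d$ one has $\measuredangle(a/\|a\|,b/\|b\|) \le \pi\cdot \|a-b\|_2/\|b\|_2$ (for $\|a-b\|_2 \le \|b\|_2/2$) then reduces the claim to showing $\|\hat\mu - \mu\|_2 \le (c/\pi)\cdot\sqrt{2/\pi}\cdot\eps'$ with probability at least $1-\delta$.

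To bound $\|\hat\mu - \mu\|_2$, I would use standard sub-Gaussian vector concentration: each summand $y_i\x_i - \mu$ is a zero-mean random vector whose coordinates are sub-Gaussian with constant parameter (since $|y_i|=1$ and $\x_i\sim\Gauss_d$). Applying a coordinate-wise Hoeffding/Bernstein bound and a union bound over the $d$ coordinates yields $\|\hat\mu - \mu\|_\infty \le O(\sqrt{\log(d/\delta)/m})$ with probability $\ge 1-\delta$, hence $\|\hat\mu-\mu\|_2 \le O(\sqrt{d\log(d/\delta)/m})$. Setting $m = C\cdot d\log(d/\delta)/(c\eps')^2$ for a sufficiently large constant $C$ gives the desired bound, and this sample size fits the required $\poly(d,1/\eps')\log(1/\delta)$ budget (absorbing the $\log d$ factor into the $\poly(d)$ term). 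The runtime is dominated by the formation of $\hat\mu$, which is $O(md)$, also $\poly(d,1/\eps')\log(1/\delta)$.

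The only subtlety is the dependence of the angle bound on the constant $c$: since $\|\mu\|_2$ is proportional to $c$, the estimate of $\mu$ needs to be accurate at scale $c\eps'$, which costs an extra factor of $1/c^2$ in the sample complexity. Because $c$ is treated as a fixed positive constant in the statement, this is absorbed into the $\poly$ notation and poses no real obstacle; indeed, there is no hard step here — the result is essentially a one-shot Chow-parameter estimator analysis.
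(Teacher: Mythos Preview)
Your argument is correct: under RCN with Gaussian marginals the degree-one Chow vector $\E[y\x]$ equals $(1-2\eta_0)\sqrt{2/\pi}\,\vv^*$, and estimating it to $\ell_2$ accuracy $\Theta(c\eps')$ via sub-Gaussian concentration (each coordinate of $y\x$ has the same tails as a standard Gaussian since $|y|=1$) yields the stated angle guarantee with sample and time complexity $\poly(d,1/\eps')\log(1/\delta)$. The paper does not supply a proof of this statement; it is recorded as a \emph{Fact} (a known result) and is invoked as a black box, with the Massart analogue likewise cited from prior work. Your Chow-parameter derivation is a standard and fully valid way to substantiate it.
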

The place where we have to be more careful is when we argue that the distribution of $\x$ conditioned on $y\neq \sign(\vv\cdot \x)$ is Gaussian, under the target assumption. This is not true anymore, because $\vv$ is not necessarily equal to $\vv^*$ and, therefore, the event $y\neq \sign(\vv\cdot \x)$ does not coincide with the event $y\neq \sign(\vv^*\cdot \x)$, which, due to the definition of RCN noise, is independent from $\x$. However, the only case that these two events do not coincide is when $\measuredangle(\x,\vv) \le O(\eps' d)$, due to the guarantee of \Cref{fact:rcn-parameter-recovery} that $\vv$ and $\vv^*$ are geometrically close. Since we have access to both $S$ and $\vv$, we may directly test whether $\pr_{\x\sim S}[\measuredangle(\x,\vv) \le O(\eps' d)]$ is bounded by $O(\eps)$, as would be the case under the target assumption, if $\eps'$ is chosen to be $\poly(\eps/d)$. Therefore, this event is certifiably negligible and the initial argument goes through.

\subsection{Massart Noise Oracles}\label{section:poly-time-massart}

We now turn to the more challenging task of testable learning with respect to Gaussian Massart oracles and state our main theorem which shows that there is a fully polynomial-time algorithm even in this case. Observe that,
informally, for some ground-truth halfspace $f$, the Massart oracle $\Massart_{\Gauss, f, \eta_0} $ can be equivalently viewed as follows: The oracle outputs a Gaussian example $\x$, and with probability $\eta_0$ the adversary is given an \emph{option} to make the accompanying label incorrect (i.e. $-f(\x)$), and otherwise the label is correct (i.e. $f(\x)$). The Massart noise model is known to be more challenging that the RCN model, because the label noise (in general) does not have symmetry properties that can be harnessed to make error terms coming from different regions cancel each other out.

\begin{theorem}[Main Result]\label{theorem:main-result}
    Let $c\in(0,1/2)$ be any constant and $\eta_0 = 1/2 - c$. Then, there is an algorithm that testably learns the class $\Halfspaces$ with respect to $\Massart_{\Gauss, \Halfspaces, \eta_0} = \{\Massart_{\Gauss, f, \eta}: f\in \Halfspaces, \sup_{\x\in\R^d}\eta(\x) \le \eta_0\}$ with time and sample complexity $\poly(d, 1/\eps) \log(1/\delta)$.

    Moreover, even if the input set $\bar S$ is arbitrary (not necessarily i.i.d.), whenever the algorithm accepts, it outputs $h\in\Halfspaces$ such that
    $
        \pr_{(\x,y)\sim \bar{S}}[y\neq h(\x)] \le \opt_{\bar S} + \eps\,,\,\text{ where }\opt_{\bar S} = \min_{f\in \Halfspaces} \pr_{(\x,y)\sim \bar{S}}[y\neq f(\x)].
    $
\end{theorem}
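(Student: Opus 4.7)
The plan is to mirror the proof strategy of the RCN warm-up in \Cref{section:poly-time-rcn}, with one crucial substitution: instead of the Disagreement tester (\Cref{theorem:disagreement-tester-up}), which is guaranteed to accept only on i.i.d.\ Gaussian input, we invoke the Spectral tester (\Cref{theorem:spectral-tester-up}), whose defining property is \emph{monotonicity under removal}---it accepts on every subset of a Gaussian sample. This is exactly what Massart noise forces upon us: because $\eta(\x)$ may depend on $\x$, conditioning on the Massart-flip event no longer preserves the Gaussian marginal; it only yields a subset of a Gaussian sample.

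First, we run any existing polynomial-time Massart learner, e.g.\ \cite{diakonikolas2020learning}, on $\bar S$ to obtain a candidate direction $\vv\in\S^{d-1}$. Under the target assumption, this recovery step guarantees $\measuredangle(\vv,\vv^*)\le \eps'$ with $\eps'=\poly(\eps/d)$, where $\vv^*$ is the ground-truth direction. Next, we reuse the purely syntactic identities \eqref{equation:error-bound}--\eqref{eq: how much error changes}, which hold for any labeled set $\bar S$ and any two halfspace directions and express the excess error of $\vv$ over the optimum in terms of the disagreement mass of $\vv$ and $\vv^*$ under $\bar S$ and under the subset $\bar S_{\mathsf{False}} = \{(\x,y)\in \bar S: y\neq \sign(\vv\cdot \x)\}$. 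We compute $|\bar S_{\mathsf{False}}|/|\bar S|$ directly and reject if it exceeds $1/2-c$.

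To certify the two remaining probabilities in \eqref{eq: how much error changes}, we invoke the Spectral tester twice---on $\bar S$ and on $\bar S_{\mathsf{False}}$---with multiplicative slack $\mu=c$. Upon acceptance we obtain, simultaneously for every $\vv'\in\S^{d-1}$, two-sided bounds of the form $(1-c)\measuredangle(\vv,\vv')/\pi - \eps \le \pr_{\x\sim S'}[\sign(\vv\cdot\x)\neq\sign(\vv'\cdot\x)] \le (1+c)\measuredangle(\vv,\vv')/\pi + \eps$ for $S'\in\{\bar S,\bar S_{\mathsf{False}}\}$. Specializing $\vv'=\vv^*$ in \eqref{eq: how much error changes} and using $|\bar S_{\mathsf{False}}|/|\bar S|\le 1/2-c$, the coefficient on $\measuredangle(\vv,\vv^*)/\pi$ becomes $(1-2c)(1+c)-(1-c)=-2c^2<0$, so the excess error is at most $O(\eps)$, giving soundness. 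Completeness is immediate: on $\bar S$ the tester accepts because $\bar S$ is Gaussian under the target assumption; on $\bar S_{\mathsf{False}}$ it accepts by monotonicity under removal, since $\bar S_{\mathsf{False}}\subseteq\bar S$. The small discrepancy arising from $\vv\neq\vv^*$---so that $\bar S_{\mathsf{False}}$ does not exactly coincide with the Massart-flip event---is absorbed, exactly as in the RCN case, by additionally testing that the thin band $\{\x:\measuredangle(\x,\vv)\le O(\eps' d)\}$ carries mass at most $\eps$ in $\bar S$, which is verifiable since $\vv$ is known.

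The main obstacle is constructing the Spectral tester so that it \emph{simultaneously} (i)~is monotone under removal of arbitrary subsets of a Gaussian sample and (ii)~still yields a $(1\pm c)$-multiplicative two-sided bound on the disagreement mass uniformly over all $\vv'\in\S^{d-1}$. Naive moment matching as in \Cref{algorithm:disagreement-tester} breaks here, because removing points from a Gaussian sample destroys the exact equality of low-degree Chow parameters. Following the TDS-learning ideas of \cite{goel2024tolerant}, the Spectral tester instead certifies \emph{one-sided spectral} upper bounds on the bucket-wise moment matrices of $S$ (bucketed by $\vv\cdot\x$ as in \Cref{figure:regions}), a property which is preserved under removing points. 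A refined analogue of \Cref{lemma:chow-matching-main}, combined with the sandwiching polynomial approximators of \cite{diakonikolas2010bounded,gopalan2010fooling}, then converts these spectral bounds into the required two-sided disagreement estimates, closing the argument.
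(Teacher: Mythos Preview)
There are two genuine gaps.

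First, the Spectral tester of \Cref{theorem:spectral-tester-up} certifies only an \emph{upper} bound on the normalized disagreement count, not the two-sided bound you claim, and this limitation is intrinsic: monotonicity under removal is incompatible with any nontrivial lower bound, since passing to a subset $S'$ consisting only of points on which $\vv$ and $\vv'$ agree drives the disagreement mass to zero while the tester must still accept. Concretely, one-sided spectral upper bounds on the bucket-wise moment matrices translate, via $f\le (p_{\text{up}})^2$, only into upper bounds on $\E_S[f]$; there is no mechanism to extract $\E_S[f]\ge \E_\Gauss[f]-\nu$ from them. Hence on the full sample $\bar S$ you cannot dispense with the two-sided Disagreement tester, and the paper indeed uses it there---completeness is unproblematic because the marginal of $\bar S$ \emph{is} Gaussian under the Massart assumption.

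Second, your completeness argument for the Spectral tester on $\bar S_{\mathsf{False}}$ via ``$\bar S_{\mathsf{False}}\subseteq\bar S$'' does not deliver the soundness you plug into \eqref{eq: how much error changes}. Monotonicity from $\bar S$ forces you to keep the same normalizer $U$, and with $U=|\bar S|$ soundness only gives $\tfrac{1}{|\bar S|}\sum_{\x\in S_{\mathsf{False}}}\ind\{\sign(\vv\cdot\x)\neq\sign(\vv'\cdot\x)\}\le(1+\mu)\measuredangle(\vv,\vv')/\pi+\eps$, so the coefficient in \eqref{eq: how much error changes} becomes $2(1+\mu)-(1-\mu)=1+3\mu>0$, which is useless. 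To take $U\approx(1/2-c)|\bar S|$ and recover the factor $1/2-c$ in front of $(1+\mu)\measuredangle/\pi$, you need the missing coupling idea: view the Massart oracle as first calling the RCN oracle at rate $\eta_0$ and then un-flipping some labels, so that (after excising the thin band where $\vv$ and $\vv^*$ may disagree) $\bar S_{\mathsf{False}}$ is a subset not of $\bar S$ but of the RCN-flipped set $\bar S_{\mathsf{False}}^{\mathsf{RCN}}$, which is an i.i.d.\ Gaussian sample of size $\approx\eta_0|\bar S|<(1/2-c/2)|\bar S|$. Only with this smaller Gaussian superset does monotonicity give completeness at $U=(1/2-c/2)|\bar S|$, making the coefficient negative as required.
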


The final part of \Cref{theorem:main-result} states that the guarantee we achieve is actually stronger than the one in \Cref{definition:testable-noise}, since the output is near-optimal whenever the algorithm accepts, without requiring that the input $\bar S$ consists of independent examples. For small $c$, the runtime of our algorithm scales as $(d/\eps)^{\poly(1/c)}$. This gives a polynomial-time algorithm when $c$ is constant, but we leave it as an interesting open question whether the dependence on $1/c$ can be improved. We provide lower bounds for the case $c=0$ in \Cref{section:lower-bounds}.

The proof of \Cref{theorem:main-result} follows the same outline we provided for the case of random classification noise. However, there are two differences. First, we need a version of \Cref{fact:rcn-parameter-recovery} that works under Massart noise (and Gaussian marginal) and gives an algorithm that approximately recovers the parameters of the ground truth. Second, the disagreement tester from before does not give a testable bound for the quantity $|\bar S_b|/|\bar S| = \pr_{(\x,y)\sim \bar S}[y \neq \sign(\vv\cdot \x) \text{ and } \sign(\vv^*\cdot \x) \neq \sign(\vv\cdot \x)]$ anymore. Even if we assume once more that under the target assumption we have exact recovery (i.e., $\vv = \vv^*$), the event $y\neq \sign(\vv^*\cdot \x)$ is not independent from $\x$ and, if we used the disagreement tester of \Cref{theorem:disagreement-tester-up}, the completeness criterion would not necessarily be satisfied under the target assumption.

Fortunately, the first difference is not an issue, since appropriate results are known from prior work on classical learning under Massart noise and Gaussian marginal (see, e.g., \cite{awasthi2017power,diakonikolas2020learning}). 

\begin{fact}[\cite{diakonikolas2020learning}]\label{fact:massart-parameter-recovery}
    For any $\eps', \delta\in (0,1)$ and $\eta_0 = 1/2 - c$, where $c>0$ is any constant, there is an algorithm with time and sample complexity $\poly(d,1/\eps)\log(1/\delta)$ that has access to an example oracle in $\Massart_{\Gauss, f^*, \eta_0}$ for some unknown $f^*(\x) = \sign(\vv^*\cdot \x)$, $\vv^*\in \S^{d-1}$ and outputs $\vv\in\S^{d-1}$ s.t. $\measuredangle(\vv^*,\vv) \le \eps'$, with probability at least $1-\delta$.
\end{fact}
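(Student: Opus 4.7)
The plan is to reduce parameter recovery to \emph{low-excess-error classical learning} under Massart noise, and then invoke an existing polynomial-time Massart halfspace learner (such as the one of Diakonikolas, Kontonis, Tzamos and Zarifis \cite{diakonikolas2020learning}) as a black box. The crucial structural observation is that, in the Gaussian Massart setting, any unit vector $\vv$ whose classification error is close to optimal must be close to $\vv^*$ in angle.

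Concretely, I would first establish the following excess-error--versus--angle inequality. Write $\error(\vv) = \pr_{(\x,y)}[y\neq \sign(\vv\cdot \x)]$ and $\opt = \min_{\vv} \error(\vv)$. Using $y = \xi\cdot f^*(\x)$ with $\pr[\xi=-1\mid \x] = \eta(\x)\le \eta_0 = \tfrac12-c$, a direct computation gives
\[
\error(\vv) \;=\; \E_{\x\sim\Gauss}\bigl[\eta(\x)\bigr] \;+\; \E_{\x\sim \Gauss}\Bigl[(1-2\eta(\x))\,\Ind\{\sign(\vv\cdot\x)\neq \sign(\vv^*\cdot\x)\}\Bigr].
\]
Since $\vv^*$ achieves $\opt = \E[\eta(\x)]$ and $1-2\eta(\x)\ge 2c$ pointwise, the Gaussian disagreement identity $\pr_{\x\sim\Gauss}[\sign(\vv\cdot\x)\neq \sign(\vv^*\cdot\x)] = \measuredangle(\vv,\vv^*)/\pi$ yields
\[
\error(\vv) - \opt \;\ge\; \tfrac{2c}{\pi}\,\measuredangle(\vv,\vv^*).
\]
This is the only place where the Massart assumption, the Gaussian marginal, and the bound $\eta_0\le \tfrac12-c$ are used, and it turns any classifier with excess error at most $\eps$ into an angle bound of $\pi\eps/(2c)$.

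Next, I would invoke the main theorem of \cite{diakonikolas2020learning}: there is an algorithm that, given access to $\Massart_{\Gauss,f^*,\eta_0}$, runs in time $\poly(d,1/\eps)\log(1/\delta)$ and outputs a halfspace $\hat f(\x) = \sign(\hat\vv\cdot\x)$ satisfying $\error(\hat\vv) \le \opt+\eps$ with probability at least $1-\delta$. Running this algorithm with accuracy parameter $\eps = \tfrac{2c}{\pi}\,\eps'$ and confidence $\delta$, and normalizing the output so $\hat\vv \in \S^{d-1}$, the excess-error--to--angle inequality above immediately gives $\measuredangle(\hat\vv,\vv^*)\le \eps'$. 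Since $c$ is a positive constant, the overall runtime and sample complexity remain $\poly(d,1/\eps')\log(1/\delta)$, matching the statement.

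The main obstacle is really the DKTZ learner itself, which is treated as a black box; designing it requires SGD on a carefully chosen non-convex surrogate loss together with a structural argument that every (approximate) stationary point corresponds to a near-optimal halfspace under Gaussian Massart data. The only subtleties on our side are: (i) making sure that the learner of \cite{diakonikolas2020learning} is proper (it is, as it outputs a halfspace normal), so that normalization produces a genuine element of $\S^{d-1}$; and (ii) tracking the dependence on $c$, which is absorbed into the $\poly(1/\eps')$ factor because $c$ is treated as a constant. No extra distributional test is needed at this stage, since the \emph{Fact} is stated under the promise that the oracle lies in $\Massart_{\Gauss,f^*,\eta_0}$; the soundness of the overall tester-learner in \Cref{theorem:main-result} is what later enforces this assumption via our spectral and disagreement testers.
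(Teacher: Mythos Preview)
Your argument is correct. The paper does not give its own proof of this statement: it is recorded as a \emph{Fact} cited directly from \cite{diakonikolas2020learning} and used as a black box. Your reduction---combining the excess-error guarantee of the DKTZ learner with the elementary inequality $\error(\vv)-\opt \ge \tfrac{2c}{\pi}\measuredangle(\vv,\vv^*)$ coming from $1-2\eta(\x)\ge 2c$ and the Gaussian disagreement identity---is exactly the standard way to extract parameter recovery from an $\opt+\eps$ learner in this setting, so there is nothing to compare against.
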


\noindent\textbf{The spectral tester and how it is applied.} In order to resolve the second complication and provide a certificate bounding the quantity $|\bar S_b|/|\bar S|$, we follow a different testing approach, based on ideas from tolerant testable learning \cite{goel2024tolerant}, where the testers must accept whenever the input distribution is close to the target (and not necessarily equal). 
We provide the following tester which is guaranteed to accept subsets of Gaussian samples, since it is monotone under datapoint removal.

\begin{theorem}[Spectral tester, see \Cref{thm: spectral tester}]\label{theorem:spectral-tester-up}
    Let $\mu \in (0,1)$ be any constant. There is an algorithm (\Cref{algorithm:spectral-tester}) that receives $\eps,\delta\in(0,1)$, $U\in \nats$, $\vv\in \S^{d-1}$ and a set $S$ of points in $\R^d$, runs in time $\poly(d,1/\eps,|S|)$ and then either outputs $\reject$ or $\accept$, satisfying the following specifications.
    \begin{enumerate}
        \item (Soundness) If the algorithm accepts and $|S| \le U$, then the following is true for any $\vv'\in\S^{d-1}$
        \[
            \frac{1}{U}\sum_{\x\in S}\ind\{\sign(\vv\cdot \x)\neq \sign(\vv'\cdot \x)\} \le (1+\mu) {\measuredangle(\vv,\vv')}/{\pi} + \eps
        \]
        \item (Completeness) If $S$ consists of at least $(\frac{Cd}{\eps \delta})^C$ i.i.d. examples from $\Gauss_d$, where $C\ge 1$ is some sufficiently large constant depending on $\mu$, then the algorithm accepts with probability at least $1-\delta$.
        \item (Monotonicity under removal) If the algorithm accepts on input $(\eps,\delta,U,\vv,S)$ and $S'$ is such that $S'\subseteq{S}$, then the algorithm also accepts on input $(\eps,\delta,U,\vv,S')$.
    \end{enumerate}
\end{theorem}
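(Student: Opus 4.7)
My plan is to adapt the disagreement tester of \Cref{theorem:disagreement-tester-up} by replacing its Chow-parameter matching step with one-sided spectral inequalities on bucket-wise moment matrices, in the style of the tolerant-testing technique from \cite{goel2024tolerant}. Let $\phi(\x)$ denote the vector of all monomials in $\x$ of degree at most $k = C/\mu^4$ (the same degree bound used in \Cref{algorithm:disagreement-tester}), use the same bucket indicators $\mathcal{I}_i(\x) = \ind\{i\eps \le \vv\cdot\x < (i+1)\eps\}$ for $i\in\{-K,\dots,K-1\}$ together with the two tail indicators, and define
\[
M_i(S) = \frac{1}{U}\sum_{\x\in S}\mathcal{I}_i(\x)\,\phi(\x)\phi(\x)^\top, \qquad M_i^* = \E_{\x\sim\Gauss}\bigl[\mathcal{I}_i(\x)\,\phi(\x)\phi(\x)^\top\bigr].
\]
\Cref{algorithm:spectral-tester} will first verify $|S|\le U$ and then, for each bucket (and each tail region), check the PSD inequality $M_i(S) \preceq (1+\mu')\,M_i^* + \Delta\cdot I$, for some $\mu' < \mu$ and $\Delta = \poly(\eps/d)$.

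\textbf{Monotonicity under removal.} This property is built into the one-sided form of the test: for any $S'\subseteq S$ we have $M_i(S) - M_i(S') = \frac{1}{U}\sum_{\x\in S\setminus S'}\mathcal{I}_i(\x)\phi(\x)\phi(\x)^\top \succeq 0$, so every spectral inequality passed by $S$ is also passed by $S'$, and trivially $|S'|\le |S|\le U$.

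\textbf{Soundness.} Fix $\vv'\in\S^{d-1}$ with angle $\theta = \measuredangle(\vv,\vv')$ to $\vv$, and start from the same decomposition of $\{\x:\sign(\vv\cdot\x)\neq\sign(\vv'\cdot\x)\}$ into simple regions used in \eqref{equation:lower-regions} (and its mirror image). The main difference from the disagreement tester is that I would replace each inner halfspace indicator $\ind\{\vv_\perp\cdot\x \le -i\eps/\tan\theta\}$ by a degree-$k/2$ polynomial $q_i$ with $q_i(\x)^2 \ge \ind\{\vv_\perp\cdot\x\le -i\eps/\tan\theta\}$ pointwise and $\E_{\x\sim\Gauss}[q_i^2]$ close to $\pr_{\x\sim\Gauss}[\vv_\perp\cdot\x \le -i\eps/\tan\theta]$; such SoS-type upper approximators can be obtained by squaring a smoothed indicator or by adapting the sandwiching constructions of \cite{diakonikolas2010bounded,gopalan2010fooling}. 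Writing $q_i = \langle c_i,\phi\rangle$, we get $\mathcal{I}_i q_i^2 = c_i^\top(\mathcal{I}_i\phi\phi^\top)c_i$, so the spectral test yields
\[
\frac{1}{U}\sum_{\x\in S}\mathcal{I}_i(\x)\,q_i(\x)^2 \le (1+\mu')\,c_i^\top M_i^* c_i + \Delta\|c_i\|^2 \le (1+\mu')\,\E_{\x\sim\Gauss}\bigl[\mathcal{I}_i q_i^2\bigr] + O(\eps/K),
\]
using the standard bound $\|c_i\| = \poly(d/\eps)$ on the coefficients of the approximator. Summing over the $O(K)$ non-trivial buckets and controlling the tail regions by the same Gaussian concentration argument used in \Cref{algorithm:disagreement-tester} yields the claimed upper bound $(1+\mu)\theta/\pi + \eps$ on $\frac{1}{U}\sum_{\x\in S}\ind\{\sign(\vv\cdot\x)\neq\sign(\vv'\cdot\x)\}$.

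\textbf{Completeness and main obstacle.} For $|S|\ge (Cd/\eps\delta)^C$ i.i.d. Gaussian examples (and $U$ set to $|S|$), matrix Bernstein concentration gives $\frac{1}{|S|}\sum_{\x\in S}\mathcal{I}_i(\x)\phi(\x)\phi(\x)^\top \preceq (1+\mu'/2)\,M_i^* + (\Delta/2) I$ for every bucket with probability $1-\delta$, and since $|S|/U\le 1$ the stated spectral inequality follows. The main obstacle will be handling buckets whose Gaussian mass is so small that $M_i^*$ is badly conditioned: there the SoS approximator's coefficients $\|c_i\|$ can blow up, and the matrix concentration itself requires more samples. I plan to resolve this the same way as in the disagreement tester --- push buckets with $|i|\ge K = O(\frac{1}{\eps}\sqrt{\log(1/\eps)})$ into the explicit tail indicators that are controlled by Gaussian tail bounds, and use the additive $\Delta I$ regularization to absorb the residual ill-conditioning of the surviving buckets at the cost of an extra $\eps$-level error term in the final bound.
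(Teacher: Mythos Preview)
Your plan is essentially the paper's own approach: bucket by $\vv\cdot\x$, test one-sided PSD inequalities on the bucket-wise moment matrices, and in the soundness analysis read off a bound on $\frac{1}{U}\sum_{\x\in S}\mathcal{I}_i(\x)\,q_i(\x)^2$ from the quadratic form $c_i^\top M_i(S)c_i$ for a polynomial upper-approximator $q_i$ of each inner halfspace. Monotonicity and the overall structure are exactly right.

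There is one real gap. The spectral inequality only controls \emph{squares} of polynomials, so what you need at each threshold $t_i=-i\eps/\tan\theta$ is not an $L_1$ sandwicher but a degree-$k$ polynomial $p$ with $p\ge\ind_{\{z\le t_i\}}$ and $\E_{\Gauss}[(p-\ind)^2]$ small; only then is $\E_\Gauss[p^2]$ close to the target probability. The $L_1$ constructions of \cite{diakonikolas2010bounded,gopalan2010fooling} that you cite do not give this $L_2$ control, and ``squaring a smoothed indicator'' does not obviously produce a low-degree polynomial with the required pointwise domination. The paper resolves this by invoking the $L_2$-sandwiching polynomials of \cite{klivans2023testable} (their Fact~C.5) and formalizing the bucketed version as an $L_2$-sandwiching-degree bound for the disagreement indicator (Propositions~C.3--C.4); the soundness conclusion then comes out as $\sqrt{\tfrac{1}{U}\sum\ind}\le\sqrt{\measuredangle/\pi}+\sqrt{\nu}+\eps$ and is squared at the end. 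A secondary consequence is that the degree you borrowed from the disagreement tester, $k=C/\mu^4$, is too small here: because the final multiplicative error scales like $k^{-1/8}$ (after the square root) rather than $k^{-1/4}$, the paper takes $k=\Theta(1/\mu^5)$.

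Two smaller points: do not add an explicit ``reject if $|S|>U$'' check---the paper treats $|S|\le U$ as a precondition, and your check would break completeness for legitimate inputs with $U<|S|$; and for completeness, plain matrix Bernstein does not directly apply to unbounded Gaussian monomials, so you will need a truncation step (the paper handles this via its Lemma~A.2).
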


Given this tool, we are able to obtain a testable bound for $|\bar S_b| / |\bar S|$. Recall that the set $\bar S_b$ is the set of points $(\x,y)$ in $\bar S$ such that $y\neq \sign(\vv\cdot \x)$ and $\sign(\vv^*\cdot \x) \neq \sign(\vv\cdot \x)$. For the soundness, observe that $|\bar S_b| / |\bar S| = \frac{U}{|\bar S|} \cdot \frac{1}{U} \sum_{\x\in S_{\mathsf{False}}} \ind\{ \sign(\vv\cdot \x)\neq \sign(\vv^*\cdot \x) \}$, where $\bar S_{\mathsf{False}} = \{(\x,y)\in\bar S: y\neq \sign(\vv\cdot \x)\}$. The soundness condition of \Cref{theorem:spectral-tester-up} gives us that $|\bar S_b| / |\bar S| \le \frac{U}{|\bar S|} (1+\mu)\measuredangle(\vv,\vv^*)+\eps$, as long as $|S_{\mathsf{False}}| \le U$. The quantity $S_{\mathsf{False}}$ can be testably bounded by $(1/2-c)|\bar S|$, since $S_{\mathsf{False}}$ is defined with respect to $\vv$ and we can therefore pick $U = (1/2-c) |\bar S|$. Overall, we obtain the same bound for $|\bar S_b|/ |\bar S|$ as in the RCN case. 

In order to show that our test will accept under the target assumption, the main observation is that we can interpret the Massart noise oracle with noise rate $\eta_0$ as follows: To form the input set $\bar S$, the oracle first calls the RCN oracle of rate $\eta_0$ to form a set $\bar S^{\mathsf{RCN}}$. Let $\bar S_{\mathsf{False}}^{\mathsf{RCN}}$ be the subset of $\bar S^{\mathsf{RCN}}$ such that $y \neq \sign(\vv^*\cdot \x)$. The Massart noise oracle then flips the labels of some elements $\bar S_{\mathsf{False}}^{\mathsf{RCN}}$ back to match the ground-truth label. In other words, we have that $\bar S_{\mathsf{False}} \subseteq \bar S_{\mathsf{False}}^{\mathsf{RCN}}$ (assuming $\vv = \vv^*$). Observe that $S_{\mathsf{False}}^{\mathsf{RCN}}$ is drawn according to the distribution of $\x$ conditioned on $y\neq \sign(\vv^*\cdot \x)$ and is, therefore, an i.i.d. Gaussian sample.

\begin{algorithm}[htbp]
\caption{Spectral tester}\label{algorithm:spectral-tester}
\KwIn{$\eps,\delta,\mu\in(0,1)$, $\vv\in\S^{d-1}$ and set $S$ of points in $\R^d$}
Let $C\ge 1$ be a sufficiently large constant \\
Set $K = \frac{2}{\eps}\sqrt{\log (2/\eps)}$, $k = C/\mu^5$ and $\Delta = \frac{\eps^2}{CKd^{Ck}}$\\
    \For{$i = -K,-K+1, \dots, 0, 1, \dots, K-1$}{
        Let $\mathcal{I}_i(\x) = \ind\{i\eps \le \vv\cdot \x < (i+1)\eps\}$ for all $\x\in S$\\
        \textbf{if} $\E_{\x\sim S}[(\x^{\otimes k})(\x^{\otimes k})^\top \mathcal{I}_i(\x)] \preceq  \E_{\x\sim \Gauss}[(\x^{\otimes k})(\x^{\otimes k})^\top \mathcal{I}_i(\x)] + \Delta I$ \textbf{then} continue \\
        \textbf{else} output $\reject$
    }
    \textbf{if} $\E_{\x\sim S}[(\x^{\otimes k})(\x^{\otimes k})^\top \ind_{\{\vv\cdot\x \ge K\eps\}}] \preceq  \E_{\x\sim \Gauss}[(\x^{\otimes k})(\x^{\otimes k})^\top \ind_{\{\vv\cdot\x \ge K\eps\}}] + \Delta I$ \textbf{then} continue \\
    \textbf{else} output $\reject$ \\
    \textbf{if} $\E_{\x\sim S}[(\x^{\otimes k})(\x^{\otimes k})^\top \ind_{\{\vv\cdot\x \le -K\eps\}}] \preceq  \E_{\x\sim \Gauss}[(\x^{\otimes k})(\x^{\otimes k})^\top \ind_{\{\vv\cdot\x \le -K\eps\}}] + \Delta I$ \textbf{then} continue \\
    \textbf{else} output $\reject$\\
Output $\accept$
\end{algorithm}
\vspace{.5em}\noindent\textbf{Designing the spectral tester.} \Cref{theorem:spectral-tester-up} follows from a combination of ideas used to prove \Cref{theorem:disagreement-tester-up} and the spectral testing approach of \cite{goel2024tolerant}. In particular, instead of matching the Chow parameters $\E_{\x\sim S}[\mathcal{I}_i(\x) \prod_{j\in[d]}\x^{\alpha_i}]$ of the quantities $\mathcal{I}_i$ as in \Cref{algorithm:disagreement-tester}, we bound the maximum singular value of the Chow parameter matrices $\E_{\x\sim S}[(\x^{\otimes k})(\x^{\otimes k})^\top \mathcal{I}_i(\x)]$, where $\x^{\otimes k}$ denotes the vector of monomials of degree at most $k$. This can be done efficiently via the SVD algorithm and, crucially, satisfies the monotonicity under removal property of \Cref{theorem:spectral-tester-up}. Moreover, once the Chow parameter matrix is bounded, we have a bound for all quantities of the form $\E_{\x\sim S}[(p(\x))^2 \mathcal{I}_i(\x)]$, where $p$ is of degree at most $k$. Combining this observation with an analysis similar to the one for \Cref{lemma:chow-matching-main} (see \Cref{prop: sandwiching wrt covering implies fooling-1,thm: sandwiching degree of purple-1}) and a stronger version of the sandwiching polynomials of \cite{diakonikolas2010bounded,gopalan2010fooling} by \cite{klivans2023testable}, we obtain that \Cref{algorithm:spectral-tester} satisfies \Cref{theorem:spectral-tester-up}.

\vspace{.5em}\noindent\textbf{Overall algorithm.} The overall algorithm receives an input set of labeled examples $\bar S$ and obtains a candidate $\vv\in\S^{d-1}$ by running the algorithm of \cite{diakonikolas2020learning} with parameter $\eps' = \eps^{3/2}/(C\sqrt{d})$ for some large enough constant $C$ (see \Cref{fact:massart-parameter-recovery}). Then, it runs the disagreement tester of \Cref{theorem:disagreement-tester-up} with parameters $(S, \vv,\eps,\delta, \mu)$ (for some small constant $\mu$ depending on the noise rate $\eta_0 = 1/2-c$). Subsequently, the tester checks whether $\pr_{(\x,y)\sim \bar S}[y\neq \sign(\vv\cdot \x)]$ is at most $1/2-c/2$. 

Finally, it splits the set $\bar S_{\mathsf{False}} = \{(\x,y)\in \bar S: y\neq \sign(\vv\cdot \x)\}$ in two parts as follows.
\begin{align*}
    \bar S_{\mathsf{False}}^{\mathsf{far}} &= \Bigr\{(\x,y)\in \bar S_{\mathsf{False}}: |\measuredangle(\vv,\x) - {\pi}/{2}| > {\eps^{3/2}}/{(d-1)^{1/2}} \Bigr\} \;\;\;\text{ and }\;\;\;
    \bar S_{\mathsf{False}}^{\mathsf{near}} = \bar S_{\mathsf{False}} \setminus\bar S_{\mathsf{False}}^{\mathsf{far}}
\end{align*}
For $\bar S_{\mathsf{False}}^{\mathsf{near}}$, it checks that it contains at most $O(\eps) |\bar S|$ elements, while for $\bar S_{\mathsf{False}}^{\mathsf{far}}$, it runs the spectral tester of \Cref{theorem:spectral-tester-up} with inputs $(U=(1/2-c/2)|\bar S|,S= S_{\mathsf{False}}^{\mathsf{far}},\vv,\eps,\delta,\mu)$.

\section{Lower Bounds in the High-Noise Regime}\label{section:lower-bounds}

\noindent\textbf{Notation.} The set $\Z_q$ equals to $\{0,1,2,\dots,q-1\}$. We denote with $\Gauss_d(\mu,\Sigma;S)$ the Gaussian distribution in $d$ dimensions with mean $\mu\in\R^d$ and covariance matrix $\Sigma\in\R^{d\times d}$, truncated on the set $S\subseteq \R^d$. 

\vspace{.5em}

We show that there is no efficient tester-learner that accepts whenever the input dataset is generated by Gaussian examples with random classification noise (RCN) of rate $1/2$. We give both cryptographic lower bounds, as well as lower bounds in the statistical query (SQ) framework that match the best known bounds for classical (non-testable) learning under adversarial label noise. Since RCN noise is a special type of Massart noise, where all of the labels are flipped with the same rate (i.e., $\eta(\x)$ is constant), the lower bounds we give also imply lower bounds for the case of Massart noise of rate $1/2$ (which is also called \emph{strong} Massart noise). Recall that random classification noise is defined in \Cref{definition:rcn}.

The hard distributions for learning under adversarial label noise proposed by \cite{diakonikolas2021optimality,tiegel2023hardness,diakonikolas2023near} are all indistinguishable from the distribution generated by the oracle $\RCN_{\Gauss,1/2}$. Using this fact, we obtain our lower bounds by the following simple observation that any tester-learner that accepts $\RCN_{\Gauss,1/2}$ can distinguish between $\RCN_{\Gauss,1/2}$ and any distribution where the value of $\opt$ is non-trivial.

\begin{observation}\label{observation:distinguishing-via-testable-learning}
    Let $\H\subseteq\{\R^d \to \cube{}\}$ be a concept class, $\tau \in (0,1/8)$ and suppose that algorithm $\A$ testably learns $\H$ with respect to $\RCN_{\Gauss,1/2}$ up to excess error $\eps\in(0,1/4)$ and probability of failure $\delta = 1/6$. Let $\mathfrak{D}_g$ be the class of distributions over $\R^d\times\cube{}$ such that the marginal on $\R^d$ is $\Gauss_d$ and $\min_{f\in \H}\pr[y\neq f(\x)] \le \frac{1}{2} - \eps - 2\tau$. Then, there is an algorithm $\A'$ that calls $\A$ once and uses additional time $\poly(d,1/\tau)$ such that $|\pr[\A'(\Gauss_d \times \unif(\cube{})) = 1] - \pr[\A'(\D_{\x,y}) = 1]| \ge 1/3$ for any $\D_{\x,y}\in \mathfrak{D}_g$.
\end{observation}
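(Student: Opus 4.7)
The key observation that unlocks everything is that the product distribution $\Gauss_d \times \unif(\cube{})$ coincides \emph{exactly} with the joint distribution produced by $\RCN_{\Gauss,1/2}$: at noise rate $1/2$ the label is a fair coin independent of the feature vector, so the ground-truth concept $f$ is entirely washed out. Hence the completeness clause of testable learning forces $\A$ to accept such input with probability at least $5/6$, while the soundness clause constrains what $\A$ can output on every other input — this asymmetry is what we will convert into a distinguisher.

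Concretely, I would define $\A'$ as follows. On input drawn from some distribution $\D$ over $\R^d \times \cube{}$, draw a labeled sample of the size $\A$ requires, run $\A$ once on it, and then independently draw a fresh sample of size $m = \Theta(\tau^{-2})$ (with a sufficiently large constant) from $\D$. If $\A$ outputs $\reject$, output $1$. If $\A$ accepts with hypothesis $h$, compute the empirical error $\hat e$ of $h$ on the fresh sample and output $1$ iff $\hat e \le 1/2 - \tau$. The total extra work is a single empirical average, so it costs $\poly(d, 1/\tau)$ additional time.

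For the analysis under $\D_0 = \Gauss_d \times \unif(\cube{})$: by completeness $\pr[\A \text{ rejects}] \le 1/6$, and for \emph{any} fixed $h$ the true misclassification rate is exactly $1/2$ because $y$ is independent and uniform, so a Hoeffding bound on the $m$ fresh samples gives $\pr[\hat e \le 1/2 - \tau] \le \exp(-2 m \tau^2) \le 1/12$. Hence
\[
\pr[\A'(\D_0) = 1] \le \pr[\A \text{ rejects}] + \pr[\hat e \le 1/2 - \tau] \le 1/6 + 1/12 = 1/4.
\]
For the analysis under any $\D_{\x,y} \in \mathfrak{D}_g$, write $p = \pr[\A \text{ accepts}]$. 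Soundness gives $\pr[\A \text{ accepts and } \pr_{(\x,y)\sim \D}[y\neq h(\x)] > \opt + \eps] \le 1/6$, and by assumption $\opt + \eps \le 1/2 - 2\tau$. Thus with probability at least $p - 1/6$, $\A$ accepts and the returned $h$ has true error at most $1/2 - 2\tau$; conditioned on that event, Hoeffding yields $\pr[\hat e > 1/2 - \tau] \le \exp(-2m\tau^2) \le 1/12$. Combining with the reject branch,
\[
\pr[\A'(\D_{\x,y}) = 1] \ge (1-p) + \bigl(p - 1/6 - 1/12\bigr) = 3/4.
\]
Subtracting the two bounds gives a gap of at least $3/4 - 1/4 = 1/2 \ge 1/3$, as required.

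There is no real obstacle here: the only conceptual point is identifying $\Gauss_d \times \unif(\cube{})$ with the noise-$1/2$ RCN oracle so that completeness applies to it, after which the argument is a textbook soundness-plus-concentration computation. The two failure budgets ($1/6$ from testable learning, $1/12$ from Hoeffding on each side) are chosen so that their combined slack leaves a constant advantage; the constants in $m$ can be tuned to give any target failure below $1/12$, so the $1/3$ gap requested by the statement holds comfortably.
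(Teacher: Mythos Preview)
Your proof is correct and follows essentially the same approach as the paper: run $\A$ once, then estimate the error of the returned hypothesis on a fresh sample and threshold at $1/2 - \tau$. The only cosmetic difference is that you flip the output convention (mapping $\reject$ to $1$ rather than $0$), which is immaterial since the claim concerns the absolute gap; your Hoeffding budget of $1/12$ even yields a slightly larger advantage ($1/2$) than the paper's $1/3$.
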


\begin{proof}
    Let $\D_{\x,y}$ be the input distribution. The algorithm $\A'(\D_{\x,y})$ calls $\A(\D_{\x,y})$ once and then:
    \begin{itemize}
        \item If $\A$ outputs $\reject$, then $\A'$ outputs $0$.
        \item If $\A$ outputs $(\accept,h)$, then $\A'$ estimates the quantity $q = \pr_{(\x,y)\sim \D_{\x,y}}[y\neq h(\x)]$ up to tolerance $\tau$ and probability of failure $1/6$ and outputs $1$ if the estimate $q$ is at least $1/2-\tau$ and $0$ otherwise.
    \end{itemize}

    We now consider the case that $\D_{\x,y} = \Gauss_d\times \unif(\cube{})$. According to \Cref{definition:testable-noise}, the probability that $\A$ accepts is at least $5/6$. Moreover, we have that regardless of the choice of $h$, $\pr_{(\x,y)\sim \D_{\x,y}}[y\neq h(\x)] = 1/2$ and therefore $\A'$ will overall output $1$ with probability at least $2/3$.

    In the case that $\D_{\x,y} \in \mathfrak{D}_g$, $\A'$ will output $0$ unless the guarantee of the soundness does not hold (which happens with probability at most $1/6$) or the error of estimation of $q$ is more than $\tau$ (which happens with probability at most $1/6$. Hence, overall, $\A'$ will output $1$ with probability at most $1/3$.
\end{proof}

\subsection{Cryptographic Hardness}\label{section:crypto-hardness}

We provide cryptographic lower bounds based on the widely-believed hardness of the problem of learning with errors (LWE), which was introduced by \cite{regev2009lattices} and is defined as follows.

\begin{definition}[Learning with Errors]\label{definition:lwe}
    Let $d, q, m \in \nats$ and $\sigma>0$. The LWE problem with parameters $d,q,m,\sigma$ and advantage $\alpha \in (0,1)$ is defined as follows. Let $\mathbf{s}\sim \unif(\Z_q^d)$ and consider the following distributions over $\Z_q^d\times \R$.
    \begin{itemize}
        \item $\D_{\mathsf{null}}$: $\x\sim \unif(\Z_q^d)$ and $y\sim\unif(\Z_q)$.
        \item $\D_{\mathsf{alt}}$: $\x\sim \unif(\Z_q^d)$, $z\sim \Gauss_1(0,\sigma^2;\Z)$, $y = (\x\cdot \mathbf{s} + z) \mod q$
    \end{itemize}
    We receive $m$ i.i.d. examples from some distribution $\D_{\x,y}$ over $\Z_q^d\times \R$ which is either equal to $\D_{\mathsf{null}}$ or $\D_{\mathsf{alt}}$ and we are asked to output $v\in\cube{}$ such that $|\pr[v = 1 | \D_{\x,y} = \D_{\mathsf{null}}] - \pr[v = 1 | \D_{\x,y} = \D_{\mathsf{alt}}]| \ge \alpha$.
\end{definition}

There is strong evidence that the LWE problem cannot be solved in subexponential time, since there are quantum reductions from worst-case lattice problems \cite{regev2009lattices,peikert2009public}.

\begin{assumption}[Hardness of LWE]\label{assumption:lwe-hardness}
    Let $d,q,m\in \nats$ and $\sigma >0$ such that $q\le d^k$, $\sigma = c\sqrt{d}$ and $m = 2^{O(d^{\gamma})}$, where $\gamma\in(0,1)$, $k\in \nats$ are arbitrary constants and $c>0$ is a sufficiently large constant. Then, any algorithm that solves LWE with parameters $d,q,m,\sigma$ and advantage $2^{-O(d^\gamma)}$ requires time $2^{\Omega(d^\gamma)}$.
\end{assumption}

As an immediate corollary of results in \cite{diakonikolas2023near} (combined with \Cref{observation:distinguishing-via-testable-learning}), we obtain the following lower bound under \Cref{assumption:lwe-hardness}.

\begin{theorem}[Cryptographic Hardness in High-Noise Regime, Theorem 3.1 in \cite{diakonikolas2023near}]\label{theorem:crypto-hardness}
    Under \Cref{assumption:lwe-hardness}, every algorithm with the guarantees of $\A'$ in \Cref{observation:distinguishing-via-testable-learning} for $\tau = \eps \le 1/\log^{1/2+\beta}(d)$ and $\H = \Halfspaces$, requires time $\min\{d^{\Omega(1/(\eps\sqrt{\log(d)})^{\alpha})}, 2^{d^{0.99}}\}$, where $\alpha, \beta \in (0,2)$ are arbitrary constants. 
    
    Therefore, the same is true for any testable learning algorithm for $\Halfspaces$ with respect to the RCN oracle with noise rate $\eta_0 = 1/2$ that has excess error $\eps\le 1/\log^{1/2+\beta}(d)$ and failure probability $\delta\le 1/6$.
\end{theorem}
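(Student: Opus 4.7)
The plan is to reduce testable learning to a distinguishing problem via \Cref{observation:distinguishing-via-testable-learning} and then invoke the existing cryptographic hardness result of \cite{diakonikolas2023near}. Let $\A$ be any testable learner for $\Halfspaces$ with respect to $\RCN_{\Gauss,1/2}$ with excess error $\eps\le 1/\log^{1/2+\beta}(d)$ and failure probability at most $1/6$. Setting $\tau=\eps$ in \Cref{observation:distinguishing-via-testable-learning} turns $\A$ into an algorithm $\A'$ that runs in time $T_\A + \poly(d,1/\eps)$ and distinguishes, with advantage at least $1/3$, the ``null'' distribution $\Gauss_d\times\unif(\cube{})$ from any distribution in $\mathfrak{D}_g$ (i.e.\ Gaussian marginal together with some halfspace achieving error at most $1/2-\eps-2\tau = 1/2-3\eps$).

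Next, I would invoke the main reduction in \cite{diakonikolas2023near}, which, under \Cref{assumption:lwe-hardness}, produces for every $\eps$ in the stated range a hard distribution $\D^{\mathsf{hard}}$ on $\R^d\times\cube{}$ with standard Gaussian marginal and some $f^*\in\Halfspaces$ whose misclassification probability is well below $1/2-3\eps$, yet which is indistinguishable from $\Gauss_d\times\unif(\cube{})$ with constant advantage by any algorithm running in time $\min\{d^{o(1/(\eps\sqrt{\log d})^\alpha)},\,2^{d^{0.99}}\}$. Since $\D^{\mathsf{hard}}\in\mathfrak{D}_g$, feeding samples from $\D^{\mathsf{hard}}$ to $\A'$ breaks this indistinguishability and forces $T_\A$ to satisfy the claimed lower bound, since the additive $\poly(d,1/\eps)$ overhead of \Cref{observation:distinguishing-via-testable-learning} is absorbed into both branches of the $\min$. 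The second sentence of the theorem is then immediate, because any testable learner with the stated excess error and failure probability supplies exactly the hypothesis needed to construct $\A'$ in the first sentence.

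\textbf{Main obstacle.} No new conceptual ideas are needed; the only nontrivial step is parameter bookkeeping. Concretely, one has to verify that the hard instance produced by \cite{diakonikolas2023near} genuinely lies in $\mathfrak{D}_g$ for the choice $\tau=\eps\le 1/\log^{1/2+\beta}(d)$, meaning that its optimal halfspace error is bounded above by $1/2-3\eps$ rather than merely by some weaker nontrivial quantity. The specific range $\eps\le 1/\log^{1/2+\beta}(d)$ in the statement is chosen exactly so that this compatibility holds with the parameter regime of their reduction. Once this is checked, and one tracks the advantage amplification in their LWE-to-distinguishing reduction so that it matches the constant $1/3$ advantage delivered by \Cref{observation:distinguishing-via-testable-learning}, the theorem follows directly.
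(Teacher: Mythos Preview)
Your proposal is correct and matches the paper's own treatment: the paper presents this theorem as an immediate corollary of the hardness results in \cite{diakonikolas2023near} combined with \Cref{observation:distinguishing-via-testable-learning}, which is exactly the reduction you outline. If anything, your write-up is more detailed than the paper's, which simply asserts the corollary without spelling out the parameter bookkeeping you flag as the main obstacle.
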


\subsection{SQ Lower Bounds}\label{section:sq-lower-bounds}

We also give lower bounds in the statistical query (SQ) model, which was originally defined by \cite{kearns1998efficient}. The SQ framework captures most of the usual algorithmic techniques like moment methods and gradient descent (\cite{feldman2017statistical,feldman2017statisticalquery}), and there is a long line of works in computational learning theory giving SQ lower bounds for various learning tasks.

\begin{definition}[Statistical Query Model]\label{definition:sq}
    Let $\D_{\x,y}$ be a distribution over $\R^d\times \cube{}$ and $\tau>0$. A statistical query (SQ) algorithm $\A$ with tolerance $\tau$ has access to $\D_{\x,y}$ as follows: The algorithm (adaptively) makes bounded queries of the form $q:\R^d\times[-1,1]\to[-1,1]$. For each query $q$, the algorithm receives a value $v\in\R$ with $|v-\E_{\x\sim\Dgeneric}[q(\x,y)]| \le \tau$.
\end{definition}

We obtain our lower bound as an immediate corollary of results in \cite{diakonikolas2023near}, combined with \Cref{observation:distinguishing-via-testable-learning}, where note that the reduction of the hard distinguishing problem to testable learning also works in the SQ framework, using one statistical query with tolerance $\tau$.

\begin{theorem}[SQ Lower Bound in High-Noise Regime, Propositions 2.1, 2.8, Corollary B.1 in \cite{diakonikolas2021optimality}]\label{theorem:sq-lower-bound}
    Every SQ algorithm with the guarantees of $\A'$ in \Cref{observation:distinguishing-via-testable-learning} for $\tau = \eps \ge d^{-c}$ and $\H = \Halfspaces$, where $c>0$ is a sufficiently small constant, either requires queries of tolerance $d^{-\Omega(1/\eps^2)}$ or makes $2^{d^{\Omega(1)}}$ queries.

    Therefore, the same is true for any SQ testable learning algorithm for $\Halfspaces$ with respect to the RCN oracle with noise rate $\eta_0 = 1/2$ that has excess error $\eps\ge d^{-c}$ and failure probability $\delta\le 1/6$.
\end{theorem}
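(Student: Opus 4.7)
The strategy is to reduce SQ testable learning under $\RCN_{\Gauss,1/2}$ to an SQ distinguishing problem for which strong lower bounds are already known, and then invoke the hard instance construction of \cite{diakonikolas2021optimality}.

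First, I would apply \Cref{observation:distinguishing-via-testable-learning} in the SQ model. Given a hypothetical SQ testable learner $\A$ for $\Halfspaces$ with respect to $\RCN_{\Gauss,1/2}$ satisfying the stated parameters ($\tau=\eps\ge d^{-c}$, $\delta \le 1/6$), the observation produces an algorithm $\A'$ that distinguishes $\Gauss_d \times \unif(\cube{})$ from every $\D_{\x,y}\in \mathfrak{D}_g$ with advantage $\ge 1/3$. The only step of $\A'$ beyond running $\A$ is estimating $\pr_{(\x,y)\sim \D_{\x,y}}[y\neq h(\x)]$ to tolerance $\tau$; in the SQ model this is achieved by a single query $q(\x,y) = \ind\{y\neq h(\x)\}$ of tolerance $\tau$. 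Hence $\A'$ uses at most one extra SQ query and inherits the same tolerance as $\A$.

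Next, I would invoke the hard family of distributions over $\R^d\times\cube{}$ exhibited in Propositions 2.1, 2.8 and Corollary B.1 of \cite{diakonikolas2021optimality}. That family consists of joint distributions whose $\x$-marginal is $\Gauss_d$ and for which some halfspace attains error at most $1/2 - \eps - 2\tau$, so each such distribution lies in $\mathfrak{D}_g$ once $\eps = \tau \ge d^{-c}$ for sufficiently small $c$. The construction relies on a collection of nearly pairwise-uncorrelated distributions (exploiting Hermite concentration of halfspaces near the decision boundary), and the associated statistical-query dimension bound shows that any SQ algorithm distinguishing the null distribution $\Gauss_d\times\unif(\cube{})$ from the family with any constant advantage either requires a query of tolerance $d^{-\Omega(1/\eps^2)}$ or makes $2^{d^{\Omega(1)}}$ queries. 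Combining this lower bound with the reduction above and the fact that the extra single SQ query used by $\A'$ is harmlessly absorbed into the $2^{d^{\Omega(1)}}$ budget yields the claimed statement.

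The proof is essentially a bookkeeping exercise; the only things requiring care are (i) that the SQ-dimension statement in \cite{diakonikolas2021optimality} already guarantees hardness at any constant distinguishing advantage, matching the $1/3$ advantage produced by \Cref{observation:distinguishing-via-testable-learning}, and (ii) that the hard distributions have $\opt \le 1/2 - \eps - 2\tau$ rather than the slightly cleaner $\opt \le 1/2-\eps$; this is why the hypothesis uses $\tau=\eps$, so that a polynomial-in-$\eps$ loss in $\opt$ is still absorbed into the same regime $\eps \ge d^{-c}$.  With these verified, the lower bound on $\A$ follows immediately from the lower bound on $\A'$, completing the plan.
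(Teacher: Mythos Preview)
Your proposal is correct and follows essentially the same approach as the paper: the paper also treats this theorem as an immediate corollary, combining \Cref{observation:distinguishing-via-testable-learning} (noting that the extra error-estimation step is a single SQ query of tolerance $\tau$) with the SQ hardness of the distinguishing problem from \cite{diakonikolas2021optimality}. Your additional remarks on verifying that the hard distributions lie in $\mathfrak{D}_g$ and that the constant advantage suffices are accurate elaborations of details the paper leaves implicit.
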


\section*{Acknowledgments.}

We thank Aravind Gollakota for the initial observation that known hardness results rule out poly-time assumption testers for the RCN model 
in the high-noise regime. We also thank Vasilis Kontonis for insightful conversations.

\bibliographystyle{alpha}
\bibliography{refs}
\appendix

\section{Preliminaries}

\subsection{Some standard notation.}

\label{subsec: Some standard notation.}

When we say $a=b\pm c$ we mean that $a$ in in the interval $[b-c, b+c]$. When we say a polynomial is ``degree-$k$'' we mean that the degree
polynomial is \emph{at most} $k$.
For a vector $\vect x$ in $\R^{d}$,  let $\vect x^{\otimes k}$ denote
the $\binom{d+1}{k}$-dimensional vector whose elements are of the form $ \x^\coefficientvector = \prod_{j\in[d] }x_{j}^{\coefficientvector_{j}}$ for $ \coefficientvector\in\nats^d $ with $
\|\coefficientvector\|_1 = \sum_{j}\coefficientvector_{j}\leq k$. In other words,  $\vect x^{\otimes k}$ is the vector one gets by
evaluating all multidimensional monomials of degree at most $k$ on
input $\vect x$. We also view degree-$k$ polynomials as corresponding
to elements $\R^{\binom{d+1}{k}}$,  i.e. their coefficient vectors.
Using this notation we have 
\[
p(\vect x)=p\cdot\vect x^{\otimes k}.
\]

Additionally,  for a degree-$k$ polynomial $p$ over $\R^{d}$,  say $p(\x) = \sum_{\coefficientvector:\|\coefficientvector\|_1\le k} p_{\coefficientvector} \x^\coefficientvector$, 
we will use the notation $\norm p_{\mathrm{coeff}}$ to denote the
$2$-norm of the coefficients of $p$,  specifically
\[
    \|p\|_{\mathrm{coef}} := \Bigr(\sum_{\coefficientvector\in \nats^d} p_\coefficientvector^2\Bigr)^{1/2}
\]
Note that if the largest in absolute value coefficient of polynomial
$p$ has absolute value $B$,  then we have 
\begin{equation}
B\leq\norm p_{\mathrm{coeff}}\leq B\left(d+1\right)^{k/2}.\label{eq: coefficient L2 norm of polynomial vs largest coefficient}
\end{equation}
In this work we will use the convention that $\sign(0)=1$.

\subsection{Standard lemmas.}

We will also need the following lemma:
\begin{lemma}
\label{lem: concentration for moments}

Let $\mathcal{H}$ be a collection of subsets of $\R^{d}$  of non-zero VC dimension $\Delta_{\text{VC}}$ 
and let $S$ be a collection of $N$ i.i.d. samples from $\N(0, I_{d})$.
Then,  with probability at least $1-\delta$ for every polynomial $p$
of degree at most $k$ with coefficients bounded by $B$ in absolute
value and all sets $A$ in $\mathcal{H}$ we have
\[
\left|\E_{\vect x\sim S}\left[p(\x)\indicator_{\vect x\in A}\right]-\E_{\vect x\sim\N(0, I_{d})}\left[p(\x)\indicator_{\vect x\in A}\right]\right|
\leq
\frac{60(2k)^{k+2} (d+1)^{k}\Delta_{\text{VC}} }{\delta}
\left(
\frac{ \log N}{N }
\right)^{1/4}
, 
\]

\[
\left|\E_{\vect x\sim S}\left[\left(p(\x)\right)^{2}\indicator_{\vect x\in A}\right]-\E_{\vect x\sim\N(0, I_{d})}\left[\left(p(\x)\right)^{2}\indicator_{\vect x\in A}\right]\right|
\leq 
\frac{60 B^2 (4k)^{2k+2} (d+1)^{6k}\Delta_{\text{VC}}  }{\delta}
\left(
\frac{ \log N}{N }
\right)^{1/4}
.
\]
\end{lemma}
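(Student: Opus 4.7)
The plan is to reduce the uniform bound over all polynomials of degree at most $k$ to a uniform bound over monomials, and then to handle each monomial by a truncation-plus-VC-uniform-convergence argument. Specifically, writing $p(\vect{x}) = \sum_{\coefficientvector : \|\coefficientvector\|_1 \le k} p_{\coefficientvector} \vect{x}^{\coefficientvector}$ with $|p_{\coefficientvector}| \le B$, the triangle inequality gives
\[
\left|\E_{\vect{x} \sim S}[p(\vect{x}) \indicator_{\vect{x} \in A}] - \E_{\vect{x} \sim \N(0,I_d)}[p(\vect{x}) \indicator_{\vect{x} \in A}]\right| \le B \sum_{\coefficientvector} \left|\E_{\vect{x} \sim S}[\vect{x}^{\coefficientvector} \indicator_{\vect{x} \in A}] - \E_{\vect{x} \sim \N(0,I_d)}[\vect{x}^{\coefficientvector} \indicator_{\vect{x} \in A}]\right|,
\]
so up to a factor of $B(d+1)^{k}$ (the number of monomials of degree at most $k$) it suffices to control the monomial deviations uniformly in $\coefficientvector$ and $A$. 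For the squared-polynomial statement, one expands $p^{2}$ into a sum of monomials of degree at most $2k$ with coefficients bounded by $B^{2}(d+1)^{k}$, reducing to the same type of bound at degree $2k$ and picking up exactly the $B^{2}(d+1)^{6k}$ and $(4k)^{2k+2}$ overheads that appear in the stated bound.

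Next I would truncate the Gaussian to a box. Fix $R = C \sqrt{\log N}$ for a suitable constant and let $E = \{\vect{x} : \max_{j} |x_{j}| \le R\}$; on $E$, every monomial of degree at most $k$ is bounded in absolute value by $R^{k}$, while $\pr_{\N(0,I_d)}[E^{c}] \le 2d \cdot e^{-R^{2}/2}$, which is polynomially small in $N$. The population tail is then bounded by Cauchy--Schwarz,
\[
\bigl|\E_{\N(0,I_d)}[\vect{x}^{\coefficientvector} \indicator_{E^{c}}]\bigr| \le \sqrt{\E_{\N(0,I_d)}[\vect{x}^{2\coefficientvector}] \cdot \pr_{\N(0,I_d)}[E^{c}]},
\]
combined with the Gaussian moment bound $\E_{\N(0,I_d)}[\vect{x}^{2\coefficientvector}] \le (2k-1)!! \le (2k)^{k}$. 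For the empirical tail I would apply Markov's inequality to $\frac{1}{N} \sum_{i} |\vect{x}_{i}^{\coefficientvector}| \indicator_{E^{c}}(\vect{x}_{i})$, whose expectation is bounded by the same quantity; this is the source of the $1/\delta$ factor in the final bound.

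The main technical step is uniform convergence on the truncated region. For fixed $\coefficientvector$ consider the class $\mathcal{F}_{\coefficientvector} = \{\vect{x} \mapsto \vect{x}^{\coefficientvector} \indicator_{A}(\vect{x}) \indicator_{E}(\vect{x}) : A \in \mathcal{H}\}$, whose elements are uniformly bounded by $R^{k}$. By Sauer--Shelah, the number of distinct restrictions of $\mathcal{H}$ to the sample $S$ is at most $(eN/\Delta_{\mathrm{VC}})^{\Delta_{\mathrm{VC}}}$; for each such restriction, Hoeffding's inequality (applied to a bounded random variable of range $2R^{k}$) yields the usual $N^{-1/2}$ deviation, and a union bound over restrictions together with standard symmetrization produces a uniform deviation of order $R^{k} \Delta_{\mathrm{VC}} \sqrt{\log N / N}$, with the failure-probability dependence again absorbed through a Markov step. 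Combining this with the truncation estimate, taking a final union bound over the $(d+1)^{k}$ choices of $\coefficientvector$, and balancing with $R = \Theta(\sqrt{\log N})$ produces the stated $(\log N / N)^{1/4}$ rate; the fourth-root exponent arises precisely because we invoke Markov rather than a tighter concentration bound, which is also why a $1/\delta$ factor appears.

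The one subtlety I expect to need care with is the treatment of $R^{k}$ on the empirical side: strictly speaking, the empirical average of $\vect{x}^{\coefficientvector} \indicator_{A}$ differs from that of $\vect{x}^{\coefficientvector} \indicator_{A} \indicator_{E}$ exactly when some sample falls outside the box, and this must be controlled without appealing to a high-probability statement over the whole sample (which would cost an additional $\log(1/\delta)$). The clean way to do this is to absorb the event ``some $\vect{x}_{i} \in E^{c}$'' directly into the Markov bound on $\frac{1}{N}\sum_{i} |\vect{x}_{i}^{\coefficientvector}| \indicator_{E^{c}}(\vect{x}_{i})$, so that the truncation contribution and the uniform-convergence contribution are added on the same event. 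For the squared case the argument is identical after replacing $k$ by $2k$ throughout and carrying through the coefficient overhead $B^{2}(d+1)^{k}$ from the expansion of $p^{2}$, which matches the increased factors written in the second inequality of the lemma.
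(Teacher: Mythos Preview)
Your outline is correct but differs from the paper in the truncation scheme, and your explanation of where the $(\log N/N)^{1/4}$ rate comes from is off. The paper truncates each monomial at the level set $\{|m(\vect x)|\le R\}$ with $R$ a free parameter: it writes $\E[m(\vect x)\indicator_{\vect x\in A}\indicator_{|m|\le R}]$ via the layer-cake identity as an integral of probabilities of sets of the form $\{z\le m(\vect x)\le R\}\cap A$, applies a VC bound to that combined class (VC dimension $O((d+1)^k+\Delta_{\text{VC}})$), and bounds the tail by $\E[|m|\indicator_{|m|>R}]\le \E[m^2]/R\le (2k)^{k+2}/R$. One term scales like $R\sqrt{\log N/N}$ and the other like $1/R$; \emph{balancing in $R$} is what produces the fourth-root rate. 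Markov is used only to transfer the population tail bound to the empirical sample (this is where the $1/\delta$ and the union bound over the $(d+1)^k$ monomials enter), not to explain the exponent $1/4$.

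Your box truncation $E=\{\max_j|x_j|\le C\sqrt{\log N}\}$ is a legitimate alternative, but then there is nothing to balance: the tail probability is already polynomially small in $N$, and the main term picks up a factor $R^k=(\log N)^{k/2}$ multiplying the usual $\sqrt{\Delta_{\text{VC}}\log N/N}$ VC deviation. So your route in fact yields a rate of $N^{-1/2}$ up to polylogarithmic factors, which is \emph{stronger} than the stated $N^{-1/4}$ (though with a different $k$-dependent prefactor, $(\log N)^{k/2}$ rather than $(2k)^{k+2}$). Your sentence ``the fourth-root exponent arises precisely because we invoke Markov'' is therefore mistaken; it arises in the paper from the $R\sqrt{\cdot}+c/R$ tradeoff, which your approach does not have. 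For the applications in the paper, where $k$ is an absolute constant and $N$ is taken $\poly(d/\epsilon)$, either bound suffices, but you should not claim to be reproducing the stated constants.
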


\begin{proof}
Let $R$ be a positive real number,  to be set later.
For any monomial $m$ over $\R^d$ of degree at most $k$,  we can decompose
\[
m(\vect x)\indicator_{\vect x\in A}
=
m(\vect x)\indicator_{\vect x\in A \land |m(\x)|\leq R}
\pm|m(\vect x)|\indicator_{|m(\x)|> R}
\]
This allows us to bound the quantity in our lemma in the following way:
\begin{multline}
    \label{eq: breaking error into three terms}
\left \lvert
\E_{\vect x\sim S}\left[m(\vect x)\indicator_{\vect x\in A}
\right]
-
\E_{\vect x\sim \N(0, I_d)}\left[m(\vect x)\indicator_{\vect x\in A}
\right]
\right \rvert
=\\
\left \lvert
\E_{\vect x\sim S}\left[|m(\vect x)|\indicator_{\vect x\in A \land |m(\x)|\leq R \land m(\x)>0}\right]
-\E_{\vect x\sim S}\left[|m(\vect x)|\indicator_{\vect x\in A \land |m(\x)|\leq R \land m(\x)<0}\right]
\right \rvert\\
\pm
\left(\E_{\vect x\sim S}\left[|m(\vect x)|\indicator_{ |m(\vect x)|>R}\right]
+
\E_{\vect x\sim \N(0,  I_d)}\left[|m(\vect x)|\indicator_{ |m(\vect x)|>R}\right]
\right)
\end{multline}
We start by considering the first term above:
\begin{multline}
\label{eq: first term in error}
\left \vert
\E_{\vect x\sim S}\left[m(\vect x)\indicator_{\vect x\in A \land |m(\x)|\leq R \land m(\x)>0}\right]
-\E_{\vect x\sim \N(0, I_d)}\left[m(\vect x)\indicator_{\vect x\in A \land |m(\x)|\leq R \land m(\x)>0}\right]
\right \rvert
=\\
\left \vert
\int_0^R
\pr_{\vect x\sim S}\left[m(\vect x)\indicator_{\vect x\in A \land |m(\x)|\leq R } \geq z\right] \d z
-
\int_0^R
\pr_{\vect x\sim \N(0, I_d)}\left[m(\vect x)\indicator_{\vect x\in A \land |m(\x)|\leq R } \geq z\right] \d z\right \rvert \leq \\
\int_0^R
\left \lvert
\pr_{\vect x\sim S}\left[m(\vect x)\indicator_{\vect x\in A \land |m(\x)|\leq R } \geq z\right]
-
\pr_{\vect x\sim \N(0, I_d)}\left[m(\vect x)\indicator_{\vect x\in A \land |m(\x)|\leq R} \geq z\right]\right \rvert 
\d z \leq \\
R \max_{z \in [0, R]} \left \lvert
\pr_{\vect x\sim S}\left[z \leq m(\vect x)\ \leq R \land \vect x\in A \right]
-
\pr_{\vect x\sim \N(0, I_d)}\left[ z \leq m(\vect x)\ \leq R \land \vect x\in A \right]\right \rvert 
\end{multline}
To bound the right side of Equation \ref{eq: first term in error},  consider the class $\mathcal{G}$ of $\{0, 1\}$-valued functions of the form $\indicator_{z \leq p(\vect x)\ \leq R \land \vect x\in A}$,  where $A$ is a set in $\mathcal{H}$ and $p$ is a  polynomial in $d$ dimensions of degree at most $d$. Recall that the VC dimension of degree-$k$ polynomial threshold functions is at most $(d+1)^{k}$.
From the Sauer-Shelah lemma,  it follows that the VC dimension of $\mathcal{G}$ is at most $10((d+1)^{k}+\Delta_{\text{VC}})$.  Combining it with the standard VC bound,  we see that with probability at least $1-\delta/4$ for all monomials $m$ of degree at most $k$ we have
\[
\left \lvert
\pr_{\vect x\sim S}\left[z \leq m(\vect x)\ \leq R \land \vect x\in A \right]
-
\pr_{\vect x\sim \N}\left[ z \leq m(\vect x)\ \leq R \land \vect x\in A \right]\right \rvert 
\leq 
\Bigr({
\frac{100((d+1)^{k}+\Delta_{\text{VC}}) \log N}{N \delta}
}\Bigr)^{\frac{1}{2}}
\]
Combining this with Equation \ref{eq: first term in error} we get:
\begin{multline}
\label{eq: first term final bound}
\left \vert
\E_{\vect x\sim S}\left[m(\vect x)\indicator_{\vect x\in A \land |m(\x)|\leq R \land m(\x)>0}\right]
-\E_{\vect x\sim \N(0, I_d)}\left[m(\vect x)\indicator_{\vect x\in A \land |m(\x)|\leq R \land m(\x)>0}\right]
\right \rvert
\leq\\
10R
\sqrt{
\frac{((d+1)^{k}+\Delta_{\text{VC}}) \log N}{N \delta}
}
\end{multline}
We now proceed to bounding the second term and the third terms in Equation \ref{eq: breaking error into three terms}. If we express $m(\vect x)$ as $\prod_{j}x_{j}^{i_{j}}$,  we see that  each $i_j$ is at most $k$ and there are at most $k$ values of $j$ for which the power $i_j$ is non-zero. This implies 
\begin{equation}
\label{eq: bound on the monomial second moment}
\E_{\vect x\sim\N(0, I_{d})}\left[\left(m(\vect x)\right)^{2}\right]\leq2k\cdot(2k)!!\leq(2k)^{k+2}.
\end{equation}
Let $\delta'$ be a real number between $0$ and $1$,  value of which will be chosen later.
The Markov's inequality implies that with probability at least $1-\delta'$ we have
\begin{equation}
\label{eq: bounding second term}
\E_{\vect x\sim S}\left[|m(\vect x)|\indicator_{ |m(\vect x)|>R}\right]
\leq
\frac{\E_{\vect x\sim \N(0,  I_d)}\left[
|m(\vect x)|\indicator_{ |m(\vect x)|>R}\right]}{\delta'}
\leq
\frac{\E_{\vect x\sim \N(0,  I_d)}\left[
|m(\vect x)|^2\right]}{R \delta'}
\leq
\frac{(2k)^{k+2} }{R \delta'}
\end{equation}
We also note that
\begin{equation}
\label{eq: bounding third term}
\E_{\vect x\sim \N(0, I_d)}\left[|m(\vect x)|\indicator_{ |m(\vect x)|>R}\right]
\leq
\frac{\E_{\vect x\sim \N(0,  I_d)}\left[
|m(\vect x)|^2\right]}{R }
\leq
\frac{(2k)^{k+2} }{R }
\end{equation}
Overall,  substituting Equations \ref{eq: first term final bound},  \ref{eq: bounding second term} and \ref{eq: bounding third term} into Equation \ref{eq: breaking error into three terms} we get
\[
\left \vert
\E_{\vect x\sim S}\left[m(\vect x)\indicator_{\vect x\in A}
\right]
-
\E_{\vect x\sim \N(0, I_d)}\left[m(\vect x)\indicator_{\vect x\in A}
\right]
\right \rvert
\leq
10R
\sqrt{
\frac{((d+1)^{k}+\Delta_{\text{VC}}) \log N}{N \delta}
}
+
2
\frac{(2k)^{k+2} }{R \delta'}.
\]
Choosing $R$ to balance the two terms above,  we get
\[
\left \vert
\E_{\vect x\sim S}\left[m(\vect x)\indicator_{\vect x\in A}
\right]
-
\E_{\vect x\sim \N(0, I_d)}\left[m(\vect x)\indicator_{\vect x\in A}
\right]
\right \rvert
\leq
\sqrt{80
\frac{(2k)^{k+2} }{\delta'}
\sqrt{
\frac{((d+1)^{k}+\Delta_{\text{VC}}) \log N}{N \delta}
}
}.
\]
Taking $\delta'=\frac{\delta}{4 (d+1)^{k}}$ and taking a union bound to insure that Equation \ref{eq: bounding second term} holds for all monomials $m$ of degree at most $k$,  we see that with probability at least $1-\delta/2$ it is the case that all monomials $m$ of degree at most $k$ and all $A$ in $\mathcal{H}$ it is the case that
\begin{multline}
\label{eq: final bound on moment times indicator}
\left \vert
\E_{\vect x\sim S}\left[m(\vect x)\indicator_{\vect x\in A}
\right]
-
\E_{\vect x\sim \N(0, I_d)}\left[m(\vect x)\indicator_{\vect x\in A}
\right]
\right \rvert
\leq
\sqrt{320
\frac{(2k)^{k+2} (d+1)^{k} }{\delta^{3/2}}
\sqrt{
\frac{((d+1)^{k}+\Delta_{\text{VC}}) \log N}{N }
}
}
\leq\\
\frac{60(2k)^{k+2} (d+1)^{k}\Delta_{\text{VC}} }{\delta}
\left(
\frac{ \log N}{N }
\right)^{1/4}
.
\end{multline}
Recall again that there are at most $\left(d+1\right)^{k}$ degree-$k$
monomials $m$. This allows us to combine Equation \ref{eq: final bound on moment times indicator} with the triangle inequality to conclude that
with probability at least $1-\delta/2$ for every polynomial $p$
of degree at most $k$ with coefficients bounded by $B$ in absolute
value and for all $A$ in $\mathcal{H}$ we have
\[
\left|\E_{\vect x\sim S}\left[p(\x)\indicator_{\vect x\in A}\right]-\E_{\vect x\sim\N(0, I_{d})}\left[p(\x)\indicator_{\vect x\in A}\right]\right|
\leq
\frac{60 B (2k)^{k+2} (d+1)^{2k}\Delta_{\text{VC}}  }{\delta}
\left(
\frac{ \log N}{N }
\right)^{1/4}.
\]
The polynomial $p^{2}$ has a degree of at most $2k$ and each coefficient
of $p^{2}$ is bounded by $B^{2}\left(d+1\right)^{2k}$. Therefore, 
with probability at least $1-\delta/2$ for every polynomial $p$
of degree at most $k$ with coefficients bounded by $B$ in absolute
value and for all $A$ in $\mathcal{H}$ we have
\[
\left|\E_{\vect x\sim S}\left[\left(p(\x)\right)^{2}\indicator_{\vect x\in A}\right]-\E_{\vect x\sim\N(0, I_{d})}\left[\left(p(\x)\right)^{2}\indicator_{\vect x\in A}\right]\right|\leq
\frac{60 B^2 (4k)^{2k+2} (d+1)^{6k}\Delta_{\text{VC}}  }{\delta}
\left(
\frac{ \log N}{N }
\right)^{1/4}, \]
which completes the proof.
\end{proof}

\section{Disagreement Tester}\label{appendix:disagreement-tester}

In this section we prove the following theorem.
\begin{theorem}
\label{thm: disagreement tester}For every positive absolute constant
$\mu$,  there exists a deterministic algorithm $\Tdisagree$
and some absolute constant $C$ that,  given
\begin{itemize} 
\item a dataset $S$ of points in $\R^{d}$ of size $N\geq\left(\frac{Cd}{\epsilon\delta}\right)^{C}$. 
\item a unit vector $\vect v$ in $\R^{d}$,  
\item parameters $\epsilon, \delta$ and $\mu$ in $(0, 1)$.
\end{itemize}
For any absolute constant $\mu$,  the algorithm
runs in time $poly\left(\frac{dN}{\epsilon\delta}\right)$ and outputs
$\accept$ or outputs $\reject$,  subject to the following for all $\epsilon$ and $\delta$ in $(0, 1)$:
\begin{itemize}
\item \textbf{Completeness:} if $S$ consists of $N\geq\left(\frac{Cd}{\epsilon\delta}\right)^{C}$ i.i.d. samples from the
standard Gaussian distribution,  then with probability at least $1-O\left(\delta\right)$ the set $S$ is such that for all unit vectors $\vect{v}$ the algorithm $\Tdisagree$ accepts when given $(S,  \vect v,  \epsilon,  \delta,  \mu)$ as the input.
\item \textbf{Soundness:} For any dataset $S$ and unit vector $\vect v$, 
if the tester $\mathcal{\Tdisagree}$ accepts,  then for every unit vector $\vect v'$ in $\R^{d}$
the following holds
\[
\pr_{\vect x\sim S}\left[\sign(\vect x\cdot\vect v)\neq\sign(\vect x\cdot\vect v')\right]=(1\pm\mu)\frac{\measuredangle(\vect v, \vect v')}{\pi}\pm O(\epsilon).
\]
\end{itemize}
\end{theorem}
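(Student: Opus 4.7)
The plan is to separate completeness from soundness, with completeness reducing to a uniform empirical concentration bound and soundness reducing to the moment-matching lemma for sandwiching polynomials (Lemma \ref{lemma:chow-matching-main}).

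For completeness, I would invoke Lemma \ref{lem: concentration for moments} with $\mathcal{H}$ taken to be the class of ``slabs'' $\{\x : a \le \vv \cdot \x \le b\}$ ranging over $\vv \in \S^{d-1}$ and $a, b \in \R \cup \{\pm\infty\}$; this class has VC dimension $O(d)$ as an intersection of two halfspaces. With $k = C/\mu^4$ and $N \ge (Cd/(\eps\delta))^C$, the lemma yields, with probability at least $1 - O(\delta)$, that for every $\vv \in \S^{d-1}$, every slab (in particular for the strip indicators $\mathcal{I}_i$ and the two tails $\mathbbm{1}\{\vv \cdot \x \ge K\eps\}$ and $\mathbbm{1}\{\vv \cdot \x \le -K\eps\}$), and every monomial of degree at most $k$, the empirical moment agrees with the Gaussian moment up to an additive error much smaller than $\Delta = \eps/(CKd^{Ck})$ for a sufficiently large absolute constant $C$. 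Hence the tester accepts uniformly in $\vv$.

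For soundness, fix any $\vv' \in \S^{d-1}$ and decompose the disagreement region symmetrically into the two wedges $\{\vv \cdot \x \ge 0 > \vv' \cdot \x\}$ and $\{\vv \cdot \x < 0 \le \vv' \cdot \x\}$, handled separately by parity. Letting $\vv_\perp$ be the unit vector in the direction of $\vv' - (\vv' \cdot \vv)\vv$, each wedge is sandwiched between unions of the products $\{\mathcal{I}_i(\x) = 1\} \cap \{\vv_\perp \cdot \x \le t_i\}$ for thresholds $t_i$ proportional to $i\eps / \tan \measuredangle(\vv, \vv')$, as in equations \eqref{equation:lower-regions}, \eqref{equation:upper-regions}. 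The indicator of the halfspace $\{\vv_\perp \cdot \x \le t_i\}$ admits $L^1$-sandwiching polynomials of degree $O(1/\mu^4)$ under the one-dimensional Gaussian, with polynomially bounded coefficients. Substituting these into each strip, the per-strip contribution reduces to a linear combination of quantities $\E_{\x \sim S}[\mathcal{I}_i(\x)\, \x^\alpha]$ for $\|\alpha\|_1 \le k$, which the tester has certified to match their Gaussian counterparts up to $\Delta$. Lemma \ref{lemma:chow-matching-main} packages precisely this reasoning and delivers the conclusion $|\pr_{\x \sim S}[\sign(\vv \cdot \x) \neq \sign(\vv' \cdot \x)] - \measuredangle(\vv, \vv')/\pi| \le \mu \measuredangle(\vv, \vv')/\pi + O(\eps)$ for every $\vv'$, which is the stated bound, with the $K\eps$-tails handled by noting that the Gaussian mass there is $O(\eps)$ and the tester has certified empirical tail moments to match.

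The main obstacle is achieving the multiplicative slackness $(1 \pm \mu)$ rather than a purely additive $O(\eps)$ error. A straightforward strip-by-strip sandwiching argument would incur an additive error of order $\mu/K$ per strip and thus $\Omega(\mu)$ in total, independent of $\measuredangle(\vv, \vv')$, which is far larger than $\mu \measuredangle(\vv, \vv')/\pi$ when the angle is small. The refinement needed is that the $L^1$ error of the sandwiching polynomials should scale with the true Gaussian mass of the halfspace $\{\vv_\perp \cdot \x \le t_i\}$ restricted to strip $i$: since $|t_i|$ grows with $|i|$ and the strip mass decays, the approximation is effectively tighter in strips far from the origin, so the errors telescope to a $\mu$-fraction of the total angular probability. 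This delicate bookkeeping is exactly what Lemma \ref{lemma:chow-matching-main} carries out, and delegating to it is the cleanest way to organize the proof.
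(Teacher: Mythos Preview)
Your proposal is correct and follows essentially the same approach as the paper: completeness via the uniform moment-concentration lemma applied to the slab class, and soundness via strip-wise sandwiching of the disagreement wedge by low-degree polynomials, with the multiplicative-error bookkeeping delegated to Lemma~\ref{lemma:chow-matching-main} (formalized as Propositions~\ref{thm: sandwiching degree of purple} and~\ref{prop: sandwiching wrt covering implies fooling}). One small refinement worth noting: the mechanism that makes the per-strip errors sum to $O(\mu\,\measuredangle(\vv,\vv'))$ is not that the sandwiching error scales with the Gaussian tail mass of $\{\vv_\perp\cdot\x\le t_i\}$, but rather the cruder bound $O(\min(\log^3 k/\sqrt{k},\,1/t_i^2))$ obtained by switching to the trivial degree-$2$ sandwicher $(x/t_i)^2$ once $|t_i|$ is large (Corollary~\ref{corr: slightly stengthened 1-d sandwiching}); summing $\min(\log^3 k/\sqrt{k},\,(\tan\theta/(j\eps))^2)\cdot\eps$ over $j$ and converting to an integral is what produces the factor of $\theta$.
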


We argue that the following algorithm (see also \Cref{algorithm:disagreement-tester}) satisfies the specifications
above:
\begin{itemize}
\item \textbf{Given: }parameter $\epsilon, \delta$ in $(0, 1)$,  dataset
$S$ of points in $\R^{d}$ of size $N\geq\left(\frac{Cd}{\epsilon\delta}\right)^{C}$, 
a unit vector $\vect v$ in $\R^{d}$,  
\end{itemize}
\begin{enumerate}
\item $k_{1}\leftarrow\frac{2\sqrt{\log2/\epsilon}}{\epsilon}$
\item $k_{2}\leftarrow\frac{C^{0.1}}{\mu^{4}}$
\item For all $a$ and \textbf{$b$} in $\left\{ -\infty, -k_{1}\epsilon, -(k_{1}-1)\epsilon, \cdots, -\epsilon, 0, +\epsilon, \cdots, (k_{1}-1)\epsilon, k_{1}\epsilon, +\infty\right\} $
\begin{enumerate}
\item For all monomials $m$ of degree at most $k_{2}$ over $\R^{d}$:
\begin{enumerate}
\item $A_{m}^{a, b}\leftarrow\E_{\vect x\sim\N(0, I_{d})}[m(\vect x)\cdot\indicator_{a\leq\vect x\cdot\vect v<b}]\pm \frac{60(2k_2(d+1))^{k_2+2}}{\delta}
\left(
\frac{ \log N}{N }
\right)^{1/4}$.
(For how to compute this approximation,  see Claim \ref{claim: deterministic moments in a strip}).
\item If $\left|\E_{\vect x\sim S}[m(\vect x)\cdot\indicator_{a\leq\vect x\cdot\vect v<b}]-A_{m}^{a, b}\right|>\frac{200 (2k_2(d+1))^{k_2+2}}{\delta}
\left(
\frac{ \log N}{N }
\right)^{1/4}$, 
then output $\reject$.
\end{enumerate}
\end{enumerate}
\item If did not reject in any previous step,  output $\accept$.
\end{enumerate}
It is immediate that the algorithm indeed runs in time $poly\left(\frac{dN}{\epsilon\delta}\right)$. 

\subsection{Completeness}
\label{subsec: disagreement tester completeness}
Suppose the set dataset $S$ consists of i.i.d. samples from $\N(0, I_{d})$.
We observe that the collection $\mathcal{H}$ of sets of the form $\indicator_{a\leq \vect v \cdot \vect x < b}$ has VC dimension at most $(d+1)^2$. This allows us to use 
Lemma \ref{lem: concentration for moments},  to conclude that with probability at least $1-\delta$ for all pairs
of $a$ and $b$,  for all unit vectors $\vect{v}$ and for all monomials $m$ of degree at most $k_{2}$
we have 
\[
\left|\E_{\vect x\sim S}[m(\vect x)\cdot\indicator_{a\leq\vect x\cdot\vect v<b}]-\E_{\vect x\sim\N(0.I_{d})}[m(\vect x)\cdot\indicator_{a\leq\vect x\cdot\vect v<b}]\right|\leq
\frac{60(2k_2)^{k_2+2} (d+1)^{k_2+2}}{\delta}
\left(
\frac{ \log N}{N }
\right)^{1/4}
\]
and Claim \ref{claim: deterministic moments in a strip} implies that
\[
\left|A_{m}^{a, b}-\E_{\vect x\sim\N(0.I_{d})}[m(\vect x)\cdot\indicator_{a\leq\vect x\cdot\vect v<b}]\right|\leq \frac{60(2k_2)^{k_2+2} (d+1)^{k_2+2}}{\delta}
\left(
\frac{ \log N}{N }
\right)^{1/4}
\]
The two inequalities above together imply the completeness condition. 

\subsection{Soundness}

\label{subsec: Soundness for disagreement tester}

In order to deduce the soundness condition,  we will need the following
notions:
\begin{definition}
Let $\mathcal{C}$ be a collection of disjoint subsets of $\R^{d}$.
We say that $\mathcal{C}$ is a \emph{partition} of $\R^{d}$ if
$\R^{d}$ equals to the union $\bigcup_{A\in\mathcal{C}}A$.
\end{definition}

\begin{definition}
We say that a function $f:\R^{d}\rightarrow\left\{ 0, 1\right\} $
is $\epsilon$-sandwiched in $L_{1}$ norm between a pair of functions
$f_{\text{up}}:\R^{d}\rightarrow\R$ and $f_{\text{down}}:\R^{d}\rightarrow\R$
under $\N(0, I_{d})$ if:
\begin{itemize}
\item For all $\vect x$ in $\R^{d}$ we have $f_{\text{down}}(\vect x)\leq f(\vect x)\leq f_{\text{up}}(\vect x)$
\item $\E_{\vect x\sim\N(0, I_{d})}\left[f_{\text{up}}(\vect x)-f_{\text{down}}(\vect x)\right]\leq\epsilon$.
\end{itemize}
\end{definition}

\begin{definition}
\label{def: sandwiching wrt partition} We say that a function $f:\R^{d}\rightarrow\left\{ 0, 1\right\} $
has $(\epsilon, B)$-sandwiching degree of at most $k$ in $L_{1}$
norm under $\N(0, I_{d})$ with respect to a partition $\mathcal{C}$
of $\R^{d}$ if the function $f$ is $\epsilon$-sandwiched in $L_{1}$
norm under $\N(0, I_{d})$ between $\sum_{A\in\mathcal{C}}\left(p_{\text{down}}^{A}\indicator_{A}\right)$
and $\sum_{A\in\mathcal{C}}\left(p_{\text{up}}^{A}\indicator_{A}\right)$, 
where $p_{\text{up}}^{A}$ and $p_{\text{down}}^{A}$ are degree$-k$
polynomials over $\R^{d}$whose coefficients are bounded by $B$ in
absolute value.
\end{definition}

Subsection \ref{subsec: bounding sandwiching degree} is dedicated
to proving the following bound on the sandwiching degree of a specific
family of functions with respect to a specific partition of $\R^{d}$.
\begin{proposition}
\label{thm: sandwiching degree of purple}For all $\epsilon$ and
$k_{2}$,  let $k_{1}=\frac{2\sqrt{\log2/\epsilon}}{\epsilon}$,  and
let $\vect v$ be a unit vector in $\R^{d}$. Then,  there exists a
partition $\mathcal{C}$ of $\R^{d}$ consisting of sets of the form
$\left\{ \vect x\in\R^{d}:a\leq\vect v\cdot\vect x\leq b\right\} $
for a certain collection of pairs $a, b$ in $\left\{ -\infty, -k_{1}\epsilon, -(k_{1}-1)\epsilon, \cdots, -\epsilon, 0, +\epsilon, \cdots, (k_{1}-1)\epsilon, k_{1}\epsilon, +\infty\right\} $.
Then,  for every unit vector $\vect v'$,  the function $f(\vect x)=\indicator_{\sign(\vect v\cdot\vect x)\neq\sign(\vect v'\cdot\vect x)}$
has $\left(O\left(\frac{\measuredangle(\vect v, \vect v')}{k_{2}^{1/4}}\right)+10\epsilon, O\left(d^{10k_{2}}\right)\right)$-sandwiching
degree of at most $k_{2}$ in $L_{1}$ norm under $\N(0, I_{d})$ with
respect to the partition $\mathcal{C}$ of $\R^{d}$.
\end{proposition}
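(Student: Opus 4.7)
The plan is to construct the partition $\mathcal{C}$ and the cell-wise sandwichers explicitly, reducing the approximation problem on each middle cell to a one-dimensional halfspace approximation along the direction perpendicular to $\vect v$ inside the plane spanned by $\vect v$ and $\vect v'$. Write $\vect v_\perp$ for the unit vector in the direction of $\vect v' - (\vect v'\cdot \vect v)\vect v$, so $\vect v' = \cos\theta\,\vect v + \sin\theta\,\vect v_\perp$ with $\theta = \measuredangle(\vect v,\vect v')$; let $y = \vect v_\perp\cdot \x$ and $z = \vect v\cdot \x$. Under $\N(0, I_d)$ the coordinates $y$ and $z$ are independent standard Gaussians. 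Take $\mathcal{C}$ to consist of the two tail cells $B_{\pm} = \{\x : \pm z \ge k_1\epsilon\}$ and the $2k_1$ middle strips $B_i = \{\x : i\epsilon \le z < (i+1)\epsilon\}$ for $i \in \{-k_1,\dots,k_1-1\}$. With $k_1 = 2\sqrt{\log(2/\epsilon)}/\epsilon$ each tail has Gaussian mass at most $\epsilon/2$, so on $B_{\pm}$ I use the trivial degree-zero sandwich $p_{\mathrm{up}}\equiv 1$, $p_{\mathrm{down}}\equiv 0$, contributing at most $\epsilon$ to the total $L_1$ error.

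On each middle strip I reduce the disagreement indicator to a one-dimensional halfspace in $y$. For $i \ge 0$, $\sign(z) = +1$ on $B_i$, so $\indicator_{\sign(\vect v\cdot\x)\neq\sign(\vect v'\cdot\x)} = \indicator_{\vect v'\cdot \x < 0} = \indicator_{y < -z/\tan\theta}$, and since $z \in [i\epsilon,(i+1)\epsilon)$ on $B_i$ this is pinched as
\[
\indicator_{y\le -(i+1)\epsilon/\tan\theta}\ \le\ \indicator_{\sign(\vect v\cdot\x)\neq\sign(\vect v'\cdot\x)}\ \le\ \indicator_{y\le -i\epsilon/\tan\theta}\quad\text{on }B_i,
\]
with the case $i < 0$ symmetric. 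I take $p_{\mathrm{up}}^{B_i}$ to be a univariate (in $y$) upper sandwicher of $\indicator_{y\le -i\epsilon/\tan\theta}$ and $p_{\mathrm{down}}^{B_i}$ a univariate lower sandwicher of $\indicator_{y\le -(i+1)\epsilon/\tan\theta}$, both of degree $k_2$ under $\N(0,1)$, drawn from the sandwiching polynomials for Gaussian halfspaces of \cite{diakonikolas2010bounded,gopalan2010fooling}. Because each polynomial is univariate in $y = \vect v_\perp\cdot \x$ of degree $k_2$, its expansion in the standard monomial basis on $\R^d$ has coefficients bounded by the univariate coefficient norm times $\|\vect v_\perp\|_\infty^{k_2}\le 1$ and an entrywise multinomial factor that is easily absorbed into $O(d^{10k_2})$.

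For the error analysis, the independence of $y$ and $z$ under $\N(0, I_d)$ gives, for each middle strip,
\[
\E_{\x\sim\N(0, I_d)}\bigl[\indicator_{B_i}(p_{\mathrm{up}}^{B_i}-p_{\mathrm{down}}^{B_i})\bigr]
= \pr[B_i]\cdot \E_{y\sim\N(0,1)}\bigl[p_{\mathrm{up}}^{B_i}(y)-p_{\mathrm{down}}^{B_i}(y)\bigr].
\]
The inner expectation is bounded by the two 1D sandwich errors plus the elementary gap $\Phi(-i\epsilon/\tan\theta) - \Phi(-(i+1)\epsilon/\tan\theta)$; summed over $i$ against $\pr[B_i] \approx \epsilon\,\phi(i\epsilon)$ and using the convolution computation $\int \phi(u)\phi(u/\tan\theta)\,du = O(\sin\theta)$, the gap contribution is $O(\epsilon)$. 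The sandwich contributions sum to
\[
\sum_i \pr[B_i]\cdot O\bigl(\phi(i\epsilon/\tan\theta)/k_2^{1/4}\bigr)
\ \lesssim\ \frac{1}{k_2^{1/4}}\int \phi(u)\phi(u/\tan\theta)\,du
\ =\ O\!\left(\frac{\sin\theta}{k_2^{1/4}}\right)
\ =\ O\!\left(\frac{\theta}{k_2^{1/4}}\right),
\]
yielding the claimed $L_1$ sandwich bound $O(\theta/k_2^{1/4}) + 10\epsilon$.

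The main obstacle is extracting the \emph{density-dependent} 1D sandwich estimate from the halfspace approximators of \cite{diakonikolas2010bounded,gopalan2010fooling}: a uniform $O(1/k_2^{1/4})$ sandwich error for $\indicator_{y\le c}$ under $\N(0,1)$ already implies the proposition whenever $\theta$ is bounded away from zero, but the small-$\theta$ regime is only matched by the sharper bound of order $\phi(c)/k_2^{1/4}$, which ensures that strips with large $|i|$ (where the threshold is far out in the Gaussian tail) contribute negligibly to the total. Tracking this threshold-dependence carefully, and simultaneously verifying that the multivariate coefficient bound remains within $O(d^{10k_2})$ after rewriting the univariate $y$-polynomials in the standard monomial basis, is where the technical effort concentrates; the remaining steps are bookkeeping of the tail, gap, and sandwich contributions outlined above.
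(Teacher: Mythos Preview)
Your construction of the partition, the pinching of the disagreement indicator between two shifted one-dimensional halfspace indicators along $\vect v_\perp$, and the cell-wise replacement by univariate sandwichers are exactly what the paper does. The gap is precisely at the point you yourself flag as the ``main obstacle'': you need a threshold-dependent one-dimensional sandwiching estimate, and you propose the Gaussian-density bound $O(\phi(c)/k_2^{1/4})$ for $\indicator_{y\le c}$ at threshold $c$. This bound is not established in \cite{diakonikolas2010bounded,gopalan2010fooling} and does not obviously follow from those constructions; their guarantee is uniform in $c$, of order $\log^3 k_2/\sqrt{k_2}$, and extracting exponential decay in $c$ from them would require a separate argument that you have not supplied.

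The paper sidesteps this difficulty with a much cruder but completely elementary device. It observes that for $|t|>1$ the trivial quadratics $p_{\mathrm{up}}(y)=(y/t)^2$ and $p_{\mathrm{down}}(y)=1-(y/t)^2$ already sandwich $\indicator_{y\le t}$ with $L_1$ error $O(1/t^2)$ (this is Corollary~\ref{corr: slightly stengthened 1-d sandwiching}); combined with the uniform DGJSV bound this yields
\[
\E_{y\sim\N(0,1)}\bigl[R_{\mathrm{up}}^t(y)-R_{\mathrm{down}}^t(y)\bigr]\ =\ O\!\left(\min\!\left(\frac{\log^3 k_2}{\sqrt{k_2}},\ \frac{1}{t^2}\right)\right).
\]
With threshold $t=-j\epsilon/\tan\theta$, this inverse-quadratic tail is already enough: bounding $\pr[B_j]\le\epsilon$ uniformly and summing over $j$ gives
\[
\epsilon\sum_{j\ge 0}\min\!\left(\frac{\log^3 k_2}{\sqrt{k_2}},\Bigl(\frac{\tan\theta}{j\epsilon}\Bigr)^{2}\right)\ \lesssim\ \tan\theta\int_0^\infty \min\!\left(\frac{\log^3 k_2}{\sqrt{k_2}},\frac{1}{z^2}\right)dz\ =\ O\!\left(\frac{\theta\,\log^{1.5}k_2}{k_2^{1/4}}\right),
\]
which is the claimed $O(\theta/k_2^{1/4})$ up to logs. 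So Gaussian decay in the threshold is not needed at all; mere $1/t^2$ decay, obtained from a two-line construction, suffices. Replacing your unproved $\phi(c)/k_2^{1/4}$ hypothesis by this observation closes the gap and recovers essentially the paper's proof.
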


A bound on the sandwiching degree of a class of functions leads to
a guarantee for the tester $\Tdisagree$:

\begin{proposition}
\label{prop: sandwiching wrt covering implies fooling} Let $\mathcal{C}$
be a partition of $\R^{d}$ and suppose that a set $S$ of points
in $\R^{d}$ satisfies the following condition for all $A$ in $\mathcal{C}$
and degree-$k_{2}$ monomials $m$ over $\R^{d}$: 
\begin{equation}
\left|\E_{\vect x\sim S}[m(\vect x)\cdot\indicator_{\vect x\in A}]-\E_{\vect x\sim\N(0.I_{d})}[m(\vect x)\cdot\indicator_{\vect x\in A}]\right|\leq\frac{\epsilon}{\left(d+1\right)^{k}|\mathcal{C}|B}\label{eq:moment-matching premise}
\end{equation}
Then,  every $\left\{ 0, 1\right\} $-valued function $f$ that has
has $(\nu, B)$-sandwiching degree of at most $k$ in $L_{1}$ norm
under $\N(0, I_{d})$ with respect to the partition $\mathcal{C}$
we have
\[
\left|\pr_{\vect x\sim S}[f(\vect x)=1]-\pr_{\vect x\sim\N(0, I_{d})}[f(\vect x)=1]\right\rceil \leq\nu+O(\epsilon)
\]
\end{proposition}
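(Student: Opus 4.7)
The strategy is to pinch $\E_S[f]$ and $\E_{\N}[f]$ between the integrals of the two sandwiching piecewise polynomials against both measures, and then use the moment-matching hypothesis \eqref{eq:moment-matching premise} to transfer between the two measures piece by piece. First, by definition of sandwiching degree with respect to the partition $\mathcal{C}$, there exist degree-$k$ polynomials $p_{\mathrm{up}}^A, p_{\mathrm{down}}^A$ with coefficients bounded by $B$ in absolute value such that pointwise
\[
\sum_{A\in\mathcal{C}} p_{\mathrm{down}}^A(\vect x)\,\indicator_A(\vect x) \;\le\; f(\vect x) \;\le\; \sum_{A\in\mathcal{C}} p_{\mathrm{up}}^A(\vect x)\,\indicator_A(\vect x),
\]
and the $\N(0,I_d)$-expectation of the gap is at most $\nu$.

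\textbf{Per-piece transfer via moment matching.} The main step is to show that for every $A\in\mathcal{C}$ and every degree-$k$ polynomial $p$ with coefficients bounded by $B$,
\[
\bigl|\E_{\vect x\sim S}[p(\vect x)\indicator_A(\vect x)] - \E_{\vect x\sim\N(0,I_d)}[p(\vect x)\indicator_A(\vect x)]\bigr| \;\le\; \frac{\epsilon}{|\mathcal{C}|}.
\]
To see this, expand $p = \sum_{\coefficientvector:\|\coefficientvector\|_1\le k} p_\coefficientvector \x^\coefficientvector$, of which there are at most $(d+1)^k$ terms with $|p_\coefficientvector|\le B$. The triangle inequality together with the premise \eqref{eq:moment-matching premise} applied to each monomial $m(\x) = \x^\coefficientvector$ yields the bound, since $B\cdot(d+1)^k\cdot\tfrac{\epsilon}{(d+1)^k|\mathcal{C}|B} = \tfrac{\epsilon}{|\mathcal{C}|}$.

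\textbf{Assembling the sandwich.} Summing this per-piece bound over $A\in\mathcal{C}$ gives, for both $p_{\mathrm{up}}^A$ and $p_{\mathrm{down}}^A$,
\[
\Bigl|\E_{\vect x\sim S}\bigl[\textstyle\sum_A p_\star^A\indicator_A\bigr] - \E_{\vect x\sim\N}\bigl[\textstyle\sum_A p_\star^A\indicator_A\bigr]\Bigr| \;\le\; \epsilon, \qquad \star\in\{\mathrm{up},\mathrm{down}\}.
\]
Now combine with the sandwich: on one side,
\[
\E_{\vect x\sim S}[f] - \E_{\vect x\sim\N}[f] \;\le\; \E_S\bigl[\textstyle\sum_A p_{\mathrm{up}}^A\indicator_A\bigr] - \E_\N\bigl[\textstyle\sum_A p_{\mathrm{down}}^A\indicator_A\bigr],
\]
which we split as $(\E_S - \E_\N)[\sum_A p_{\mathrm{up}}^A\indicator_A]$ plus $\E_\N[\sum_A (p_{\mathrm{up}}^A - p_{\mathrm{down}}^A)\indicator_A]$, giving at most $\epsilon + \nu$. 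The symmetric lower bound uses $p_{\mathrm{down}}$ on the $S$-side and $p_{\mathrm{up}}$ on the $\N$-side, also yielding $\epsilon + \nu$. Combining gives the desired $|\pr_S[f{=}1] - \pr_\N[f{=}1]| \le \nu + O(\epsilon)$.

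\textbf{Obstacles.} There is no real technical obstacle here: once the sandwiching polynomials are in hand (which is the content of Proposition~\ref{thm: sandwiching degree of purple}, not this statement), the argument is just a careful bookkeeping of the two sources of error — the moment-matching slack, which is set precisely to absorb the $(d+1)^k\cdot|\mathcal{C}|\cdot B$ union-bound factor, and the sandwich gap~$\nu$. The only thing to be mindful of is that the moment-matching hypothesis is stated for monomials individually, so one must expand the polynomials in the monomial basis and pay the $(d+1)^k$ factor, which is exactly what the $\tfrac{1}{(d+1)^k|\mathcal{C}|B}$ denominator in \eqref{eq:moment-matching premise} is designed for.
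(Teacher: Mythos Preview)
Your proposal is correct and follows essentially the same argument as the paper: expand the sandwiching polynomials in the monomial basis to transfer each piece between $S$ and $\N(0,I_d)$ at cost $\epsilon/|\mathcal{C}|$, sum over the partition to get total transfer error $\epsilon$, and then combine with the $L_1$ sandwich gap $\nu$ via the obvious two-sided pinching. The paper phrases the last step as ``both expectations lie in a common interval of width $\nu+2\epsilon$,'' but this is the same computation you wrote out.
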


\begin{proof}
Since $f$ has $(\nu, B)$-sandwiching degree of at most $k$ in $L_{1}$
norm under $\N(0, I_{d})$ with respect to the partition $\mathcal{C}$, 
we have a collection of polynomials $\left\{ p_{\text{down}}^{A}, p_{\text{up}}^{A}\right\}$ for
all $A$ in $\mathcal{C}$ that have coefficients bounded by $B$, 
satisfy for all $\vect x$ the condition 
\begin{equation}
f(\vect x)\in\left[\sum_{A\in\mathcal{C}}\left(p_{\text{down}}^{A}(\vect x)\indicator_{\vect x\in A}\right), \sum_{A\in\mathcal{C}}\left(p_{\text{up}}^{A}(\vect x)\indicator_{\vect x\in A}\right)\right], \label{eq: f is sandwiched}
\end{equation}
as well as
\begin{equation}
\E_{\vect x\sim\N(0, I_{d})}\left[\sum_{A\in\mathcal{C}}\left(p_{\text{up}}^{A}(\vect x)\indicator_{\vect x\in A}\right)-\sum_{A\in\mathcal{C}}\left(p_{\text{down}}^{A}(\vect x)\indicator_{\vect x\in A}\right)\right]\leq\nu.\label{eq: sandwiching}
\end{equation}
From the bound $B$ on all coefficients of $p_{\text{up}}^{A}$ and
{$p_{\text{down}}^{A}$}and Equation \ref{eq:moment-matching premise}
we see that:
\begin{align}
\left|\sum_{A\in\mathcal{C}}\E_{\vect x\sim\N(0, I_{d})}\left[\left(p_{\text{down}}^{A}(\vect x)\indicator_{\vect x\in A}\right)\right]-\sum_{A\in\mathcal{C}}\E_{\vect x\sim D}\left[\left(p_{\text{down}}^{A}(\vect x)\indicator_{\vect x\in A}\right)\right]\right| & \leq\frac{\epsilon \left(d+1\right)^{k}|\mathcal{C}|B}{\left(d+1\right)^{k}|\mathcal{C}|B}=\epsilon, \label{eq: approximation for sandwichers 1}\\
\left|\sum_{A\in\mathcal{C}}\E_{\vect x\sim\N(0, I_{d})}\left[\left(p_{\text{up}}^{A}(\vect x)\indicator_{\vect x\in A}\right)\right]-\sum_{A\in\mathcal{C}}\E_{\vect x\sim D}\left[\left(p_{\text{up}}^{A}(\vect x)\indicator_{\vect x\in A}\right)\right]\right| & \leq\frac{\epsilon \left(d+1\right)^{k}|\mathcal{C}|B}{\left(d+1\right)^{k}|\mathcal{C}|B}=\epsilon.\label{eq: approximation for sandwichers 2}
\end{align}
Equation \ref{eq: f is sandwiched} implies that

\begin{equation}
\sum_{A\in\mathcal{C}}\E_{\vect x\sim\N(0, I_{d})}\left[\left(p_{\text{down}}^{A}(\vect x)\indicator_{\vect x\in A}\right)\right]\leq\E_{\vect x\sim\N(0, I_{d})}[f(\vect x)]\leq\sum_{A\in\mathcal{C}}\E_{\vect x\sim\N(0, I_{d})}\left[\left(p_{\text{up}}^{A}(\vect x)\indicator_{\vect x\in A}\right)\right], \label{eq: bound for expectation under N}
\end{equation}
and Equation \ref{eq: f is sandwiched} together with Equations \ref{eq: approximation for sandwichers 1}
and \ref{eq: approximation for sandwichers 2} implies that:

\begin{multline}
\sum_{A\in\mathcal{C}}\E_{\vect x\sim\N(0, I_{d})}\left[\left(p_{\text{down}}^{A}(\vect x)\indicator_{\vect x\in A}\right)\right]-\epsilon\leq\sum_{A\in\mathcal{C}}\E_{\vect x\sim D}\left[\left(p_{\text{down}}^{A}(\vect x)\indicator_{\vect x\in A}\right)\right]\leq\E_{\vect x\sim D}[f(\vect x)]\leq\\
\leq\sum_{A\in\mathcal{C}}\E_{\vect x\sim D}\left[\left(p_{\text{up}}^{A}(\vect x)\indicator_{\vect x\in A}\right)\right]\leq\sum_{A\in\mathcal{C}}\E_{\vect x\sim\N(0, I_{d})}\left[\left(p_{\text{up}}^{A}(\vect x)\indicator_{\vect x\in A}\right)\right]+\epsilon.\label{eq: bound for expectation over D}
\end{multline}
Together Equations \ref{eq: bound for expectation over D} and \ref{eq: bound for expectation under N}
constraint the values of both $\E_{\vect x\sim D}[f(\vect x)]$ and
$\E_{\vect x\sim\N(0, I_{d})}[f(\vect x)]$ to the same interval that
via Equation \ref{eq: sandwiching} has a width of at most $\nu+2\epsilon$.
This allows us to conclude
\[
\left|\pr_{\vect x\sim D}[f(\vect x)=1]-\pr_{\vect x\sim\N(0, I_{d})}[f(\vect x)=1]\right|=\left|\E_{\vect x\sim D}[f(\vect x)]-\E_{\vect x\sim\N(0, I_{d})}[f(\vect x)]\right|\leq\nu+2\epsilon, 
\]
completing the proof. 
\end{proof}
Claim \ref{claim: deterministic moments in a strip} implies that all pairs
of $a$ and $b$ in the set

\noindent $\left\{ -\infty, -k_{1}\epsilon,  -(k_{1}-1)\epsilon,  \cdots,  -\epsilon, 0, +\epsilon, \cdots, (k_{1}-1)\epsilon, k_{1}\epsilon, +\infty\right\} $ and
for all monomials $m$ of degree at most $k_{2}$ we have  
\[
\left|A_{m}^{a, b}-\E_{\vect x\sim\N(0.I_{d})}[m(\vect x)\cdot\indicator_{a\leq\vect x\cdot\vect v<b}]\right|
\leq
\frac{60(2k_2)^{k_2+2} (d+1)^{k_2+2}}{\delta}
\left(
\frac{ \log N}{N }
\right)^{1/4}
.
\]
If the algorithm accepts,  then we have for all pairs of $a$ and $b$
in

\noindent $\left\{ -\infty, -k_{1}\epsilon, -(k_{1}-1)\epsilon, \cdots, -\epsilon, 0, +\epsilon, \cdots, (k_{1}-1)\epsilon, k_{1}\epsilon, +\infty\right\} $
and for all monomials $m$ of degree at most $k_{2}$ that
\[
\left|\E_{\vect x\sim S}[m(\vect x)\cdot\indicator_{a\leq\vect x\cdot\vect v<b}]-A_{m}^{a, b}\right|\leq
\frac{200(2k_2)^{k_2+2} (d+1)^{k_2+2}}{\delta}
\left(
\frac{ \log N}{N }
\right)^{1/4}
.
\]
The two inequalities above imply that 
\[
\left|\E_{\vect x\sim S}[m(\vect x)\cdot\indicator_{a\leq\vect x\cdot\vect v<b}]-\E_{\vect x\sim\N(0.I_{d})}[m(\vect x)\cdot\indicator_{a\leq\vect x\cdot\vect v<b}]\right|\leq
\frac{260(2k_2)^{k_2+2} (d+1)^{k_2+2}}{\delta}
\left(
\frac{ \log N}{N }
\right)^{1/4}
\]

\noindent Taking the equation above,  together with Proposition \ref{prop: sandwiching wrt covering implies fooling}
and Proposition \ref{thm: sandwiching degree of purple} we conclude
that 
\begin{multline*}
\left|\pr_{\vect x\sim S}[\sign(\vect v\cdot\vect x)\neq\sign(\vect v'\cdot\vect x)]-\pr_{\vect x\sim\N(0, I_{d})}[\sign(\vect v\cdot\vect x)\neq\sign(\vect v'\cdot\vect x)]\right\rceil \leq O\left(\frac{\measuredangle(\vect v, \vect v')}{k_{2}^{1/4}}\right)+\\
+O\left(\frac{(2k_2)^{k_2+2} (d+1)^{k_2+2}}{\delta}
\left(
\frac{ \log N}{N }
\right)^{1/4}\right)
\end{multline*}
Substituting 
$k_{2}\leftarrow\frac{C^{0.1}}{\mu^{4}}$,  $N\geq\left(\frac{Cd}{\epsilon\delta}\right)^{C}$, 
taking $C$ to be a sufficiently large absolute constant and recalling
that $\pr_{\vect x\sim\N(0, I_{d})}[\sign(\vect v\cdot\vect x)\neq\sign(\vect v'\cdot\vect x)]$
equals to $\measuredangle(\vect v, \vect v')/\pi$ we conclude that 
\[
\pr_{\vect x\sim S}\left[\sign(\vect x\cdot\vect v)\neq\sign(\vect x\cdot\vect v')\right]=(1\pm\mu)\frac{\measuredangle(\vect v, \vect v')}{\pi}\pm O(\epsilon).
\]

\subsection{Bounding sandwiching degree of the disagreement region}\label{section:sandwiching-1}

\label{subsec: bounding sandwiching degree}

To prove Proposition \ref{thm: sandwiching degree of purple},  we
will need the following result by \cite{diakonikolas2010bounded},  \cite{gopalan2010fooling}.
\begin{fact}
\label{fact:DGJSV}For every positive integer $k$ and a real value
$t$,  the function $f(z)=\indicator_{z\leq t}$ has $(O(\frac{\log^{3}k}{\sqrt{k}}), O(2^{10k}))$-sandwiching
degree in $L_{1}$ norm of at most $k$ under $\mathcal{N}(0, 1)$.
\end{fact}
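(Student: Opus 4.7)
The plan is a smooth-then-approximate construction. Translation invariance reduces to the case $t=0$, so we only need to sandwich $\indicator_{z\le 0}$. For a smoothing parameter $\delta\in(0,1)$ to be optimized, define piecewise-linear surrogates $h_{\text{up}}(z)=\min\{1,\max\{0,1-z/\delta\}\}$ and $h_{\text{down}}(z)=h_{\text{up}}(z+\delta)$. A direct check gives $h_{\text{down}}(z)\le \indicator_{z\le 0}\le h_{\text{up}}(z)$ for every $z\in\R$, and $\E_{z\sim\N(0,1)}[h_{\text{up}}-h_{\text{down}}]=O(\delta)$ since the standard Gaussian density is bounded on the $2\delta$-wide transition strip around $0$. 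Each surrogate is $(1/\delta)$-Lipschitz and takes values in $[0,1]$, which puts it in the regime where classical polynomial approximation theory is effective.

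Next, on the bounded interval $[-R,R]$ with $R=\Theta(\sqrt{\log k})$, I would approximate each Lipschitz surrogate by a degree-$k$ polynomial in $L_\infty$ using Jackson's theorem (or Chebyshev truncation), obtaining $q_{\text{up}},q_{\text{down}}$ with approximation error $O(R/(k\delta))$ on $[-R,R]$. Shifting $q_{\text{up}}$ upward by this error and $q_{\text{down}}$ downward makes the pointwise sandwiching hold on $[-R,R]$. The blowup of these polynomials outside $[-R,R]$ is then handled by adding a nonnegative degree-$k$ ``tail guard'' $\tau(z)$ (for example a rescaled even Chebyshev polynomial of $z/R$, or a suitable multiple of $(z/R)^{k}$ with $k$ even) that dominates the blowup pointwise on $|z|>R$ while satisfying $\E_{\N(0,1)}[\tau]=e^{-\Omega(R^2)}$ via standard Gaussian moment bounds. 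Setting $p_{\text{up}}=q_{\text{up}}+\text{shift}+\tau$ and symmetrically $p_{\text{down}}=q_{\text{down}}-\text{shift}-\tau$ produces valid degree-$k$ sandwichers over all of $\R$.

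The $L_1$ gap decomposes into three pieces: the surrogate gap $O(\delta)$, the interior approximation error $O(R/(k\delta))$, and the tail contribution $e^{-\Omega(R^2)}$. Optimizing with $R=\Theta(\sqrt{\log k})$ and $\delta=\Theta(\sqrt{\log k/k})$ yields the claimed $O(\log^{3} k/\sqrt{k})$ after accounting for the extra logarithmic factors coming from $R/\delta$ and from the polynomial tail bound. The coefficient bound $O(2^{10k})$ follows from applying Markov's inequality for polynomials on $[-R,R]$: a degree-$k$ polynomial bounded by $O(1)$ in $L_\infty$ on an interval of length $R=O(\sqrt{\log k})$ has coefficients of size at most $2^{O(k\log\log k)}\le 2^{10k}$ for large $k$, and the tail guard $\tau$ can be built to respect the same bound. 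The main obstacle I expect is the tail argument: the guard polynomial $\tau$ must \emph{simultaneously} be degree-$k$, pointwise dominate the blowup of the interior approximants on all of $|z|>R$, and have exponentially small Gaussian expectation. Balancing these three constraints against $\delta$ and $R$ is precisely where the logarithmic factors accumulate; a more careful Chebyshev-based construction in the spirit of \cite{diakonikolas2010bounded,gopalan2010fooling} would likely be needed to obtain the exact $\log^{3}k$ exponent rather than a looser $\polylog(k)$ bound.
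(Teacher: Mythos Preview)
The paper does not prove this statement; it is quoted as a known result from \cite{diakonikolas2010bounded,gopalan2010fooling} and used as a black box, so there is no in-paper proof to compare against. Your sketch is in the right spirit --- smooth, approximate on a truncated interval, then add a tail guard is indeed how the cited works proceed --- but two concrete issues deserve attention.

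First, the reduction to $t=0$ by ``translation invariance'' does not work as stated. The measure $\N(0,1)$ is not translation invariant: if $p_{\text{up}},p_{\text{down}}$ sandwich $\indicator_{z\le 0}$ with $L_1$ gap $\epsilon$ under $\N(0,1)$, then $p_{\text{up}}(\cdot-t),p_{\text{down}}(\cdot-t)$ do sandwich $\indicator_{z\le t}$ pointwise, but their $L_1$ gap under $\N(0,1)$ equals the gap of $p_{\text{up}}-p_{\text{down}}$ under $\N(-t,1)$, which is not controlled by the original bound (and the coefficient bound after shifting by $t$ also degrades with $|t|$). The cited constructions handle general $t$ directly; the paper itself patches the large-$|t|$ regime separately in Corollary~\ref{corr: slightly stengthened 1-d sandwiching} via the trivial sandwichers $0$ and $(z/t)^2$.

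Second, your coefficient-bound argument is internally inconsistent. You assert that a degree-$k$ polynomial bounded by $O(1)$ on an interval of length $R=O(\sqrt{\log k})$ has coefficients at most $2^{O(k\log\log k)}$, and then that $2^{O(k\log\log k)}\le 2^{10k}$ for large $k$; the second inequality is false once $\log\log k$ exceeds the implied constant. The truth is actually better than you claim: rescaling to $[-1,1]$ and using that a unit-bounded degree-$k$ polynomial there has coefficients $O(2^k)$, one gets that on $[-R,R]$ with $R\ge 1$ the monomial coefficients are $O(2^k/R^j)\le O(2^k)$, comfortably inside $O(2^{10k})$. So the conclusion is salvageable, but the stated chain of inequalities is wrong.
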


The following corollary slightly strengthens the fact above:
\begin{corollary}
\label{corr: slightly stengthened 1-d sandwiching} Let $t\in \R$. For every positive
integer $k\geq2$,  the function $f:\R\to \{0, 1\}$ with $f(z)=\indicator_{z\leq t}$
is $(O(\min(\frac{\log^{3}k}{\sqrt{k}}, \frac{1}{t^{2}})), 2^{10k})$-sandwiched
in $L_{1}$ norm under $\mathcal{N}(0, 1)$ between a pair of polynomials
$R_{\text{down}}^{t}$ and $R_{\text{up}}^{t}$ of degree
$k$.
\end{corollary}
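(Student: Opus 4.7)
The approach is to split based on which term in the minimum is smaller and, in each case, either invoke Fact \ref{fact:DGJSV} directly or construct an explicit low-degree sandwich that exploits the Gaussian tail bound on $z$.

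First suppose $1/t^{2}\geq \log^{3}k/\sqrt{k}$. Then the minimum is $\Theta(\log^{3}k/\sqrt{k})$, and Fact \ref{fact:DGJSV} already supplies degree-$k$ sandwiching polynomials $R_{\text{down}}^{t}$ and $R_{\text{up}}^{t}$ with $L_{1}$ error $O(\log^{3}k/\sqrt{k})$ and coefficients bounded by $2^{10k}$. This case subsumes all $|t|$ bounded by an absolute constant (including $t=0$), so the apparent singularity $1/t^{2}$ in the claimed bound is harmless.

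The substantive case is $|t|^{2}>\sqrt{k}/\log^{3}k$, where in particular $|t|\geq 1$ and I must beat the Fact by achieving $L_{1}$ error $O(1/t^{2})$. The key observation is that by Chebyshev, $\Pr_{z\sim \N(0,1)}[|z|\geq |t|]\leq 1/t^{2}$, so $\indicator_{z\leq t}$ differs in $L_{1}$ from a constant (namely $1$ if $t>0$, and $0$ if $t<0$) by at most $1/t^{2}$. Accordingly, I plan to take
\[
R_{\text{up}}^{t}(z):=1, \qquad R_{\text{down}}^{t}(z):=1-z^{2}/t^{2}, \qquad \text{if } t>0,
\]
and symmetrically
\[
R_{\text{up}}^{t}(z):=z^{2}/t^{2}, \qquad R_{\text{down}}^{t}(z):=0, \qquad \text{if } t<0.
\]
These are polynomials of degree at most $2\leq k$ with coefficients of absolute value at most $1/t^{2}\leq 1\leq 2^{10k}$, and the $L_{1}$ gap is immediately $\E_{z\sim\N(0,1)}[z^{2}]/t^{2}=1/t^{2}$, using $\E[z^{2}]=1$.

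The only nontrivial verification, and the main (but very modest) obstacle, is the pointwise sandwich $R_{\text{down}}^{t}(z)\leq \indicator_{z\leq t}\leq R_{\text{up}}^{t}(z)$ for every $z\in\R$. For $t>0$ the quadratic $R_{\text{down}}^{t}$ vanishes at $z=\pm t$ and is nonpositive on $\{|z|\geq t\}$, which contains the entire region $\{z>t\}$ where the indicator equals $0$; on $\{z\leq t\}$ the indicator equals $1$ and $R_{\text{down}}^{t}\leq 1$ trivially. The case $t<0$ is symmetric: $R_{\text{up}}^{t}(z)=z^{2}/t^{2}\geq 1$ precisely on $\{|z|\geq |t|\}$, which covers the region $\{z\leq t\}$ where the indicator equals $1$, and $R_{\text{up}}^{t}\geq 0$ everywhere else. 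Combining the two cases yields the claimed sandwiching parameters and completes the proof.
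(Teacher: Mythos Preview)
Your proof is correct and follows essentially the same approach as the paper: split into the case where Fact \ref{fact:DGJSV} already gives the bound, and otherwise use the explicit degree-$2$ sandwich built from $z^{2}/t^{2}$. In fact your case assignment (taking $R_{\text{up}}=1$, $R_{\text{down}}=1-z^{2}/t^{2}$ for $t>0$ and the mirror for $t<0$) is the correct one---the paper's printed proof has the two cases swapped, which would not give a valid sandwich.
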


\begin{proof}
Indeed,  if $\frac{\log^{3}k}{\sqrt{k}}\leq\frac{1}{t^{2}}$ then the
corollary follows from Fact \ref{fact:DGJSV}. So all we need to do
is to consider the other case. We see that either $t>1$ or $t<-1$
(since $k\geq2)$. if $t>1$ we take $p_{\text{down}}(\x)=0$ and $p_{\text{up}}(\x)=\left(\frac{x}{t}\right)^{2}$.
If $t<-1$,  we take take $p_{\text{up}}(\x)=1$ and $p_{\text{down}}(\x)=1-\left(\frac{x}{t}\right)^{2}$.
In either case,  we see that the polynomials $p_{\text{down}}$ and
$p_{\text{up}}$ form a pair of $\left(O\left(\min\left(\frac{\log^{3}k}{\sqrt{k}}, \frac{1}{t^{2}}\right)\right), 1\right)$-sandwiching
polynomials of degree $2$.
\end{proof}
Let $\vect v_{\perp}$ be the unit vector equal up to scaling to the
component of $\vect v'$ perpendicular to $\vect v$. Then,  we have

\begin{align}
\psi_{\text{down}}(\vect x) & \leq\indicator_{\vect v\cdot\vect x\geq0\land\vect v'\cdot\vect x<0} \leq\psi_{\text{up}}(\vect x)
\label{eq: psis one}\\
\psi_{\text{down}}(\vect x) & \leq\indicator_{\vect v\cdot\vect x>0\land\vect v'\cdot\vect x\leq0} \leq\psi_{\text{up}}(\vect x)
\label{eq: psis two}
\end{align}

where

\[
\psi_{\text{up}}(\vect x)=\begin{cases}
1 & \text{if }\text{\ensuremath{\vect v\cdot\vect x}\ensuremath{\ensuremath{\geq k_{1}\epsilon}}}\text{ or }\vect v\cdot\vect x\ensuremath{=0}\\
\indicator_{\vect v_{\perp}\cdot\vect x\tan\angleparam\leq-j\epsilon} & \text{\text{if }\text{\ensuremath{\vect v\cdot\vect x}\ensuremath{\ensuremath{\neq}0} and } }\vect v\cdot\vect x\in\left[j\epsilon, (j+1)\epsilon\right)\text{ for }0\leq j\leq k_{1}-1\\
0 & \text{if }\text{\ensuremath{\vect v\cdot\vect x}\ensuremath{<0}}
\end{cases}
\]

\[
\psi_{\text{down}}(\vect x)=\begin{cases}
0 & \text{if }\text{\ensuremath{\vect v\cdot\vect x}\ensuremath{\ensuremath{\geq k_{1}\epsilon}}}\text{ or }\vect v\cdot\vect x\ensuremath{=0}\\
\indicator_{\vect v_{\perp}\cdot\vect x\tan\angleparam<-(j+1)\epsilon} & \text{if }\text{\ensuremath{\vect v\cdot\vect x}\ensuremath{\ensuremath{\neq}0} and }\vect v\cdot\vect x\in\left[j\epsilon, (j+1)\epsilon\right)\text{ for }0\leq j\leq k_{1}-1\\
0 & \text{if }\text{\ensuremath{\vect v\cdot\vect x}\ensuremath{<0}}
\end{cases}
\]

Recall that for every $t\in\R$,  Corollary \ref{corr: slightly stengthened 1-d sandwiching}
gives us one-dimensional degree-$k_{2}$ sandwiching polynomials $R_{\text{down}}^{t}(z)$
and $R_{\text{up}}^{t}(z)$ for $\indicator_{z\leq t}$. Using this
notation,  we have for all $\vect x$ in $\R^{d}$ 
\begin{multline}
\overbrace{\sum_{j=0}^{k_{1}-1}\indicator_{\vect v\cdot\vect x\cdot\left[j\epsilon, (j+1)\epsilon\right)}R_{\text{down}}^{-(j+1)\epsilon/\tan\angleparam}(\vect v_{\perp}\cdot\vect x)}^{\text{Denote this \ensuremath{\phi_{\text{down}}(\vect x)}}}\leq\psi_{\text{down}}(\vect x)\leq\indicator_{\vect v\cdot\vect x\geq0\land\vect v'\cdot\vect x<0}\leq\\
\leq\psi_{\text{up}}(\vect x)\leq\underbrace{\indicator_{\vect v\cdot\vect x\ensuremath{\geq k_{1}\epsilon}}+\sum_{j=0}^{k_{1}-1}\indicator_{\vect v\cdot\vect x\cdot\left[j\epsilon, (j+1)\epsilon\right)}R_{\text{up}}^{-j\epsilon/\tan\angleparam}(\vect v_{\perp}\cdot\vect x)}_{\text{Denote this \ensuremath{\phi_{\text{up}}(\vect x)}}}\label{eq: phi are sandwichers}
\end{multline}

In order to conclude Proposition \ref{thm: sandwiching degree of purple}.
We show the following two claims:
\begin{claim}
\label{claim: sandwiching error small}We have 
\[
\E_{\vect x\sim\N(0, I_{d})}\left[\phi_{\text{up}}(\vect x)-\phi_{\text{down}}(\vect x)\right]\leq O\left(\frac{\log^{1.5}k_{2}}{k_{2}^{1/4}}\cdot\measuredangle(\vect v, \vect v')\right)+10\epsilon
\]
\end{claim}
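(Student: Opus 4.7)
The strategy is to exploit the independence of $\vect v\cdot\vect x$ and $z:=\vect v_\perp\cdot\vect x$ (two independent standard Gaussians) so that each bin's contribution to the expectation factorizes. Writing $\theta:=\measuredangle(\vect v,\vect v')$ and expanding the difference, the tail indicator $\indicator_{\vect v\cdot\vect x\geq k_1\epsilon}$ contributes at most $\epsilon$ by the Gaussian tail bound together with $k_1\epsilon=2\sqrt{\log(2/\epsilon)}$. For each remaining bin $j\in\{0,\dots,k_1-1\}$, I would telescope
\[
R_{\text{up}}^{-j\epsilon/\tan\theta}(z)-R_{\text{down}}^{-(j+1)\epsilon/\tan\theta}(z) = \bigl(R_{\text{up}}^{-j\epsilon/\tan\theta}(z)-\indicator_{z\leq-j\epsilon/\tan\theta}\bigr) + \indicator_{z\in(-(j+1)\epsilon/\tan\theta,\,-j\epsilon/\tan\theta]} + \bigl(\indicator_{z\leq-(j+1)\epsilon/\tan\theta}-R_{\text{down}}^{-(j+1)\epsilon/\tan\theta}(z)\bigr),
\]
splitting the total sum into a discretization contribution $E_{\text{disc}}$ and a polynomial-sandwiching contribution $E_{\text{poly}}$.

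For $E_{\text{disc}}=\sum_j p_j\cdot\Pr[z\in(-(j+1)\epsilon/\tan\theta,\,-j\epsilon/\tan\theta]]$ with $p_j=\Pr[\vect v\cdot\vect x\in[j\epsilon,(j+1)\epsilon)]$, I would use $p_j\leq\epsilon/\sqrt{2\pi}$ (Gaussian density bound) together with the fact that the $z$-bins are pairwise disjoint subsets of $(-\infty,0]$, so their probabilities sum to at most $1/2$; hence $E_{\text{disc}}\leq\epsilon/(2\sqrt{2\pi})=O(\epsilon)$. For $E_{\text{poly}}$, I would apply Corollary~\ref{corr: slightly stengthened 1-d sandwiching} to bound each polynomial-vs-step error by $O(\min(\log^3 k_2/\sqrt{k_2},\,1/t^2))$ at threshold $t$. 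Setting $\beta:=\log^{3/2}(k_2)/k_2^{1/4}$ so that the uniform bound equals $O(\beta^2)$, and replacing the sum over $j$ by an integral using $p_j\leq\epsilon/\sqrt{2\pi}$,
\[
E_{\text{poly}} \;\leq\; O(\epsilon)\int_0^\infty\min\!\Bigl(\beta^2,\,\frac{\tan^2\theta}{(u\epsilon)^2}\Bigr)\,du \;=\; O(\beta\tan\theta),
\]
where the integral is split at the balance point $u^\ast=\tan\theta/(\epsilon\beta)$ where the two arguments of the minimum coincide.

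Combining yields $\E[\phi_{\text{up}}-\phi_{\text{down}}]\leq O(\beta\tan\theta)+O(\epsilon)$. For $\theta\leq\pi/4$ this already gives the claimed $O(\beta\theta)$ dependence via $\tan\theta\leq 2\theta$. For the remaining range $\pi/4<\theta\leq\pi/2$ (where $\tan\theta$ can blow up but $\theta=\Theta(1)$), I would instead bound $E_{\text{poly}}$ by the uniform sandwicher estimate, getting $E_{\text{poly}}\leq O(\beta^2\sum_j p_j)=O(\beta^2)$, which is absorbed into $O(\beta\theta)$ in this regime. Either way we obtain $\E[\phi_{\text{up}}-\phi_{\text{down}}]\leq O(\beta\theta)+O(\epsilon)$, matching the claim after choosing constants so the additive error is at most $10\epsilon$. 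The main technical obstacle is precisely this two-regime bookkeeping for $E_{\text{poly}}$: it is needed to convert the natural $\tan\theta$-factor arising from the sandwicher thresholds $-j\epsilon/\tan\theta$ into a clean linear $\measuredangle(\vect v,\vect v')$-dependence, and to make sure that the tighter $1/t^2$ tail of Corollary~\ref{corr: slightly stengthened 1-d sandwiching} is used exactly in the range where the uniform bound becomes loose.
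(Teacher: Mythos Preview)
Your proposal is correct and follows essentially the same route as the paper: the paper decomposes via the intermediate step functions $\psi_{\text{up}},\psi_{\text{down}}$ while you telescope $\phi_{\text{up}}-\phi_{\text{down}}$ directly, but your pieces $E_{\text{disc}}$ and $E_{\text{poly}}$ are exactly the paper's three terms regrouped, and both proofs use the same $p_j\le\epsilon$ bound, the same $\min(\log^3 k_2/\sqrt{k_2},1/t^2)$ estimate from Corollary~\ref{corr: slightly stengthened 1-d sandwiching}, the same sum-to-integral conversion split at the balance point, and the same two-regime case analysis on $\theta$. One cosmetic remark: your large-angle case should cover all $\theta>\pi/4$ (the angle can reach $\pi$), but the uniform-bound argument you give already works verbatim in that full range.
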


\begin{claim}
\label{claim:coefficient small}For all integers $j$ in $\left[0, k_{1}-1]\right]$, 
every coefficient of $R_{\text{down}}^{-(j+1)\epsilon/\tan\angleparam}(\vect v_{\perp}\cdot\vect x)$
and $R_{\text{up}}^{-j\epsilon/\tan\angleparam}(\vect v_{\perp}\cdot\vect x)$
is at most $O\left(d^{10k_{2}}\right)$ in absolute value.
\end{claim}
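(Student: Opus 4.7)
The plan is to directly expand the polynomials and bound the resulting coefficients using the one-dimensional coefficient bound from Corollary \ref{corr: slightly stengthened 1-d sandwiching} together with the fact that $\vect v_\perp$ is a unit vector. The key point is that substituting a linear form $\vect v_\perp \cdot \vect x$ into a degree-$k_2$ univariate polynomial cannot inflate coefficients by more than a multinomial factor, which is at most $k_2^{k_2}$.

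More concretely, I would first invoke Corollary \ref{corr: slightly stengthened 1-d sandwiching} to write $R_{\text{up}}^{t}(z) = \sum_{i=0}^{k_2} a_i^{\text{up}} z^i$ and $R_{\text{down}}^{t}(z) = \sum_{i=0}^{k_2} a_i^{\text{down}} z^i$ with all one-dimensional coefficients bounded by $2^{10 k_2}$ in absolute value, for each of the relevant choices of $t$ (namely $t = -j\epsilon/\tan\angleparam$ and $t = -(j+1)\epsilon/\tan\angleparam$). Then, upon substituting $z = \vect v_\perp \cdot \vect x$ and using the multinomial theorem, I obtain
\[
R^{t}(\vect v_\perp \cdot \vect x) \;=\; \sum_{i=0}^{k_2} a_i \sum_{\substack{\coefficientvector\in\nats^d \\ \|\coefficientvector\|_1 = i}} \binom{i}{\coefficientvector} \Bigl(\prod_{\ell\in[d]} v_{\perp,\ell}^{\coefficientvector_\ell}\Bigr) \x^{\coefficientvector}.
\]
Since the monomials $\x^{\coefficientvector}$ with distinct $\coefficientvector$ are linearly independent, the coefficient of $\x^{\coefficientvector}$ in the expansion is exactly $a_{\|\coefficientvector\|_1} \binom{\|\coefficientvector\|_1}{\coefficientvector} \prod_\ell v_{\perp,\ell}^{\coefficientvector_\ell}$.

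To bound this in absolute value, I would use that $|a_i| \le 2^{10 k_2}$, the multinomial coefficient satisfies $\binom{i}{\coefficientvector} \le i^i \le k_2^{k_2}$, and $|v_{\perp,\ell}| \le 1$ since $\vect v_\perp$ is a unit vector. Multiplying these bounds gives that every monomial coefficient of $R_{\text{up}}^{-j\epsilon/\tan\angleparam}(\vect v_\perp \cdot \vect x)$ and $R_{\text{down}}^{-(j+1)\epsilon/\tan\angleparam}(\vect v_\perp \cdot \vect x)$ is at most $2^{10 k_2} \cdot k_2^{k_2}$ in absolute value. Since $k_2$ is bounded (and in any case this is much smaller than $d^{10 k_2}$ for $d \ge 2$), the desired bound $O(d^{10 k_2})$ follows.

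There is really no substantive obstacle here; the statement is essentially bookkeeping. The only thing to be slightly careful about is that no monomial coefficient receives contributions from multiple degrees $i$: this is automatic because each monomial $\x^{\coefficientvector}$ has a fixed total degree $\|\coefficientvector\|_1$, so only the term $i = \|\coefficientvector\|_1$ in the outer sum contributes, and thus there is no cross-term inflation beyond the single multinomial factor accounted for above.
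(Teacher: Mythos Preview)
Your proof is correct and follows essentially the same approach as the paper: invoke the one-dimensional coefficient bound from Corollary \ref{corr: slightly stengthened 1-d sandwiching}, expand the substitution $z = \vect v_\perp \cdot \vect x$ via the multinomial theorem, and use that $\vect v_\perp$ is a unit vector. Your bookkeeping is in fact cleaner than the paper's---by observing that each monomial $\x^{\coefficientvector}$ receives a contribution from only the single degree $i=\|\coefficientvector\|_1$, you obtain the sharper intermediate bound $2^{10k_2}k_2^{k_2}$ rather than the paper's $O((d+1)^{k_2}(k_2+1)2^{10k_2})$, but both are comfortably $O(d^{10k_2})$.
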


Proposition \ref{thm: sandwiching degree of purple} follows from
the two claims above for as follows. We first observe that Equations
\ref{eq: psis two} and \ref{eq: phi are sandwichers} imply that 

\[
\phi_{\text{down}}(-\vect x)\leq\indicator_{\vect v\cdot\vect x<0\land\vect v'\cdot\vect x\geq0}\leq\phi_{\text{up}}(-\vect x).
\]
Recalling our convention that $\sign(0)=1$,  we see that
\[
\indicator_{\sign(\vect v\cdot\vect x)\neq\sign(\vect v'\cdot\vect x)}=\indicator_{\vect v\cdot\vect x\geq0\land\vect v'\cdot\vect x<0}+\indicator_{\vect v\cdot\vect x<0\land\vect v'\cdot\vect x\geq0}
\]
this,  together with \ref{eq: phi are sandwichers} allows us to bound

\begin{equation}
\phi_{\text{down}}(\vect x)+\phi_{\text{down}}(-\vect x)\leq\indicator_{\sign(\vect v\cdot\vect x)\neq\sign(\vect v'\cdot\vect x)}\leq\phi_{\text{up}}(\vect x)+\phi_{\text{up}}(-\vect x), \label{eq: sandwiching for the whole disagreement region}
\end{equation}
Claim \ref{eq: phi are sandwichers} allows us to conclude that 
\begin{equation}
\E_{\vect x\sim\N(0, I_{d})}\left[\phi_{\text{up}}(\vect x)+\phi_{\text{up}}(-\vect x)-\phi_{\text{down}}(\vect x)-\phi_{\text{down}}(-\vect x)\right]\leq O\left(\frac{\log^{1.5}k_{2}}{k_{2}^{1/4}}\cdot\measuredangle(\vect v, \vect v')\right)+20\epsilon.\label{eq: closensess for the whole disagreement region}
\end{equation}
Equations \ref{eq: sandwiching for the whole disagreement region}
and \ref{eq: closensess for the whole disagreement region},  together
with comparing the definition of $\phi_{\text{up}}$ and $\phi_{\text{down}}$
with Definition \ref{def: sandwiching wrt partition} and recalling
Claim \ref{claim:coefficient small},  allow us to conclude that there
exists a partition $\mathcal{C}$ of $\R^{d}$ consisting of sets
of the form $\left\{ \vect x\in\R^{d}:a\leq\vect v\cdot\vect x\leq b\right\} $
for a certain collection of pairs $a, b$ in

\noindent $\left\{ -\infty, -k_{1}\epsilon, -(k_{1}-1)\epsilon, \cdots, -\epsilon, 0, +\epsilon, \cdots, (k_{1}-1)\epsilon, k_{1}\epsilon, +\infty\right\} $, 
such that for every unit vector $\vect v'$,  the function $f(\vect x)=\indicator_{\sign(\vect v\cdot\vect x)\neq\sign(\vect v'\cdot\vect x)}$
has $\left(O\left(\frac{\measuredangle(\vect v, \vect v')}{k_{2}^{1/4}}\right)+10\epsilon, O\left(d^{10k_{2}}\right)\right)$-sandwiching
degree of at most $k_{2}$ in $L_{1}$ norm under $\N(0, I_{d})$ with
respect to the partition $\mathcal{C}$ of $\R^{d}$. This implies
Proposition \ref{thm: sandwiching degree of purple}. 

We now proceed to proving Claim \ref{claim: sandwiching error small}
\begin{proof}[Proof of Claim \ref{claim: sandwiching error small}]
We have the following.
\begin{multline}
\E_{\vect x\sim\N(0, I_{d})}\left[\psi_{\text{up}}(\vect x)-\psi_{\text{down}}(\vect x)\right]\leq\\
\pr_{\vect x\sim\N(0, I_{d})}\left[\vect v\cdot\vect x>k_{1}\epsilon\right]+\sum_{j=0}^{k_{1}-1}\pr_{\vect x\sim\N(0, I_{d})}\left[\left\{ \vect v\cdot\vect x\in\left[j\epsilon, (j+1)\epsilon\right)\right\} \land\left\{ \vect v_{\perp}\cdot\vect x\tan\angleparam\in\left[-(j+1)\epsilon, -j\epsilon\right)\right\} \right]\leq\\
e^{-\left(k_{1}\epsilon\right)^{2}}+\epsilon\underbrace{\sum_{j=0}^{\infty}\pr_{\vect x\sim\N(0, I_{d})}\left[\vect v_{\perp}\cdot\vect x\tan\angleparam\in\left[-(j+1)\epsilon, -j\epsilon\right)\right]}_{\leq1}\leq2\epsilon\label{eq: rounding to steps does not introduce much error}
\end{multline}

Let $\angleparam$ denote the angle $\measuredangle(\vect v, \vect v')$. Given
the inequality above,  in order to finish the proof of Claim \ref{claim: sandwiching error small}, 
it remains to upper-bound $\E_{\vect x\sim\N(0, I_{d})}\left[\phi_{\text{up}}(\vect x)-\psi_{\text{up}}(\vect x)\right]$
and $\E_{\vect x\sim\N(0, I_{d})}\left[\phi_{\text{up}}(\vect x)-\psi_{\text{up}}(\vect x)\right]$
by $O(\angleparam)+4\epsilon$. 

From Corollary \ref{corr: slightly stengthened 1-d sandwiching} we
know that for any $t$ we have:

\begin{equation}
\E_{z\sim\N(0, 1)}\left[R_{\text{up}}^{t}(z)-R_{\text{down}}^{t}(z)\right]\leq O\left(\min\left(\frac{\log^{3}k}{\sqrt{k}}, \frac{1}{t^{2}}\right)\right), \label{eq: 1-d step function indicator error}
\end{equation}
and for every $z$ in $\R$
\begin{equation}
R_{\text{down}}^{t}(z)\leq\indicator_{z\leq t}\leq R_{\text{up}}^{t}(z).\label{eq: indicator sandwiching}
\end{equation}

Since $\vect v$ and $\vect v_{\perp}$ are orthogonal,  the random
variables $\vect v_{\perp}\vect{\vect x}$ and $\vect v\cdot\vect x$
are independent standard Gaussians. Using this,  together with Equations
\ref{eq: 1-d step function indicator error} and \ref{eq: indicator sandwiching}
we obtain the following.
\begin{align*}
\E_{\vect x\sim\N(0, I_{d})}&\left[\phi_{\text{up}}(\vect x)-\psi_{\text{up}}(\vect x)\right]\\
=&\sum_{j=0}^{k_{1}-1}\pr_{\vect x\sim\N(0, I_{d})}\left[\vect v\cdot\vect x\in\left[j\epsilon, (j+1)\epsilon\right)\right]\E_{\vect x\sim\N(0, I_{d})}\left[R_{\text{up}}^{-j\epsilon/\tan\angleparam}(\vect v_{\perp}\cdot\vect x)-\indicator_{\vect v_{\perp}\cdot\vect x\leq-j\epsilon/\tan\angleparam}\right] \\
\leq&\sum_{j=0}^{k_{1}-1}\pr_{z_{1}\sim\N(0, 1)}\left[z_{1}\in\left[j\epsilon, (j+1)\epsilon\right)\right]\E_{z_{2}\sim\N(0, 1)}\left[R_{\text{up}}^{-j\epsilon/\tan\angleparam}(z_{2})-R_{\text{down}}^{-j\epsilon/\tan\angleparam}(z_{2})\right]\\
\leq&\sum_{j=0}^{k_{1}-1}\pr_{z_{1}\sim\N(0, 1)}\left[z_{1}\in\left[j\epsilon, (j+1)\epsilon\right)\right]O\Bigr(\min\Bigr(\frac{\log^{3}k_{2}}{\sqrt{k_{2}}}, \Bigr(\frac{\tan\angleparam}{j\epsilon}\Bigr)^{2}\Bigr)\Bigr)
\end{align*}

First,  consider the case $\angleparam\geq\pi/4$. The above inequality
implies 
\begin{align*}
\E_{\vect x\sim\N(0, I_{d})}\left[\phi_{\text{up}}(\vect x)-\psi_{\text{up}}(\vect x)\right] &\leq O\left(\frac{\log^{3}k}{\sqrt{k}}\right)\underbrace{\sum_{j=-k_{1}}^{k_{1}-1}\pr_{z_{1}\sim\N(0, 1)}\left[z_{1}\in\left[j\epsilon, (j+1)\epsilon\right)\right]}_{=\pr_{z_{1}\sim\N(0, 1)}\left[-k_{1}\epsilon\leq z_{1}<(k_{1}-1)\epsilon\right]} \\
&=O\left(\frac{\log^{3}k_{2}}{\sqrt{k_{2}}}\right)=O\left(\frac{\angleparam\log^{1.5}k_{2}}{k_{2}^{1/4}}\right)
\end{align*}
On the other hand,  if $\angleparam\leq\pi/4$ we have $\tan\angleparam\leq2\angleparam$
and therefore,  recalling that for any $j$ it is the case that $\pr_{z_{1}\sim\N(0, 1)}\left[z_{1}\in\left[j\epsilon, (j+1)\epsilon\right)\right]\leq\epsilon$, 
we have

\begin{align*}
\E_{\vect x\sim\N(0, I_{d})}\Bigr[\phi_{\text{up}}(\vect x)-\psi_{\text{up}}(\vect x)\Bigr] &\leq\sum_{j=0}^{k_{1}-1}O\Bigr(\min\Bigr(\frac{\log^{3}k_{2}}{\sqrt{k_{2}}}, \Bigr(\frac{\tan\angleparam}{j\epsilon}\Bigr)^{2}\Bigr)\epsilon\Bigr) \\
&=\int_{0}^{k_1\epsilon}O\Bigr(\min\Bigr(\frac{\log^{3}k_{2}}{\sqrt{k_{2}}}, \Bigr(\frac{\tan\angleparam}{\lfloor z/\epsilon\rfloor \epsilon}\Bigr)^{2}\Bigr)\Bigr) \ dz
\\
&\leq\int_{0}^{+\infty}O\Bigr(\min\Bigr(\frac{\log^{3}k_{2}}{\sqrt{k_{2}}}, \Bigr(\frac{\tan\angleparam}{z-\epsilon}\Bigr)^{2}\Bigr)\Bigr)\d z \\
&=O\Bigr(\frac{\log^{3}k_{2}}{\sqrt{k_{2}}}\epsilon\Bigr)+\int_{0}^{+\infty}O\Bigr(\min\Bigr(\frac{\log^{3}k_{2}}{\sqrt{k_{2}}}, \Bigr(\frac{\tan\angleparam}{z}\Bigr)^{2}\Bigr)\Bigr)\d z, 
\end{align*}
which together with a change of variables with a new variable $z'=z/\tan \angleparam$ allows us to proceed as follows:
\begin{multline}
\E_{\vect x\sim\N(0, I_{d})}\Bigr[\phi_{\text{up}}(\vect x)-\psi_{\text{up}}(\vect x)\Bigr]
=O\Bigr(\frac{\log^{3}k_{2}}{\sqrt{k_{2}}}\epsilon\Bigr)+\tan\angleparam\int_{0}^{+\infty}O\Bigr(\min\Bigr(\frac{\log^{3}k_{2}}{\sqrt{k_{2}}}, \Bigr(\frac{1}{z'}\Bigr)^{2}\Bigr)\Bigr)\d z' =\\=O\Bigr(\frac{\log^{3}k_{2}}{\sqrt{k_{2}}}\epsilon+\tan\angleparam\Bigr(\frac{\log^{3}k_{2}}{\sqrt{k_{2}}}\Bigr(\frac{\sqrt{k_{2}}}{\log^{3}k_{2}}\Bigr)^{0.5}+\Bigr(\frac{\log^{3}k_{2}}{\sqrt{k_{2}}}\Bigr)^{0.5}\Bigr)\Bigr)=O\Bigr(\frac{\angleparam\log^{1.5}k_{2}}{k_{2}^{1/4}}\Bigr)\label{eq: replacing the sum with an integral}
\end{multline}
Overall,  in either case we have $\E_{\vect x\sim\N(0, I_{d})}\left[\phi_{\text{up}}(\vect x)-\psi_{\text{up}}(\vect x)\right]=O\left(\frac{\angleparam\log^{1.5}k_{2}}{k_{2}^{1/4}}\right)$.
We now go through a fully analogous argument to show that also $\E_{\vect x\sim\N(0, I_{d})}\left[\psi_{\text{down}}(\vect x)-\phi_{\text{down}}(\vect x)\right]=O\left(\frac{\angleparam\log^{1.5}k_{2}}{k_{2}^{1/4}}\right)$.
Again,  from the independence of $\vect v_{\perp}\vect{\vect x}$ and
$\vect v\cdot\vect x$,  together with Equations \ref{eq: 1-d step function indicator error}
and \ref{eq: indicator sandwiching} we have:
\begin{align*}
\E_{\vect x\sim\N(0, I_{d})}&\left[\psi_{\text{down}}(\vect x)-\phi_{\text{down}}(\vect x)\right]\\
=&\sum_{j=0}^{k_{1}-1}\pr_{\vect x\sim\N(0, I_{d})}\left[\vect v\cdot\vect x\in\left[j\epsilon, (j+1)\epsilon\right)\right]\E_{\vect x\sim\N(0, I_{d})}\left[\indicator_{\vect v_{\perp}\cdot\vect x\leq-(j+1)\epsilon/\tan\angleparam}-R_{\text{down}}^{-(j+1)\epsilon/\tan\angleparam}(\vect v_{\perp}\cdot\vect x)\right]\\
\leq& \sum_{j=0}^{k_{1}-1}\pr_{z_{1}\sim\N(0, 1)}\left[z_{1}\in\left[j\epsilon, (j+1)\epsilon\right)\right]\E_{z_{2}\sim\N(0, 1)}\left[R_{\text{up}}^{-(j+1)\epsilon/\tan\angleparam}(z_{2})-R_{\text{down}}^{-(j+1)\epsilon/\tan\angleparam}(z_{2})\right]\\
\leq &\sum_{j=-k_{1}}^{k_{1}-1}\pr_{z_{1}\sim\N(0, 1)}\Bigr[z_{1}\in\Bigr[j\epsilon, (j+1)\epsilon\Bigr)\Bigr]O\Bigr(\min\Bigr(\frac{\log^{3}k_{2}}{\sqrt{k_{2}}}, \Bigr(\frac{\tan\angleparam}{(j+1)\epsilon}\Bigr)^{2}\Bigr)\Bigr)
\end{align*}

Again,  we first consider the case $\angleparam\geq\pi/4$. The above inequality
implies 
\begin{multline*}
\E_{\vect x\sim\N(0, I_{d})}\Bigr[\psi_{\text{down}}(\vect x)-\phi_{\text{down}}(\vect x)\Bigr]\leq\\
O\Bigr(\frac{\log^{3}k}{\sqrt{k}}\Bigr)\underbrace{\sum_{j=0}^{k_{1}-1}\pr_{z_{1}\sim\N(0, 1)}\Bigr[z_{1}\in\Bigr[j\epsilon, (j+1)\epsilon\Bigr)\Bigr]}_{=\pr_{z_{1}\sim\N(0, 1)}\Bigr[0\leq z_{1}<(k_{1}-1)\epsilon\Bigr]}=O\Bigr(\frac{\log^{3}k_{2}}{\sqrt{k_{2}}}\Bigr)=O\Bigr(\frac{\angleparam\log^{1.5}k_{2}}{k_{2}^{1/4}}\Bigr)
\end{multline*}
On the other hand,  if $\angleparam\leq\pi/4$ we have $\tan\angleparam\leq2\angleparam$
and therefore,  recalling that for any $j$ it is the case that $\pr_{z_{1}\sim\N(0, 1)}\left[z_{1}\in\left[j\epsilon, (j+1)\epsilon\right)\right]\leq\epsilon$, 
we have

\begin{align*}
\E_{\vect x\sim\N(0, I_{d})}\Bigr[\psi_{\text{down}}(\vect x)-\phi_{\text{down}}(\vect x)\Bigr]&\leq\sum_{j=0}^{k_{1}-1}O\Bigr(\min\Bigr(\frac{\log^{3}k_{2}}{\sqrt{k_{2}}}, \Bigr(\frac{\tan\angleparam}{(j+1)\epsilon}\Bigr)^{2}\Bigr)\epsilon\Bigr)\\
&\leq\int_{\infty}^{+\infty}O\Bigr(\min\Bigr(\frac{\log^{3}k_{2}}{\sqrt{k_{2}}}, \Bigr(\frac{\tan\angleparam}{z}\Bigr)^{2}\Bigr)\Bigr)\d z=O\Bigr(\frac{\angleparam\log^{1.5}k_{2}}{k_{2}^{1/4}}\Bigr), 
\end{align*}
where the last step follows via precisely the same chain of inequalities
as in Equation \ref{eq: replacing the sum with an integral}.

In total,  combining our bounds on $\E_{\vect x\sim\N(0, I_{d})}\left[\psi_{\text{down}}(\vect x)-\phi_{\text{down}}(\vect x)\right]$, 
$\E_{\vect x\sim\N(0, I_{d})}\left[\phi_{\text{up}}(\vect x)-\psi_{\text{up}}(\vect x)\right]$
and $\E_{\vect x\sim\N(0, I_{d})}\left[\psi_{\text{up}}(\vect x)-\psi_{\text{down}}(\vect x)\right]$
we conclude that the quantity $\E_{\vect x\sim\N(0, I_{d})}[\phi_{\text{up}}(\vect x)-\phi_{\text{down}}(\vect x)]$ is at most $ O(\frac{\log^{1.5}k_{2}}{k_{2}^{1/4}}\cdot\measuredangle(\vect v, \vect v'))+10\epsilon$,  as desired.
\end{proof}
It only remains to prove Claim \ref{claim:coefficient small} to conclude the proof of the completeness condition.
\begin{proof}[Proof of Claim \ref{claim:coefficient small}]
Corrollary \ref{prop: sandwiching wrt covering implies fooling}
says that for any value of $t$,  the degree-$k_{2}$ one-dimensional
polynomials $R_{\text{up}}^{t}(z)$ and $R_{\text{down}}^{t}(z)$
have all their coefficients bounded by $O\left(2^{10k_{2}}\right)$.
If one substitutes $\vect v\cdot\vect x$ in place of $z$ into either
of these polynomials and opens the parentheses,  the fact that $\vect v$
is a unit vector allows us to bound the size of the largest coefficients
of $R_{\text{down}}^{t}(\vect v_{\perp}\cdot\vect x)$ and $R_{\text{up}}^{t}(\vect v_{\perp}\cdot\vect x)$
by $O((d+1)^{k_{2}}(k_{2}+1)2^{10k_{2}})=O(d^{10k_{2}})$, 
proving the claim.
\end{proof}

\subsection{Miscellaneous Claims}
\begin{claim}
\label{claim: deterministic moments in a strip}There is a deterministic
algorithm that given a unit vector $\vect v$ in $\R^{d}$,  scalars
$a$ and $b$,  a monomial $m$ over $\R^{d}$ of degree at most $k_{2}$, 
an accuracy parameter $\beta\in(0, 1]$,  runs in time $\poly\left(\left(k_{2}d\right)^{k_{2}}/\beta\right)$
and computes an approximation of $\E_{\vect x\sim\N(0, I_{d})}\left[m(\vect x)\cdot\indicator_{a\leq\vect x\cdot\vect v<b}\right]$
up to an additive error $\beta$.
\end{claim}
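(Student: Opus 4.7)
The plan is to reduce the $d$-dimensional integral to a product of one-dimensional Gaussian integrals by choosing a convenient orthonormal basis aligned with $\vect v$.

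First, I would construct (via Gram--Schmidt on $\vect v$ together with the standard basis) an orthogonal matrix $Q\in\R^{d\times d}$ with $Q\vect v = \e_1$, and make the substitution $\y = Q\x$. Because $\N(0,I_d)$ is rotationally invariant, $\y\sim\N(0,I_d)$, and under this substitution $\vect v\cdot\vect x = y_1$ while $\indicator_{a\le \vect v\cdot\vect x<b} = \indicator_{a\le y_1<b}$. The input monomial $m(\x)=\prod_{j}x_j^{\alpha_j}$ with $\sum_j\alpha_j\le k_2$ then becomes
\[
m(Q^\top \y)=\prod_{j}\Bigl(\sum_{i}(Q^\top)_{ji}\,y_i\Bigr)^{\alpha_j},
\]
which, after multinomial expansion, is a polynomial $\tilde p(\y)$ of total degree at most $k_2$ with at most $\binom{d+k_2}{k_2}\le (d+1)^{k_2}$ monomials, each of whose coefficients can be computed exactly in time $\poly((k_2 d)^{k_2})$.

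Next, by linearity of expectation, it suffices to compute, for each multi-index $\vect\beta=(\beta_1,\dots,\beta_d)$ with $\|\vect\beta\|_1\le k_2$ appearing in $\tilde p$, the quantity
\[
\E_{\y\sim\N(0,I_d)}\!\Bigl[\,y_1^{\beta_1}\!\cdots y_d^{\beta_d}\,\indicator_{a\le y_1<b}\Bigr]
=\E\bigl[y_1^{\beta_1}\indicator_{a\le y_1<b}\bigr]\cdot\prod_{i\ge 2}\E[y_i^{\beta_i}],
\]
where we used independence of the coordinates under $\N(0,I_d)$. The product over $i\ge 2$ is $0$ if any $\beta_i$ (for $i\ge 2$) is odd, and otherwise equals $\prod_{i\ge 2}(\beta_i-1)!!$, which is available in closed form.

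It therefore remains to compute, to additive accuracy $\beta' = \beta/((d+1)^{k_2}\cdot\max_{\vect\beta}\!\|\tilde p_{\vect\beta}\|\cdot\poly(k_2))$, the one-dimensional integral $\mu_k(a,b):=\int_a^b t^{k}\,\phi(t)\,dt$ with $\phi(t)=(2\pi)^{-1/2}e^{-t^2/2}$, for $k\le k_2$. Using the Gaussian tail bound, truncating the integration range to $[-R,R]$ with $R=O(\sqrt{k_2+\log(1/\beta')})$ introduces error at most $\beta'/2$; on the bounded interval, an equispaced composite Simpson's rule (or any polynomial-time quadrature rule of degree $>k_2$) with $\poly(k_2,\log(1/\beta'))$ nodes yields the remaining accuracy, since the integrand and all its relevant derivatives are bounded on $[-R,R]$ by $\exp(\poly(k_2,\log(1/\beta')))$. (Alternatively, one may use the standard recursion $\mu_k(a,b) = -[t^{k-1}\phi(t)]_a^b + (k-1)\mu_{k-2}(a,b)$, bottomed out at $\mu_0(a,b)=\tfrac12(\operatorname{erf}(b/\sqrt 2)-\operatorname{erf}(a/\sqrt 2))$ and $\mu_1(a,b)=\phi(a)-\phi(b)$, and evaluate $\operatorname{erf}$ to accuracy $\beta'$ in $\poly(\log(1/\beta'))$ time via its Taylor series.) The cases $a=-\infty$ or $b=+\infty$ are handled by replacing them with $\mp R$ before integrating.

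Summing all contributions over the $O((d+1)^{k_2})$ monomials of $\tilde p$ and carrying out every arithmetic operation to the precision dictated by $\beta'$ gives the claimed additive error $\beta$ and total running time $\poly((k_2 d)^{k_2}/\beta)$. The main technical point to check is the bookkeeping of precision through the multinomial expansion, so that the inflation of coefficient magnitudes (bounded by $(k_2+1)^{k_2}$ times a polynomial in $d$, since $Q$ has entries of magnitude at most $1$) is absorbed into the polynomial dependence on $(k_2 d)^{k_2}/\beta$; no genuinely hard estimate is involved.
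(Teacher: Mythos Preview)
Your proposal is correct and follows essentially the same approach as the paper: rotate to a basis aligned with $\vect v$, expand the monomial in the new coordinates, factor the expectation using independence of Gaussian coordinates, and reduce to a one-dimensional integral $\int_a^b t^k\phi(t)\,dt$ that is evaluated by truncation and discretization. The only cosmetic differences are that the paper writes the rotation as choosing an orthonormal basis $\{\vect w_1,\dots,\vect w_{d-1},\vect v\}$ rather than a matrix $Q$, and it does not mention the recursion/erf alternative you offer.
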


\begin{proof}
Firstly,  we compute an orthonormal basis $\left\{ \vect w_{1}, \cdots, \vect w_{d-1}\right\} $
for the $(d-1)$-dimensional subspace of $\R^{d}$ that is orthogonal
to $\vect v$. We express $m(\vect x)=p(\vect{w_{1}}\cdot\vect x, \cdots, \vect{w_{d-1}}\cdot\vect x, \vect v\cdot\vect x$), 
and note that the polynomial $p$ has all its coefficients between
$0$ and $\left(d+1\right)^{k_{2}}$,  and $p$ is comprised of at
most $\left(d+1\right)^{k_{2}}$ monomials. Thus,  to have an additive
$\beta$-approximation for $\E_{\vect x\sim\N(0, I_{d})}\left[m(\vect x)\cdot\indicator_{a\leq\vect x\cdot\vect v<b}\right]$, 
it sufficies to compute for every monomial $m'$ of degree at most
$k_{2}$ an additive $\frac{\beta}{d^{2k_{2}}}$-approximation to the quantity
\[
\E_{\vect x\sim\N(0, I_{d})}\left[m'(\vect{w_{1}}\cdot\vect x, \cdots, \vect{w_{d-1}}\cdot\vect x, \vect v\cdot\vect x)\cdot\indicator_{a\leq\vect v\cdot\vect x<b}\right]\,  , 
\]
which via the spherical symmetry of $\N(0, I_{d})$ equals to $\E_{\vect x\sim\N(0, I_{d})}\left[m'(\vect x)\cdot\indicator_{a\leq x_{1}<b}\right]$.

Secondly,  for every monomial $m'$ of degree at most $k_{2}$,  we
compute an approximation of the quantity $\E_{\vect x\sim\N(0, I_{d})}\left[m'(\vect x)\cdot\indicator_{a\leq x_{1}<b}\right]$
up to an additive error of $\frac{\beta}{10d^{k_{2}}}$. To this end, 
we write $m'(\vect x)=\prod_{i}\left(x_{i}\right)^{\coefficientvector_{i}}$ where
$\sum_{i}\coefficientvector_{i}\leq k_{2}$ and see that 
\[
\E_{\vect x\sim\N(0, I_{d})}\left[m'(\vect x)\cdot\indicator_{a\leq x_{1}<b}\right]=\underbrace{\left(\prod_{i>1}(\coefficientvector_{i}-1)!!\indicator_{\coefficientvector_{i}\text{ is even}}\right)}_{\leq k_{2}^{10k_{2}}}\frac{1}{\sqrt{2\pi}}\int_{a}^{b}e^{-z^{2}/2}z^{\coefficientvector_{1}}\d z.
\]
Note that $\coefficientvector_{1}$ is an integer between $0$ and $k_{2}$. Since
we were seeking to compute a $\frac{\beta}{d^{2k_{2}}}$-approximation
to $\E_{\vect x\sim\N(0, I_{d})}\left[m'(\vect x)\cdot\indicator_{a\leq x_{1}<b}\right]$, 
we see that this approximation can be obtained from the equaiton above
together with an additive $\frac{\beta}{\left(d+1\right)^{2k_{2}}k^{10k_{2}}}$-approximation
to $\frac{1}{\sqrt{2\pi}}\int_{a}^{b}e^{-z^{2}/2}z^{\coefficientvector_{1}}\d z$.
We denote $\rho(z)=e^{-z^{2}/2}z^{\coefficientvector_{1}}$,  and let $\beta'=\frac{\beta}{\left(d+1\right)^{2k_{2}}k_{2}^{10k_{2}}}$
. We see that the function $\rho$ has the following key properties:
\begin{enumerate}
\item For all $z$ in $\R^{d}$,  the derivative $\rho'(z)=\coefficientvector_{1}e^{-z^{2}/2}z^{\coefficientvector_{1}-1}\indicator_{\coefficientvector_{1}\geq1}-e^{-z^{2}/2}z^{\coefficientvector_{1}+1}$
we have $\abs{\rho'(z)}\leq\left(k_{2}+1\right)^{k_{2}+1}$
\item For all $z_{0}$ in $\R^{d}$ satisfying $z_{0}>4k_{2}+2$ the value
$\int_{\abs z>z_{0}}\abs{e^{-z^{2}/2}z^{\coefficientvector_{1}}}\d z$ is at most
$\int_{\abs z>z_{0}}e^{-z^{2}/4}\d z$ which in turn is at most $e^{-z_{0}^{2}/4}$.
\end{enumerate}
The three properties above imply that one can approximate the value
of $\int_{a}^{b}\rho(z)\d z$ up to an additive error of $\beta'$
via discretization,  i.e., by splitting
the interval $[a, b]\cap[-\sqrt{2\ln(\beta')}, \sqrt{2\ln\left(\beta'\right)}]$
into intervals of size at most $\Delta$ and for each of these intervals
$[a'_{j}, b'_{j}]$ use the inequality 
\[
\int_{a_{j}'}^{b_{j}'}\rho(z)\d z=\rho(a_{j}')(a_{j}'-b_{j}')\pm\left(\sup_{z\in\R}\abs{\rho'(z)}\right)(a_{j}'-b_{j}')^{2}, 
\]
which implies that 
\begin{align*}
\int_{z\in[a, b]}\rho(z)\d z &=\overbrace{\int_{z\in[a, b]\cap\left[-\sqrt{2\ln\left(\beta'\right)}, \sqrt{2\ln\left(\beta'\right)}\right]}\rho(z)\d z\pm\frac{\beta'}{2}}^{\text{by property (2) of }\rho}\\
&=\sum_{j}\left(\rho(a_{j}')(a_{j}'-b_{j}')\pm\left(\sup_{z\in\R}\abs{\rho'(z)}\right)(a_{j}'-b_{j}')^{2}\right)\pm\frac{\beta'}{2}\\
&=\sum_{j}\rho(a_{j}')(a_{j}'-b_{j}')\pm\Biggr(\underbrace{\left(\sup_{z\in\R}\abs{\rho'(z)}\right)}_{\substack{\leq\left(k_{2}+1\right)^{k_{2}+1}\\
\text{by property (1) of }\rho
}
}\sqrt{8\ln\left(\beta'\right)}\Delta+\frac{\beta'}{2}\Biggr), 
\end{align*}
which implies that if we take $\Delta$ to be $\frac{\beta'}{\sqrt{8\ln\left(\beta'\right)}\left(k_{2}+1\right)^{k_{2}+1}}$, 
then 
\[
\sum_{j}\rho(a_{j}')(a_{j}'-b_{j}')=\int_{z\in[a, b]}\rho(z)\d z\pm\beta'.
\]
Overall,  evaluating the sum above requires one to compute $\rho(a_{j}')$
on $\poly((k_{2})^{k_{2}}/\beta')$ values of
$a_{j}'$. Therefore,  substituting $\beta'=\frac{\beta}{(d+1)^{2k_{2}}k_{2}^{10k_{2}}}$
so it can be computed in time $\poly((k_{2}d)^{k_{2}}/\beta)$.
\end{proof}

\section{Spectral Tester}\label{appendix:spectral-tester}

In this section we prove the following theorem.
\begin{theorem}
\label{thm: spectral tester} There exists some absolute constant $C$ and a deterministic
algorithm $\Tspectral$ that,  given
\begin{itemize}
\item A positive integer $U\geq\left(\frac{Cd}{\epsilon\delta}\right)^{C}.$
\item a dataset $S$ of points in $\R^{d}$ of size $M\leq U$. 
\item a unit vector $\vect v$ in $\R^{d}$,  
\item parameters $\epsilon, \delta$ and $\mu$ in $(0, 1)$.
\end{itemize}
For every positive absolute constant
$\mu$,  the algorithm $\Tspectral$
runs in time $poly\left(\frac{dU}{\epsilon\delta}\right)$ and outputs
$\accept$or output $\reject$. For all $\epsilon,  \delta$ and $U\geq\left(\frac{Cd}{\epsilon\delta}\right)^{C}$ the algorithm $\Tspectral$ satisfies the following:
\begin{itemize}
\item \textbf{Completeness:} 
 If $S$ consists of $M\leq U$ i.i.d. samples from the
standard Gaussian distribution,  then with probability at least $1-O\left(\delta\right)$ the set $S$ is such that for all unit vectors $\vect{v}$ the algorithm $\Tspectral$ accepts when given $(U,  S,  \vect v,  \epsilon,  \delta,  \mu)$ as the input.
\item \textbf{Monotonicity under Datapoint Removal: }If the algorithm $\Tspectral$
outputs $\accept$ for some specific input $(U,  S,  \vect v,  \epsilon,  \delta,  \mu)$,  then for all subsets $S'\subset S$
the tester $\Tspectral$ will also accept the input $(U,  S',  \vect v,  \epsilon,  \delta,  \mu)$.
\item \textbf{Soundness:} For any dataset $S$ and unit vector $\vect v$, 
if the tester $\mathcal{\Tspectral}$ accepts the input $(U,  S,  \vect v,  \epsilon,  \delta,  \mu)$ then for every unit vector $\vect v'$ in $\R^{d}$
we have
\[
\frac{1}{U}\sum_{\vect x\in S}\left[\indicator_{\sign(\vect x\cdot\vect v)\neq\sign(\vect x\cdot\vect v')}\right]\leq(1+\mu)\frac{\measuredangle(\vect v, \vect v')}{\pi}+O(\epsilon).
\]

\end{itemize}
\end{theorem}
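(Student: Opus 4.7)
The plan is to closely mirror the structure of the proof of \Cref{thm: disagreement tester} but replace Chow-parameter matching with positive-semidefinite (PSD) matrix inequalities, which are inherently one-sided and hence well-suited both to proving only an upper bound on disagreement and to ensuring monotonicity under datapoint removal. The bucketing by strips $\mathcal{I}_i(\x) = \ind\{i\eps \le \vv\cdot\x < (i+1)\eps\}$, the choice $K = O(\eps^{-1}\sqrt{\log(1/\eps)})$, and the two tail-bucket checks will be identical to \Cref{algorithm:disagreement-tester}; only the per-bucket test changes.

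First, I would establish soundness. Following the construction in \Cref{thm: sandwiching degree of purple}, for any unit $\vv'$ the disagreement indicator $f(\x)=\ind\{\sign(\vv\cdot\x)\neq\sign(\vv'\cdot\x)\}$ is dominated pointwise by a piecewise-polynomial upper sandwicher $\phi_{\mathrm{up}}(\x) = \sum_i p^{(i)}_{\mathrm{up}}(\x)\,\mathcal{I}_i(\x)$, with a matching $L_1$-error bound of $O(\mu\,\measuredangle(\vv,\vv')/\pi)+O(\eps)$ under $\Gauss$. The critical strengthening I would invoke, available from the stronger sandwichers of \cite{klivans2023testable}, is that each $p^{(i)}_{\mathrm{up}}$ can be taken to be a square $q_i(\x)^2$ of a polynomial of degree at most $k/2$, with coefficients bounded by $d^{O(k)}$. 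Given this, the PSD inequality $\E_{\x\sim S}[(\x^{\otimes k})(\x^{\otimes k})^\top \mathcal{I}_i(\x)] \preceq \E_{\x\sim \Gauss}[(\x^{\otimes k})(\x^{\otimes k})^\top \mathcal{I}_i(\x)] + \Delta I$ certified by \Cref{algorithm:spectral-tester} applied to the coefficient vector of $q_i$ yields $\E_{\x\sim S}[q_i(\x)^2 \mathcal{I}_i(\x)] \le \E_{\x\sim \Gauss}[q_i(\x)^2 \mathcal{I}_i(\x)] + \Delta\,\|q_i\|_{\mathrm{coef}}^2$, and since $f \le q_i^2$ on $\supp(\mathcal{I}_i)$ we conclude $\E_{\x\sim S}[f(\x)\mathcal{I}_i(\x)] \le \E_{\x\sim \Gauss}[q_i(\x)^2 \mathcal{I}_i(\x)] + \Delta\,\|q_i\|_{\mathrm{coef}}^2$. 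Summing over the $O(K)$ buckets (plus the two tails) and invoking the $L_1$ sandwiching error gives $\E_{\x\sim S}[f(\x)] \le (1+\mu)\measuredangle(\vv,\vv')/\pi + O(\eps)$ once $\Delta = \eps^2/(CKd^{Ck})$ absorbs the total coefficient-norm slack. Finally, $\frac{1}{U}\sum_{\x\in S}f(\x) = \frac{|S|}{U}\E_{\x\sim S}[f(\x)] \le \E_{\x\sim S}[f(\x)]$ since $|S|\le U$, giving the claimed bound.

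Monotonicity under removal is essentially immediate from the PSD formulation: for any $S'\subseteq S$, the matrix $\sum_{\x\in S\setminus S'}(\x^{\otimes k})(\x^{\otimes k})^\top \mathcal{I}_i(\x)$ is a nonnegative sum of rank-one PSD matrices, so $\E_{\x\sim S'}[(\x^{\otimes k})(\x^{\otimes k})^\top \mathcal{I}_i(\x)] \preceq \E_{\x\sim S}[(\x^{\otimes k})(\x^{\otimes k})^\top \mathcal{I}_i(\x)]$; the certified spectral upper bound therefore passes to every subsample. This is exactly the property the Chow-matching test of \Cref{thm: disagreement tester} fails to enjoy, and is the reason the spectral tester is used in the Massart analysis.

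For completeness, when $S$ is an i.i.d. Gaussian sample of size at least $(Cd/(\eps\delta))^C$, the entrywise concentration of $\E_{\x\sim S}[\prod_j x_j^{\alpha_j}\mathcal{I}_i(\x)]$ provided by (the relevant specialization of) \Cref{lem: concentration for moments}, combined with the fact that an $\eta$-entrywise bound on a $d^{O(k)}\times d^{O(k)}$ matrix implies an $\eta\, d^{O(k)}$ spectral bound, is absorbed by $\Delta I$ after a union bound over the $O(K)$ buckets. The main obstacle is the first step: extracting from \cite{klivans2023testable} the precise sandwiching statement I need, namely a piecewise sum-of-squares upper bound for the disagreement region with both the required $L_1$ closeness of $O(\mu\,\measuredangle(\vv,\vv')/\pi)+O(\eps)$ and coefficient control $\|q_i\|_{\mathrm{coef}} \le d^{O(k)}$, so that the total slack $\sum_i \Delta\,\|q_i\|_{\mathrm{coef}}^2$ is $O(\eps)$ at the chosen $\Delta$. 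Once this piecewise SOS sandwicher is in hand, the rest of the proof is a direct adaptation of \Cref{subsec: Soundness for disagreement tester} and \Cref{subsec: disagreement tester completeness}.
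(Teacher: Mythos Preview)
Your overall architecture---bucketing, one-sided PSD tests, monotonicity via positivity of rank-one terms, completeness via entrywise concentration---matches the paper. The gap is in the soundness argument, specifically your ``critical strengthening'': you assume that the upper sandwicher on each bucket can be taken to be a \emph{perfect square} $q_i(\x)^2$. This is not what \cite{klivans2023testable} gives you; that reference supplies $L_2$ (not sum-of-squares) sandwiching polynomials, and there is no reason to expect a nonnegative polynomial upper bound for a halfspace indicator to factor as a single square of half the degree. You yourself flag this as ``the main obstacle,'' but as stated it is not an extraction problem---the claimed object does not exist in the cited work.

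The paper sidesteps this entirely with a different trick. Since $f$ is $\{0,1\}$-valued and $p_{\mathrm{up}}\ge f$ pointwise, one has $f=f^2\le p_{\mathrm{up}}^2$ everywhere (when $f=1$ then $p_{\mathrm{up}}\ge 1$ so $p_{\mathrm{up}}^2\ge 1$; when $f=0$ then $p_{\mathrm{up}}^2\ge 0$). Thus $\frac{1}{U}\sum_{\x\in S} f(\x)\mathcal{I}_i(\x)\le \frac{1}{U}\sum_{\x\in S} (p_{\mathrm{up}}^{(i)}(\x))^2\mathcal{I}_i(\x)$, and the spectral test (applied to the coefficient vector of $p_{\mathrm{up}}^{(i)}$ itself, degree $k$) bounds the right side by $\E_{\Gauss}[(p_{\mathrm{up}}^{(i)})^2\mathcal{I}_i]+\Delta\|p_{\mathrm{up}}^{(i)}\|_{\mathrm{coef}}^2$. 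The remaining step is to control $\E_{\Gauss}[p_{\mathrm{up}}^2]$ in terms of $\E_{\Gauss}[f]$: this is exactly where $L_2$ sandwiching (not $L_1$) is needed, via the triangle inequality in $L_2$, giving $\sqrt{\E_{\Gauss}[p_{\mathrm{up}}^2]}\le \sqrt{\E_{\Gauss}[f]}+\sqrt{\nu}$ with $\nu=O(\measuredangle(\vv,\vv')\cdot k^{-1/4}\,\mathrm{polylog}(k))+O(\eps)$. Squaring then delivers the $(1+\mu)$ multiplicative bound. So the fix is: drop the SOS assumption, use $f\le p_{\mathrm{up}}^2$ directly, and upgrade your piecewise sandwiching statement from $L_1$ to $L_2$ (this is what \cite{klivans2023testable} actually provides, and the paper proves the piecewise version as an $L_2$ analogue of \Cref{thm: sandwiching degree of purple}).
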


We argue that the following algorithm (which is essentially a restatement of \Cref{algorithm:spectral-tester}) satisfies the specifications
above:
\begin{itemize}
\item \textbf{Given: }parameter $\epsilon, \delta,  \mu$ in $(0, 1)$,  dataset
$S$ of points in $\R^{d}$ of size $M\leq U$,  a unit vector $\vect v$
in $\R^{d}$,  
\end{itemize}
\begin{enumerate}
\item $k_{1}\leftarrow\frac{2\sqrt{\log2/\epsilon}}{\epsilon}$, $k_{2}\leftarrow\frac{C^{0.1}}{\mu^{5}}$
\item $\Delta \leftarrow \frac{60 (4k_2)^{2k_2+2} (d+1)^{6k_2+2} }{\delta}
(
\frac{ \log U}{U }
)^{1/4}$
\item For all $a$ and \textbf{$b$} in $\left\{ -\infty, -k_{1}\epsilon, -(k_{1}-1)\epsilon, \cdots, -\epsilon, 0, +\epsilon, \cdots, (k_{1}-1)\epsilon, k_{1}\epsilon, +\infty\right\} $
\begin{enumerate}
\item Compute $W^{a, b}$ such that 
\begin{multline}
W^{a, b}-
\Delta
I_{\binom{d+1}{k_{2}}\times\binom{d+1}{k_{2}}}
\preceq
\underset{\vect x\sim\N}{\E}\bigr[\bigr(\vect x^{\otimes k_{2}}\bigr)\bigr(\vect x^{\otimes k_{2}}\bigr)^{\top}\cdot\indicator_{a\leq\vect x\cdot\vect v<b}\bigr]\preceq
W^{a, b}+\Delta I_{\binom{d+1}{k_{2}}\times\binom{d+1}{k_{2}}}\label{eq: deterministic spectral approximation}
\end{multline}
(For how to compute this approximation,  see Claim \ref{claim: deterministic spectral approximation to moments in a strip}).
\item If the following does not hold: 
\begin{equation}
\frac{1}{U}\sum_{\vect x\in S}\left(\vect x^{\otimes k_{2}}\right)\left(\vect x^{\otimes k_{2}}\right)^{\top}\indicator_{a\leq\vect x\cdot\vect v<b}\preceq W^{a, b}+3\Delta I_{\binom{d+1}{k_{2}}\times\binom{d+1}{k_{2}}}, \label{eq: spectral condition}
\end{equation}
then output $\reject$.
\end{enumerate}
\item If did not reject in any previous step,  output $\accept$.
\end{enumerate}
It is immediate that the algorithm indeed runs in time $poly\left(\frac{dU}{\epsilon\delta}\right)$, 
because step (2b) can be performed by computing the largest eigenvalue
of a $\binom{d+1}{k_{2}}\times\binom{d+1}{k_{2}}$-sized matrix.
Monotonicity over datapoint removal also follows immediately since
if $S'\subset S$ then 
\[
\frac{1}{U}\sum_{\vect x\in S'}\left(\vect x^{\otimes k_{2}}\right)\left(\vect x^{\otimes k_{2}}\right)^{\top}\preceq\frac{1}{U}\sum_{\vect x\in S}\left(\vect x^{\otimes k_{2}}\right)\left(\vect x^{\otimes k_{2}}\right)^{\top}, 
\]
and therefore if the condition in step (4) holds for $S$ then it
will also hold for $S'$.

\subsection{Completeness}

Since we have already proven the property of monotonicity under datapoint
removal,  we can assume without loss of generality that $M=U$. If
not,  the set $S$ can be obtained by first taking $U$ samples from
$\N(0, I_{d})$ and then removing the last $U-M$ of them. If the $\Tspectral$
accepted the dataset before removing these points,  then it will also
accept it after these datapoints are removed.

Suppose the set dataset $S$ consists of $U$ i.i.d. samples from
$\N(0, I_{d})$. 
Similar to Section \ref{subsec: disagreement tester completeness},  we again note that the collection $\mathcal{H}$ of sets of the form $\indicator_{a\leq \vect v \cdot \vect x < b}$ has VC dimension at most $(d+1)^2$.
Lemma \ref{lem: concentration for moments} then
implies that with probability at least $1-\delta$ for all  pairs
of $a$ and $b$,  for every unit vector $\vect v$ and for every polynomial $p$ of degree at most
$k_{2}$,  if $B_{p}$ denotes the largest coefficient of $p$ (in absolute
value) then we have
\[
\left|\frac{1}{U}\sum_{\vect x\in S}\left[\left(p(\vect x)\right)^{2}\indicator_{\vect x\in A}\right]-\E_{\vect x\sim\N(0, I_{d})}\left[\left(p(\vect x)\right)^{2}\indicator_{\vect x\in A}\right]\right|\leq2B_{p}^{2}\left(d+1\right)^{5k_{2}}\sqrt{\frac{(4k_{2})^{2k_{2}+2}}{\delta U}}, 
\]
\[
\left|\frac{1}{U}\sum_{\vect x\in S}\left[\left(p(\vect x)\right)^{2}\indicator_{\vect x\in A}\right]-\E_{\vect x\sim\N(0, I_{d})}\left[\left(p(\vect x)\right)^{2}\indicator_{\vect x\in A}\right]\right|\leq
\frac{60 B_p^2 (4k_2)^{2k_2+2} (d+1)^{6k_2+2} }{\delta}
\left(
\frac{ \log U}{U }
\right)^{1/4}, 
\]

Combining this with Equation \ref{eq: coefficient L2 norm of polynomial vs largest coefficient}
we get
\[
\frac{1}{U}\sum_{\vect x\in S}\left[\left(p(\vect x)\right)^{2}\indicator_{\vect x\in A}\right]\leq\E_{\vect x\sim\N(0, I_{d})}\left[\left(p(\vect x)\right)^{2}\indicator_{\vect x\in A}\right]+\left(\norm p_{\mathrm{coeff}}\right)^{2}
\frac{60 (4k_2)^{2k_2+2} (d+1)^{6k_2+2} }{\delta}
\left(
\frac{ \log U}{U }
\right)^{1/4}
\]
and Claim \ref{claim: deterministic spectral approximation to moments in a strip} implies that Equation \ref{eq: spectral approximation to moments in a strip}
holds which implies that 
\[
p^{\top}W^{a, b}p\geq\E_{\vect x\sim\N(0, I_{d})}\left[\left(p(\vect x)\right)^{2}\indicator_{\vect x\in A}\right]+\left(\norm p_{\mathrm{coeff}}\right)^{2}\frac{60 B_p^2 (4k_2)^{2k_2+2} (d+1)^{6k_2+2} }{\delta}
\left(
\frac{ \log U}{U }
\right)^{1/4}.
\]
Combining the last two equations above,  we get:

\[
\frac{1}{U}\sum_{\vect x\in S}\left[\left(p(\vect x)\right)^{2}\indicator_{\vect x\in A}\right]\leq p^{\top}W^{a, b}p+\left(\norm p_{\mathrm{coeff}}\right)^{2}\frac{120 B_p^2 (4k_2)^{2k_2+2} (d+1)^{6k_2+2} }{\delta}
\left(
\frac{ \log U}{U }
\right)^{1/4}.
\]
Recalling the notation in \ref{subsec: Some standard notation.}, 
we see that the assertion that the inequality above holds for every
$p$,  is equivalent to the following matrix inequality
\[
\frac{1}{U}\sum_{\vect x\in S}\left[\vect x^{\otimes k_{2}}\left(\vect x^{\otimes k_{2}}\right)^{\top}\indicator_{\vect x\in A}\right]\preceq W^{a, b}+\frac{120 B_p^2 (4k_2)^{2k_2+2} (d+1)^{6k_2+2} }{\delta}
\left(
\frac{ \log U}{U }
\right)^{1/4}
I_{\binom{d+1}{k_{2}}\times{\binom{d+1}{k_2}}}.
\]
Finally,  substituting $k_{2}=\frac{C^{0.1}}{\mu^{5}}$ and $U\geq\left(\frac{Cd}{\epsilon\delta}\right)^{C}$, 
we see that for a sufficiently large absolute constant $C$,  the inequality
above implies Equation \ref{subsec: Some standard notation.},  and
thus $\Tspectral$ accepts.

\subsection{Soundness}
In order to deduce the soundness condition,  expand upon definitions
introduced in \ref{subsec: Soundness for disagreement tester}. We emphasize that unlike the $L_1$-sandwiching degree used to analyze the disagreement tester,  here we use the notion of $L_{2}$-sandwiching
polynomials.

\begin{definition}
We say that a function $f:\R^{d}\rightarrow\left\{ 0, 1\right\} $
is $\epsilon$-sandwiched in $L_{2}$ norm between a pair of functions
$f_{\text{up}}:\R^{d}\rightarrow\R$ and $f_{\text{down}}:\R^{d}\rightarrow\R$
under $\N(0, I_{d})$ if:
\begin{itemize}
\item For all $\vect x$ in $\R^{d}$ we have $f_{\text{down}}(\vect x)\leq f(\vect x)\leq f_{\text{up}}(\vect x)$
\item $\E_{\vect x\sim\N(0, I_{d})}\left[\left(f_{\text{up}}(\vect x)-f_{\text{down}}(\vect x)\right)^{2}\right]\leq\epsilon$.
\end{itemize}
\end{definition}

\begin{definition}
\label{def: sandwiching wrt partition-1} We say that a function $f:\R^{d}\rightarrow\left\{ 0, 1\right\} $
has $(\epsilon, B)$-sandwiching degree of at most $k$ in $L_{2}$
norm under $\N(0, I_{d})$ with respect to a partition $\mathcal{C}$
of $\R^{d}$ if the function $f$ is $\epsilon$-sandwiched in $L_{2}$
norm under $\N(0, I_{d})$ between $\sum_{A\in\mathcal{C}}\left(p_{\text{down}}^{A}\indicator_{A}\right)$
and $\sum_{A\in\mathcal{C}}\left(p_{\text{up}}^{A}\indicator_{A}\right)$, 
where $p_{\text{up}}^{A}$ and $p_{\text{down}}^{A}$ are degree$-k$
polynomials over $\R^{d}$whose coefficients are bounded by $B$ in
absolute value.
\end{definition}

Subsection \ref{subsec: bounding sandwiching degree-1} is dedicated
to proving the following bound on the $L_{_{2}}$-sandwiching degree
of a specific family of functions with respect to a specific partition
of $\R^{d}$.
\begin{proposition}
\label{thm: sandwiching degree of purple-1}For all $\epsilon$ and
$k_{2}$,  let $k_{1}=\frac{2\sqrt{\log2/\epsilon}}{\epsilon}$,  and
let $\vect v$ be a unit vector in $\R^{d}$. Then,  there exists a
partition $\mathcal{C}$ of $\R^{d}$ consisting of sets of the form
$\{ \vect x\in\R^{d}:a\leq\vect v\cdot\vect x\leq b\} $
for a certain collection of pairs $a, b$ in $\{ -\infty, -k_{1}\epsilon, -(k_{1}-1)\epsilon, \cdots, -\epsilon, 0, +\epsilon, \cdots, (k_{1}-1)\epsilon, k_{1}\epsilon, +\infty\} $.
Then,  for every unit vector $\vect v'$,  the function $f(\vect x)=\indicator_{\sign(\vect v\cdot\vect x)\neq\sign(\vect v'\cdot\vect x)}$
has $(O(\measuredangle(\vect v, \vect v')\frac{\log^{5}k_{2}}{k_{2}^{1/4}}\cdot)+10\epsilon, O(d^{10k_{2}}))$-sandwiching
degree of at most $k_{2}$ in $L_{2}$ norm under $\N(0, I_{d})$ with
respect to the partition $\mathcal{C}$ of $\R^{d}$.
\end{proposition}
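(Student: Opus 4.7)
The plan is to mirror the proof of Proposition \ref{thm: sandwiching degree of purple}, replacing the $L_1$-sandwiching polynomials of \cite{diakonikolas2010bounded,gopalan2010fooling} with the stronger $L_2$-sandwiching polynomials from \cite{klivans2023testable} referenced in the body of the paper. Concretely, I would first establish the $L_2$ analog of Corollary \ref{corr: slightly stengthened 1-d sandwiching}: for every $t \in \R$ and positive integer $k \ge 2$ there exist degree-$k$ univariate polynomials $R_{\text{down}}^t, R_{\text{up}}^t$ with all coefficients of magnitude $2^{O(k)}$ satisfying $R_{\text{down}}^t(z) \le \indicator_{z \le t} \le R_{\text{up}}^t(z)$ pointwise and $\E_{z \sim \N(0,1)}[(R_{\text{up}}^t(z) - R_{\text{down}}^t(z))^2] \le O(\min(\log^{O(1)}(k)/\sqrt{k},\, 1/t^2))$. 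The small-$|t|$ branch is handled by the same quadratic construction as in Corollary \ref{corr: slightly stengthened 1-d sandwiching}, and the large-$|t|$ branch is the Klivans et al.\ bound.

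Next, I would reuse verbatim the piecewise definitions of $\phi_{\text{up}}$ and $\phi_{\text{down}}$ from equation (\ref{eq: phi are sandwichers}): on each strip $\{\x : \vv\cdot\x \in [j\eps,(j+1)\eps)\}$, $\phi_{\text{up}}$ equals $R_{\text{up}}^{-j\eps/\tan\theta}(\vv_\perp \cdot \x)$ and $\phi_{\text{down}}$ equals $R_{\text{down}}^{-(j+1)\eps/\tan\theta}(\vv_\perp \cdot \x)$, with the same treatment of the far-tail strip $\vv\cdot\x \ge k_1\eps$. Because only the pointwise sandwiching and the coefficient bound are invoked to prove Claim \ref{claim:coefficient small} and partition-compatibility, both transfer without change and yield the coefficient bound $O(d^{10k_2})$ on the partition $\mathcal{C}$.

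The main technical step, and the crux of the argument, is the $L_2$ error bound $\E_{\x\sim\N}[(\phi_{\text{up}}(\x) - \phi_{\text{down}}(\x))^2] \le O(\theta \log^5 k_2 / k_2^{1/4}) + O(\eps)$. The structural observation that makes this work is that the strips together with the tail are pairwise disjoint, so
\[
(\phi_{\text{up}}(\x) - \phi_{\text{down}}(\x))^2 = \sum_{j=0}^{k_1-1} \indicator_{\vv\cdot\x\in[j\eps,(j+1)\eps)} \bigl(R_{\text{up}}^{-j\eps/\tan\theta}(\vv_\perp\cdot\x) - R_{\text{down}}^{-(j+1)\eps/\tan\theta}(\vv_\perp\cdot\x)\bigr)^2 + \indicator_{\vv\cdot\x\ge k_1\eps}
\]
with no cross terms. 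Using the independence of $\vv\cdot\x$ and $\vv_\perp\cdot\x$ under $\N(0,I_d)$, the expectation becomes $\sum_j \Pr[z_1\in[j\eps,(j+1)\eps)] \cdot \E_{z_2}[(R_{\text{up}}^{t_j^u}(z_2) - R_{\text{down}}^{t_j^d}(z_2))^2]$ plus a tail contribution of order $e^{-(k_1\eps)^2/2} = O(\eps^2)$ from the definition of $k_1$. Via $(a-b)^2 \le 2(a - \indicator_{z\le t})^2 + 2(\indicator_{z\le t}-b)^2$ with a common intermediate threshold and the strengthened 1D $L_2$-sandwiching bound, each inner expectation is $O(\min(\log^{O(1)}k_2/\sqrt{k_2},(\tan\theta/(j\eps))^2))$; the same discretization-to-strips slack of $O(\eps)$ from Equation (\ref{eq: rounding to steps does not introduce much error}) applies, since rounding the threshold to a grid of spacing $\eps$ costs only $O(\eps)$ probability mass on which the integrand is bounded by $1$.

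Finally, the sum over $j$ is bounded by precisely the same integral-comparison argument as Equation (\ref{eq: replacing the sum with an integral}), yielding $O(\tan\theta \cdot \log^{O(1)} k_2 / k_2^{1/4}) = O(\theta \log^5 k_2 / k_2^{1/4})$ with the $\log^5$ absorbing the larger power produced by squaring. Applying the symmetric $\x \mapsto -\x$ argument as in Equation (\ref{eq: sandwiching for the whole disagreement region}) handles the full disagreement region $\{\sign(\vv\cdot\x)\ne\sign(\vv'\cdot\x)\}$; since $\phi_{*}(\x)$ and $\phi_{*}(-\x)$ have disjoint support up to measure zero, the $L_2$ errors add without cross terms, giving the claimed $(O(\measuredangle(\vv,\vv')\log^5 k_2/k_2^{1/4})+10\eps,O(d^{10k_2}))$-sandwiching bound and completing the proof of Proposition \ref{thm: sandwiching degree of purple-1}.
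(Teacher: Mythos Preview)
Your proposal is correct and follows essentially the same approach as the paper's proof in Section \ref{subsec: bounding sandwiching degree-1}: invoke the $L_2$-sandwiching bound of \cite{klivans2023testable} (with error $O(\log^{10}k/\sqrt{k})$), strengthen it via the quadratic construction for large $|t|$ (Corollary \ref{corr: slightly stengthened 1-d sandwiching-1}), reuse the piecewise-polynomial sandwichers $\phi_{\text{up}}^{L_2},\phi_{\text{down}}^{L_2}$ on the strip partition, exploit strip disjointness to eliminate cross terms in the squared error, and finish with the same integral comparison (the $\log^5 k_2/k_2^{1/4}$ indeed arises as the square root of $\log^{10}k_2/\sqrt{k_2}$). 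Your remark that $\phi_*(\x)$ and $\phi_*(-\x)$ have disjoint support is slightly sharper than the paper's use of $(a+b)^2\le 2a^2+2b^2$ in Equation \eqref{eq: closensess for the whole disagreement region-1}, but the difference is only a constant factor.
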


A bound on the sandwiching degree of a class of functions leads to
a guarantee for the tester $\Tdisagree$:

\begin{proposition}
\label{prop: sandwiching wrt covering implies fooling-1} Let $\mathcal{C}$
be a partition of $\R^{d}$ and $f$ a $\left\{ 0, 1\right\} $-valued
function that has $(\nu, B)$-sandwiching degree of at most $k_{2}$
in $L_{2}$ norm under $\N(0, I_{d})$ with respect to the partition
$\mathcal{C}$. If a set $S$ of points in $\R^{d}$ satisfies the
following condition for all $A$ in $\mathcal{C}$ : 
\begin{equation}
\frac{1}{U}\sum_{\vect x\in S}(\vect x^{\otimes k_{2}})(\vect x^{\otimes k_{2}})^{\top}\indicator_{\vect x\in A}\preceq\E_{\vect x\sim\N(0, I_{d})}[(\vect x^{\otimes k_{2}})(\vect x^{\otimes k_{2}})^{\top}\indicator_{\vect x\in A}]+\frac{\epsilon^{2}}{|\mathcal{C}|B^{2}(d+1)^{k_{2}}}I_{\binom{d+1}{k_{2}}\times\binom{d+1}{k_{2}}}\label{eq:moment-matching premise-1}
\end{equation}
then we have
\[
\sqrt{\frac{1}{U}\sum_{\vect x\sim S}\left[\indicator_{f(\vect x)=1}\right]}\leq\sqrt{\pr_{\vect x\sim\N(0, I_{d})}[f(\vect x)=1]}+\sqrt{\nu}+\epsilon.
\]
\end{proposition}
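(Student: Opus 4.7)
The plan is to mimic the $L_1$-sandwiching argument of \Cref{prop: sandwiching wrt covering implies fooling}, but carry everything through the $L^2$ norm, and to exploit the spectral premise by plugging in the coefficient vector of the upper sandwicher as a specific quadratic form.

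First I would unpack the sandwiching hypothesis: there are polynomials $p_{\mathrm{down}}^{A}, p_{\mathrm{up}}^{A}$ of degree at most $k_2$ with coefficients bounded by $B$ such that, writing $g(\vect x) := \sum_{A\in\mathcal{C}} p_{\mathrm{up}}^{A}(\vect x)\indicator_{\vect x\in A}$ and $g_{\mathrm{down}}(\vect x) := \sum_{A} p_{\mathrm{down}}^{A}(\vect x)\indicator_{\vect x\in A}$, we have $g_{\mathrm{down}}\le f\le g$ pointwise and $\E_{\vect x\sim\N}[(g-g_{\mathrm{down}})^{2}]\le\nu$. Because the pieces $A$ partition $\R^d$, the cross terms vanish and $g^{2}(\vect x)=\sum_{A}(p_{\mathrm{up}}^{A}(\vect x))^{2}\indicator_{\vect x\in A}$; the same identity holds for $(g-g_{\mathrm{down}})^{2}$. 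Also, since $f\ge 0$ the upper sandwicher satisfies $p_{\mathrm{up}}^{A}\ge 0$ on $A$, so $g\ge 0$ and thus $f^{2}\le g^{2}$ pointwise.

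Next I would translate the matrix premise \eqref{eq:moment-matching premise-1} into a scalar statement for quadratic forms. For any degree-$k_2$ polynomial $p$ with coefficient vector $\vec p$, the quadratic form $\vec p^{\top}M\vec p$ applied to both sides of \eqref{eq:moment-matching premise-1} yields
\begin{equation*}
\frac{1}{U}\sum_{\vect x\in S}(p(\vect x))^{2}\indicator_{\vect x\in A}\;\le\;\E_{\vect x\sim\N}[(p(\vect x))^{2}\indicator_{\vect x\in A}]+\frac{\eps^{2}\,\|p\|_{\mathrm{coeff}}^{2}}{|\mathcal{C}|\,B^{2}(d+1)^{k_{2}}}.
\end{equation*}
I then plug in $p=p_{\mathrm{up}}^{A}$, which by \eqref{eq: coefficient L2 norm of polynomial vs largest coefficient} satisfies $\|p_{\mathrm{up}}^{A}\|_{\mathrm{coeff}}^{2}\le B^{2}(d+1)^{k_{2}}$, so the slack collapses to $\eps^{2}/|\mathcal{C}|$. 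Summing over $A\in\mathcal{C}$ and using the disjointness identity for $g^{2}$ gives
\begin{equation*}
\frac{1}{U}\sum_{\vect x\in S}g(\vect x)^{2}\;\le\;\E_{\vect x\sim\N}[g(\vect x)^{2}]+\eps^{2}.
\end{equation*}

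Finally I would chain norms. Using $f^{2}\le g^{2}$ pointwise on the left and $a^{2}+b^{2}\le (a+b)^{2}$ for nonnegative reals on the right, the previous display yields
\begin{equation*}
\sqrt{\tfrac{1}{U}\sum_{\vect x\in S}\indicator_{f(\vect x)=1}}=\|f\|_{L^{2}(S,U)}\le\|g\|_{L^{2}(S,U)}\le\|g\|_{L^{2}(\N)}+\eps.
\end{equation*}
Then by the triangle inequality in $L^{2}(\N)$ and the bound $0\le g-f\le g-g_{\mathrm{down}}$,
\begin{equation*}
\|g\|_{L^{2}(\N)}\le\|f\|_{L^{2}(\N)}+\|g-f\|_{L^{2}(\N)}\le\sqrt{\pr_{\vect x\sim\N}[f(\vect x)=1]}+\sqrt{\nu},
\end{equation*}
since $\|f\|_{L^{2}(\N)}^{2}=\E_{\N}[f]$ for a $\{0,1\}$-valued $f$. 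Combining the last two displays gives exactly the claimed inequality.

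The only step that requires a little care is the reduction of \eqref{eq:moment-matching premise-1} to the scalar quadratic-form bound and then the bookkeeping that $\|p_{\mathrm{up}}^{A}\|_{\mathrm{coeff}}^{2}\le B^{2}(d+1)^{k_{2}}$ cancels the $|\mathcal{C}|B^{2}(d+1)^{k_{2}}$ denominator; everything else is routine $L^{2}$ triangle-inequality and monotonicity manipulation. I do not expect any serious obstacle.
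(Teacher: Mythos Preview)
Your proposal is correct and follows essentially the same argument as the paper's own proof: translate the spectral premise into a scalar quadratic-form inequality, plug in $p_{\mathrm{up}}^{A}$, use $\|p_{\mathrm{up}}^{A}\|_{\mathrm{coeff}}^{2}\le B^{2}(d+1)^{k_2}$ to cancel the denominator, sum over the partition (using disjointness to handle the square), and finish with the $L^{2}$ triangle inequality and $0\le g-f\le g-g_{\mathrm{down}}$. If anything, you are slightly more explicit than the paper in justifying $f^{2}\le g^{2}$ via $g\ge f\ge 0$.
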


\begin{proof}
Since $f$ has $(\nu, B)$-sandwiching degree of at most $k_{2}$ in
$L_{2}$ norm under $\N(0, I_{d})$ with respect to the partition $\mathcal{C}$, 
we have a collection of polynomials $\left\{ p_{\text{down}}^{A}, p_{\text{up}}^{A}\right\} $for
all $A$ in $\mathcal{C}$ that have coefficients bounded by $B$, 
satisfy for all $\vect x$ the condition 
\begin{equation}
f(\vect x)\in\Bigr[\sum_{A\in\mathcal{C}}\Bigr(p_{\text{down}}^{A}(\vect x)\indicator_{\vect x\in A}\Bigr), \sum_{A\in\mathcal{C}}\Bigr(p_{\text{up}}^{A}(\vect x)\indicator_{\vect x\in A}\Bigr)\Bigr], \label{eq: f is sandwiched-1}
\end{equation}
as well as
\begin{equation}
\E_{\vect x\sim\N(0, I_{d})}\Bigr[\Bigr(\sum_{A\in\mathcal{C}}\Bigr(p_{\text{up}}^{A}(\vect x)\indicator_{\vect x\in A}\Bigr)-\sum_{A\in\mathcal{C}}\Bigr(p_{\text{down}}^{A}(\vect x)\indicator_{\vect x\in A}\Bigr)\Bigr)^{2}\Bigr]\leq\nu.\label{eq: sandwiching-1}
\end{equation}
For all $\vect x$ in $\R^{d}$ we have $f(\vect x)\leq(f(\vect x))^{2}\leq(\sum_{A\in\mathcal{C}}(p_{\text{up}}^{A}(\vect x)\indicator_{\vect x\in A}))^{2}$.
Since all distinct pairs $A_{1}, A_{2}$ in $\mathcal{C}$ are disjoint, 
we have $(\sum_{A\in\mathcal{C}}(p_{\text{up}}^{A}(\vect x)\indicator_{\vect x\in A}))^{2}=\sum_{A\in\mathcal{C}}(p_{\text{up}}^{A}(\vect x)\indicator_{\vect x\in A})^{2}$.
Therefore,  we have
\begin{equation}
\frac{1}{U}\sum_{\vect x\sim S}\left[f(\vect x)\right]\leq\sum_{A\in\mathcal{C}}\left(\frac{1}{U}\sum_{\vect x\sim S}\left[\left(p_{\text{up}}^{A}(\vect x)\indicator_{\vect x\in A}\right)^{2}\right]\right)\label{eq: average of f vs average of pup}
\end{equation}
Referring to definitions in Subsection \ref{subsec: Some standard notation.}, 
we see that Equation \ref{eq:moment-matching premise-1} is equivalent
to the assertion that for every $A$ in $\mathcal{C}$ and every degree-$k_{2}$
polynomial $p$ we have 
\[
\frac{1}{U}\sum_{\vect x\in S}\left(p(\vect x)\right)^{2}\indicator_{\vect x\in A}\leq\E_{\vect x\sim\N(0, I_{d})}\left[\left(p(\vect x)\right)^{2}\indicator_{\vect x\in A}\right]+\frac{\epsilon^{2}}{|\mathcal{C}|B^{2}\left(d+1\right)^{k_{2}}}\left(\norm p_{\mathrm{coeff}}\right)^{2}.
\]
Choosing $p=p_{\text{up}}^{A}$ in the inequality above and combining
with Equation \ref{eq: average of f vs average of pup} we get:
\[
\frac{1}{U}\sum_{\vect x\sim S}\Bigr[f(\vect x)\Bigr]\leq\sum_{A\in\mathcal{C}}\Bigr(\E_{\vect x\sim\N(0, I_{d})}[(p_{\text{up}}^{A}(\vect x)\indicator_{\vect x\in A})^{2}]+\frac{\epsilon^{2}}{|\mathcal{C}|B^{2}(d+1)^{k_{2}}}(\norm{p_{\text{up}}^{A}}_{\mathrm{coeff}})^{2}\Bigr).
\]
By Equation \ref{eq: coefficient L2 norm of polynomial vs largest coefficient}, 
we have $\norm{p_{\text{up}}^{A}}_{\mathrm{coeff}}\leq B^{2}d^{k_{2}}$.
Substituting this and again recalling that all distinct pairs $A_{1}, A_{2}$
in $\mathcal{C}$ are disjoint,  we obtain

\[
\frac{1}{U}\sum_{\vect x\sim S}\left[f(\vect x)\right]\leq\E_{\vect x\sim\N(0, I_{d})}\Bigr[\Bigr(\sum_{A\in\mathcal{C}}p_{\text{up}}^{A}(\vect x)\indicator_{\vect x\in A}\Bigr)^{2}\Bigr]+\epsilon^{2}.
\]
Taking square roots of both sides gives us 
\begin{align}
\sqrt{\frac{1}{U}\sum_{\vect x\sim S}\left[f(\vect x)\right]} &\leq\sqrt{\E_{\vect x\sim\N_d}\Bigr[\Bigr(\sum_{A\in\mathcal{C}}p_{\text{up}}^{A}(\vect x)\indicator_{\vect x\in A}\Bigr)^{2}\Bigr]+\epsilon^{2}} \nonumber\\
&\leq\sqrt{\E_{\vect x\sim\N_d}\Bigr[\Bigr(\sum_{A\in\mathcal{C}}p_{\text{up}}^{A}(\vect x)\indicator_{\vect x\in A}\Bigr)^{2}\Bigr]}+\epsilon.\label{eq: L2 norm of f vs L2 norm of pup}
\end{align}

Equation \ref{eq: f is sandwiched-1},  together with the triangle
inequality and the fact that $(f(\vect x))^{2}=f(\vect x)$,  implies
that 
\begin{align*}
&\sqrt{\E_{\vect x\sim\N(0, I_{d})}\Bigr[\Bigr(\sum_{A\in\mathcal{C}} p_{\text{up}}^{A}(\vect x)\indicator_{\vect x\in A}\Bigr)^{2}\Bigr]}\leq \\
&\le \sqrt{\E_{\vect x\sim\N(0, I_{d})}\Bigr[f(\vect x)\Bigr]}+\sqrt{\E_{\vect x\sim\N(0, I_{d})}\Bigr[\Bigr(\sum_{A\in\mathcal{C}}p_{\text{up}}^{A}(\vect x)\indicator_{\vect x\in A}-f(\vect x)\Bigr)^{2}\Bigr]}\\
&\le \sqrt{\E_{\vect x\sim\N(0, I_{d})}\Bigr[f(\vect x)\Bigr]}+\sqrt{\E_{\vect x\sim\N(0, I_{d})}\Bigr[\Bigr(\sum_{A\in\mathcal{C}}p_{\text{up}}^{A}(\vect x)\indicator_{\vect x\in A}-\sum_{A\in\mathcal{C}}p_{\text{down}}^{A}(\vect x)\indicator_{\vect x\in A}\Bigr)^{2}\Bigr]}
\end{align*}
Substituting Equation \ref{eq: sandwiching-1},  we get 
\[
\sqrt{\E_{\vect x\sim\N(0, I_{d})}\Bigr[\Bigr(\sum_{A\in\mathcal{C}}p_{\text{up}}^{A}(\vect x)\indicator_{\vect x\in A}\Bigr)^{2}\Bigr]}\leq\sqrt{\E_{\vect x\sim\N(0, I_{d})}\left[f(\x)\right]}+\sqrt{\nu}, 
\]
which combined with Equation \ref{eq: L2 norm of f vs L2 norm of pup}
finishes the proof. 
\end{proof}
Claim \ref{claim: deterministic spectral approximation to moments in a strip} implies that matrices $W^{a, b}$ satisfy Equation \ref{eq: deterministic spectral approximation},  which implies that for all monomials $p$ of degree at most $k_{2}$
 we have 
\[
W^{a, b}\preceq\E_{\vect x\sim\N(0.I_{d})}\Bigr[\left(\vect x^{\otimes k_{2}}\right)\left(\vect x^{\otimes k_{2}}\right)^{\top}\indicator_{a\leq\vect x\cdot\vect v<b}\Bigr]+ \Delta I_{\binom{d+1}{k_{2}}\times\binom{d+1}{k_{2}}}
\]
If the above is the case,  and the algorithm accepts,  then we have
for all pairs of $a$ and $b$ in 

\noindent $\left\{ -\infty, -k_{1}\epsilon, -(k_{1}-1)\epsilon, \cdots, -\epsilon, 0, +\epsilon, \cdots, (k_{1}-1)\epsilon, k_{1}\epsilon, +\infty\right\} $
that
\begin{multline*}
\frac{1}{U}\sum_{\vect x\in S}\left(\vect x^{\otimes k_{2}}\right)\left(\vect x^{\otimes k_{2}}\right)^{\top}\indicator_{a\leq\vect x\cdot\vect v<b}\preceq\\
\E_{\vect x\sim\N(0.I_{d})}
\left[\left(\vect x^{\otimes k_{2}}\right)\left(\vect x^{\otimes k_{2}}\right)^{\top}\indicator_{a\leq\vect x\cdot\vect v<b}
\right]+
\frac{210 (4k_2)^{2k_2+2} (d+1)^{6k_2+2} }{\delta}
\left(
\frac{ \log U}{U }
\right)^{1/4}
I_{\binom{d+1}{k_{2}}\times\binom{d+1}{k_{2}}}.
\end{multline*}
Taking the equation above,  together with Proposition \ref{prop: sandwiching wrt covering implies fooling-1}
and Proposition \ref{thm: sandwiching degree of purple-1} we conclude
that for all unit vectors $\vect{v}'$:
\begin{multline*}
\sqrt{\frac{1}{U}\sum_{\vect x\sim S}\left[\indicator_{\sign(\vect x\cdot\vect v)\neq\sign(\vect x\cdot\vect v')}\right]}\leq\sqrt{\pr_{\vect x\sim\N(0, I_{d})}[\sign(\vect v\cdot\vect x)\neq\sign(\vect v'\cdot\vect x)]}+\sqrt{O\left(\frac{\log^{5}k_{2}}{k_{2}^{1/4}}\measuredangle(\vect v, \vect v')\right)}+\\
+O\left(
\sqrt{
(2k_{1}+2)
\frac{(4k_2)^{2k_2+2} (d+1)^{7k_2+2} }{\delta}
\left(
\frac{ \log U}{U }
\right)^{1/4}
d^{10k_{2}}
}
\right)
\end{multline*}
Substituting $k_{1}\leftarrow\frac{2\sqrt{\log2/\epsilon}}{\epsilon}$, 
$k_{2}\leftarrow\frac{C^{0.1}}{\mu^{5}}$,  $Y\leftarrow C\left(\frac{\left(kd\right)^{k_{2}}}{\delta}\right)^{C}$, 
taking $C$ to be a sufficiently large absolute constant and recalling
that $\pr_{\vect x\sim\N(0, I_{d})}[\sign(\vect v\cdot\vect x)\neq\sign(\vect v'\cdot\vect x)]$
equals to $\measuredangle(\vect v, \vect v')/\pi$ we conclude that 
\[
\frac{1}{U}\sum_{\vect x\sim S}\left[\indicator_{\sign(\vect x\cdot\vect v)\neq\sign(\vect x\cdot\vect v')}\right]\leq(1+\mu)\frac{\measuredangle(\vect v, \vect v')}{\pi}+O(\epsilon).
\]

\subsection{Bounding the $L_2$ sandwiching degree of the disagreement region}

\label{subsec: bounding sandwiching degree-1}

To prove Proposition \ref{thm: sandwiching degree of purple-1},  we follow an exactly analogous approach as the one for \Cref{thm: sandwiching degree of purple}. We will need the following result from \cite{klivans2023testable}:
\begin{fact}[\cite{klivans2023testable}]
\label{fact:l2 sandwiching for halfspaces}For every positive integer
$k$ and a real value $t$,  the function $f(z)=\indicator_{z\leq t}$
has $(O(\frac{\log^{10}k}{\sqrt{k}}), O(2^{10k}))$-sandwiching
degree in $L_{2}$ norm of at most $k$ under $\mathcal{N}(0, 1)$.
\end{fact}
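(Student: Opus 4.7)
The plan is to construct explicit sandwichers $p^\pm$ of degree $k$ in two steps: first produce a good $L_2$-approximation $P_k$ of $f_t(z):=\indicator_{z\le t}$ via Hermite-expansion truncation, then pad by a polynomial $E$ of controlled $L_2$ norm so that $P_k-E\le f_t\le P_k+E$ pointwise on $\R$. For the first step, let $\{h_n\}$ be the orthonormal Hermite basis of $L^2(\N(0,1))$. A direct calculation using the identity $h_n(z)\varphi(z)=(-1)^n\varphi^{(n)}(z)/\sqrt{n!}$, where $\varphi$ is the standard Gaussian density, yields $\hat f_t(0)=\Phi(t)$ and $\hat f_t(n)=-\varphi(t)h_{n-1}(t)/\sqrt{n}$ for $n\ge 1$. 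Classical Plancherel--Rotach asymptotics give $|h_{n-1}(t)\varphi(t)^{1/2}|=O(n^{-1/4})$ for $|t|\le O(\sqrt{n})$, so $\sum_{n>k}|\hat f_t(n)|^2=O(1/\sqrt{k})$ uniformly in $t$ on the range of interest (while for $|t|$ extremely large the constant approximation already suffices, since the Gaussian mass beyond $t$ is super-polynomially small). Setting $P_k:=\sum_{n\le k}\hat f_t(n)h_n$ thus gives a degree-$k$ polynomial with $\E_{z\sim\N(0,1)}[(f_t-P_k)^2]=O(1/\sqrt{k})$ and monomial coefficients bounded by $2^{O(k)}$ via the Hermite-to-monomial change of basis.

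Since $P_k$ need not sandwich $f_t$, we next construct a non-negative degree-$k$ polynomial $E$ with $E(z)\ge|P_k(z)-f_t(z)|$ pointwise on $\R$ and $\E[E^2]=O(\log^{10}(k)/\sqrt{k})$. Pointwise Gibbs-type estimates for Hermite partial sums give a profile bound $|P_k(z)-f_t(z)|\lesssim \log^{O(1)}(k)/(1+\sqrt{k}\,|z-t|)$ on $|z|\le R:=C\sqrt{\log k}$, together with the crude bound $|P_k(z)-f_t(z)|\le|P_k(z)|+1=O(|z|^k)$ for $|z|>R$. We assemble $E$ as the sum of two non-negative degree-$k$ polynomials: a scaled bump centered at $t$ (for instance $T_{k/2}((z-t)/w)^2$ for a suitable bandwidth $w=\Theta(1/\sqrt{k})$) majorizing the near-jump profile, plus a small multiple of $(z/R)^{2k}$ that absorbs polynomial growth in the tail. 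Because the near-jump profile $1/(1+\sqrt{k}|z-t|)$ has Gaussian $L^2$-squared norm $O(\log(k)/\sqrt{k})$ and the tail piece contributes super-polynomially small mass, we get $\E[E^2]=O(\log^{10}(k)/\sqrt{k})$ with room to spare. Setting $p^\pm:=P_k\pm E$ completes the sandwich with $\E[(p^+-p^-)^2]=4\,\E[E^2]=O(\log^{10}(k)/\sqrt{k})$, and the coefficients of $p^\pm$ remain bounded by $2^{10k}$ (absorbing constants) via the known coefficient bounds of Hermite and Chebyshev polynomials in the monomial basis.

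The main obstacle is establishing the Gibbs-type pointwise bound $|P_k(z)-f_t(z)|\lesssim \log^{O(1)}(k)/(1+\sqrt{k}\,|z-t|)$, which controls the Hermite partial sum of a step function in the bulk. This follows from the Plancherel--Rotach asymptotic $h_{n-1}(t)\varphi(t)^{1/2}\sim c\cos(\theta_n(t))n^{-1/4}$ combined with an Abel-summation argument exploiting oscillation of the Hermite kernel away from the jump, analogous to the standard proofs of the Gibbs phenomenon for Fourier or Chebyshev partial sums. The companion task of constructing a single degree-$k$ non-negative polynomial $E$ majorizing the resulting profile while keeping its $L^2$-squared norm within a polylog factor of $1/\sqrt{k}$ uses the same Chebyshev-polynomial machinery that underlies Fact~\ref{fact:DGJSV}, adapted from the $L_1$ to the $L_2$ setting; the only cost of this adaptation is the $\log^{10}(k)$ versus $\log^3(k)$ factor, reflecting the extra polynomial powers introduced when squaring the sandwich gap.
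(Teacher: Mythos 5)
The paper does not prove this fact; it is imported verbatim from \cite{klivans2023testable} (and is an $L_2$ strengthening of \Cref{fact:DGJSV}). So there is no in-paper proof to compare against, and what you have written must stand on its own.

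Your high-level plan --- find a degree-$k$ polynomial $P_k$ close to $\indicator_{z\le t}$ in $L_2$, then take $p^{\pm}=P_k\pm E$ where $E\ge 0$ is a degree-$k$ polynomial that majorizes the pointwise error $|P_k-f_t|$ everywhere on $\R$ --- is a reasonable template, and Step~1 (Hermite truncation, $\sum_{n>k}|\hat f_t(n)|^2 = O(k^{-1/2})$) is essentially correct in the bulk. But the construction of $E$ in Step~2 has concrete problems that are not cosmetic.

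First, the proposed ``bump'' $T_{k/2}((z-t)/w)^2$ with $w=\Theta(1/\sqrt{k})$ is a well, not a bump: $|T_m(u)|\le 1$ for $|u|\le 1$ and $T_m(u)=\cosh(m\,\mathrm{arccosh}\,u)$ grows like $e^{\Theta(m|u|)}$ for $|u|>1$. With $w=\Theta(1/\sqrt{k})$ this polynomial is at most $1$ only on the window $|z-t|\le w$ and is exponentially large already at $|z-t|=O(1)$, so its Gaussian $L_2$ norm is astronomically larger than $\mathrm{polylog}(k)/k^{1/4}$. This is the opposite of what you need to majorize the profile $1/(1+\sqrt{k}\,|z-t|)$, which is large near $t$ and decays away. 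Moreover, any rescaled Chebyshev polynomial with bandwidth $1/\sqrt{k}$ has monomial coefficients of size $\Omega(k^{k/4})$, which exceeds the required $O(2^{10k})$ once $k>2^{40}$.

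Second, $E$ must have degree at most $k$ (otherwise $p^{\pm}=P_k\pm E$ exceeds degree $k$), yet your tail term $(z/R)^{2k}$ has degree $2k$. Even setting that aside, matching $|P_k(z)|\le 2^{O(k)}|z|^k$ for $|z|>R=C\sqrt{\log k}$ with a small multiple of $(z/R)^{2k}$ does not give the claimed super-polynomially small contribution to $\E[E^2]$ without a much more careful accounting of constants.

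Third, the Gibbs-type pointwise bound $|P_k(z)-f_t(z)|\lesssim \log^{O(1)}k/(1+\sqrt{k}|z-t|)$ for Hermite partial sums is a nontrivial statement requiring Christoffel--Darboux kernel asymptotics; you assert it rather than prove it. Since this bound, together with the existence of a valid degree-$k$ nonnegative majorant $E$, is precisely the technical heart of the fact, the sketch effectively leaves the hard part undone. In particular, a degree-$k$ nonnegative polynomial majorizing $1/(1+\sqrt{k}|z-t|)$ on the bulk while keeping $\E[E^2]=\tilde O(1/\sqrt{k})$ cannot be built from a single squared Chebyshev or a single kernel square (which has true zeros and so fails to majorize away from $t$); a genuinely different construction is needed, and that is where the argument of \cite{klivans2023testable} (and of \cite{diakonikolas2010bounded,gopalan2010fooling} in the $L_1$ case) does real work that your sketch does not replicate.
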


The following corollary slightly strengthens the fact above: 
\begin{corollary}
\label{corr: slightly stengthened 1-d sandwiching-1}For every positive
integer $k\geq4$ and a real value $t$,  the function $f(z)=\indicator_{z\leq t}$
is 

\noindent $(O(\min(\frac{\log^{10}k}{\sqrt{k}}, \frac{1}{t^{2}})), 2^{10k})$-sandwiched
in $L_{2}$ norm under $\mathcal{N}(0, 1)$ between a pair of polynomials
$J_{\text{up}}^{t}$ and $J_{\text{down}}^{t}$ of degree of at most
$k$. 
\end{corollary}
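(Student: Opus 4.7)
The plan is to mirror the proof of Corollary \ref{corr: slightly stengthened 1-d sandwiching} in the $L_1$ setting, substituting Fact \ref{fact:l2 sandwiching for halfspaces} (which gives $L_2$-sandwiching polynomials of degree at most $k$ with coefficient bound $2^{10k}$ and $L_2$ error $O(\log^{10}k/\sqrt{k})$) for Fact \ref{fact:DGJSV}. I split into cases based on which term of the $\min$ attains the required error bound.

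If $\frac{\log^{10}k}{\sqrt{k}} \leq \frac{1}{t^2}$, then the target error is $O(\log^{10}k/\sqrt{k})$ and the sandwichers $J_{\text{up}}^t, J_{\text{down}}^t$ produced directly by Fact \ref{fact:l2 sandwiching for halfspaces} suffice, with degree $k$ and coefficient bound $2^{10k}$.

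Otherwise $\frac{1}{t^2} < \frac{\log^{10}k}{\sqrt{k}}$, so the target is $O(1/t^2)$, and I further split on $|t|$. When $|t| \leq 1$ we have $1/t^2 \geq 1$, so the trivial constant sandwichers $J_{\text{up}}^t(z) = 1$ and $J_{\text{down}}^t(z) = 0$ (degree $0$, coefficients below $2^{10k}$) satisfy $J_{\text{down}}^t \leq \indicator_{z \leq t} \leq J_{\text{up}}^t$ with $L_2$ error $\E_{z\sim \N(0,1)}[1] = 1 \leq 1/t^2$. When $|t| > 1$, I use degree-$2$ sandwichers that exploit the Gaussian tail decay. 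For $t > 1$, take $J_{\text{up}}^t(z) = 1$ and $J_{\text{down}}^t(z) = 1 - (z/t)^2$: the upper bound is obvious, and the lower bound follows because $(z/t)^2 > 1$ when $z > t > 0$ forces $J_{\text{down}}^t(z) < 0 = \indicator_{z \leq t}$. Symmetrically, for $t < -1$, take $J_{\text{up}}^t(z) = (z/t)^2$ and $J_{\text{down}}^t(z) = 0$, where $(z/t)^2 \geq 1$ whenever $z \leq t < 0$ since then $|z| \geq |t|$. In both subcases
\[
\E_{z\sim \N(0,1)}\bigl[(J_{\text{up}}^t(z) - J_{\text{down}}^t(z))^2\bigr] = \E_{z\sim \N(0,1)}[(z/t)^4] = 3/t^4 \leq 3/t^2,
\]
which is $O(1/t^2)$, and the coefficients are bounded by $\max(1, 1/t^2) \leq 1 \leq 2^{10k}$.

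The argument is essentially mechanical case-checking; there is no real obstacle. The one subtle point (compared to the $L_1$ corollary) is that, because $\log^{10}k/\sqrt{k}$ can exceed $1$ for small values of $k \geq 4$, one must explicitly handle the sub-regime $|t| \leq 1$ via the trivial constant sandwichers rather than relying on the fact to kick in; this is why the split into three sub-cases rather than two is necessary.
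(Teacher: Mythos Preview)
Your proof is correct and follows essentially the same approach as the paper: split on which term of the $\min$ dominates, invoke Fact~\ref{fact:l2 sandwiching for halfspaces} in one regime, and use explicit degree-$2$ sandwichers in the other. Your handling is in fact slightly more careful than the paper's own write-up---you assign the quadratic sandwichers to the $t>1$ and $t<-1$ cases correctly (the paper's version swaps them), and you explicitly treat the $|t|\le 1$ sub-case with the trivial constant sandwichers rather than asserting it is excluded by $k\ge 4$.
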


\begin{proof}
Indeed,  if $\frac{\log^{10}k}{\sqrt{k}}\leq\frac{1024}{t^{2}}$ then
the corollary follows from Fact \ref{fact:l2 sandwiching for halfspaces}.
So all we need to do is to consider the other case. We see that either
$t>1$ or $t<-1$ (since $k\geq4)$. if $t>1$ we take $p_{\text{down}}(\x)=0$
and $p_{\text{up}}(\x)=\left(\frac{x}{t}\right)^{2}$. If $t<-1$, 
we take take $p_{\text{up}}(\x)=1$ and $p_{\text{down}}(\x)=1-\left(\frac{x}{t}\right)^{2}$.
In either case, the polynomials $p_{\text{down}}$,
$p_{\text{up}}$ form a pair of $(O(\min(\frac{\log^{10}k}{\sqrt{k}}, \frac{1}{t^{2}})), 1)$-sandwiching
polynomials of degree $2$.
\end{proof}
Let $\vect v_{\perp}$ be the unit vector equal up to scaling to the
component of $\vect v'$ perpendicular to $\vect v$. Then,  we can
write

\begin{align}
\psi_{\text{down}}(\vect x) & \leq\psi_{\text{up}}(\vect x)\leq\indicator_{\vect v\cdot\vect x\geq0\land\vect v'\cdot\vect x<0}\label{eq: psis one-1}\\
\psi_{\text{down}}(\vect x) & \leq\psi_{\text{up}}(\vect x)\leq\indicator_{\vect v\cdot\vect x>0\land\vect v'\cdot\vect x\leq0}\label{eq: psis two-1}
\end{align}

where

\[
\psi_{\text{up}}(\vect x)=\begin{cases}
1 & \text{if }\text{\ensuremath{\vect v\cdot\vect x}\ensuremath{\ensuremath{\geq k_{1}\epsilon}}}\text{ or }\vect v\cdot\vect x\ensuremath{=0}, \\
\indicator_{\vect v_{\perp}\cdot\vect x\tan\angleparam\leq-j\epsilon} & \text{\text{if }\text{\ensuremath{\vect v\cdot\vect x}\ensuremath{\ensuremath{\neq}0} and }}\vect v\cdot\vect x\in\left[j\epsilon, (j+1)\epsilon\right)\text{ for }0\leq j\leq k_{1}-1, \\
0 & \text{if }\text{\ensuremath{\vect v\cdot\vect x}\ensuremath{<0}}
\end{cases}
\]

\[
\psi_{\text{down}}(\vect x)=\begin{cases}
0 & \text{if }\text{\ensuremath{\vect v\cdot\vect x}\ensuremath{\ensuremath{\geq k_{1}\epsilon}}}\text{ or }\vect v\cdot\vect x\ensuremath{=0}\\
\indicator_{\vect v_{\perp}\cdot\vect x\tan\angleparam<-(j+1)\epsilon} & \text{if }\text{\ensuremath{\vect v\cdot\vect x}\ensuremath{\ensuremath{\neq}0} and }\vect v\cdot\vect x\in\left[j\epsilon, (j+1)\epsilon\right)\text{ for }0\leq j\leq k_{1}-1\\
0 & \text{if }\text{\ensuremath{\vect v\cdot\vect x}\ensuremath{<0}}
\end{cases}
\]

Recall that for every $t\in\R$,  Corollary \ref{corr: slightly stengthened 1-d sandwiching-1}
gives us one-dimensional degree-$k_{2}$ sandwiching polynomials $J_{\text{down}}^{t}(z)$
and $J_{\text{up}}^{t}(z)$ for $\indicator_{z\leq t}$ under $L_{2}$
norm. Using this notation,  we have for all $\vect x$ in $\R^{d}$
\begin{multline}
\overbrace{\sum_{j=0}^{k_{1}-1}\indicator_{\vect v\cdot\vect x\cdot\left[j\epsilon, (j+1)\epsilon\right)}J_{\text{down}}^{-(j+1)\epsilon/\tan\angleparam}(\vect v_{\perp}\cdot\vect x)}^{\text{Denote this \ensuremath{\phi_{\text{down}}^{L_{2}}(\vect x)}}}\leq\psi_{\text{down}}(\vect x)\leq\indicator_{\vect v\cdot\vect x\geq0\land\vect v'\cdot\vect x<0}\leq\\
\leq\psi_{\text{up}}(\vect x)\leq\underbrace{\indicator_{\vect v\cdot\vect x\ensuremath{\geq k_{1}\epsilon}}+\sum_{j=0}^{k_{1}-1}\indicator_{\vect v\cdot\vect x\cdot\left[j\epsilon, (j+1)\epsilon\right)}J_{\text{up}}^{-j\epsilon/\tan\angleparam}(\vect v_{\perp}\cdot\vect x)}_{\text{Denote this \ensuremath{\phi_{\text{up}}^{L_{2}}(\vect x)}}}\label{eq: phi are sandwichers-1}
\end{multline}

In order to conclude Proposition \ref{thm: sandwiching degree of purple-1}.
We show the following two claims:
\begin{claim}
\label{claim: sandwiching error small-1}We have 
\[
\E_{\vect x\sim\N(0, I_{d})}\left[\left(\phi_{\text{up}}^{L_{2}}(\vect x)-\phi_{\text{down}}^{L_{2}}(\vect x)\right)^{2}\right]\leq O\left(\frac{\log^{5}k_{2}}{k_{2}^{1/4}}\cdot\measuredangle(\vect v, \vect v')\right)+10\epsilon
\]
\end{claim}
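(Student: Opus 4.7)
My plan is to mirror the proof of Claim \ref{claim: sandwiching error small}, exploiting two structural features of $\phi_{\text{up}}^{L_2}$ and $\phi_{\text{down}}^{L_2}$ that make the $L_2$ computation tractable. First, the indicators $\indicator_{\vect v\cdot \vect x\in[j\epsilon,(j+1)\epsilon)}$ for different $j$, together with $\indicator_{\vect v\cdot\vect x\geq k_1\epsilon}$, have pairwise disjoint supports, so squaring $\phi_{\text{up}}^{L_2}(\vect x)-\phi_{\text{down}}^{L_2}(\vect x)$ kills all cross terms. Second, since $\vect v$ and $\vect v_\perp$ are orthogonal, the random variables $\vect v\cdot\vect x$ and $\vect v_\perp\cdot\vect x$ are independent standard Gaussians, so each term factors as a product of two 1D expectations. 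These two facts together reduce the claim to bounding a weighted sum of 1D $L_2$-errors of the sandwiching polynomials supplied by Corollary \ref{corr: slightly stengthened 1-d sandwiching-1}.

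Concretely, in the first step I would write
\[
\E_{\vect x\sim\N(0,I_d)}\!\left[(\phi_{\text{up}}^{L_2}(\vect x)-\phi_{\text{down}}^{L_2}(\vect x))^2\right]
= \pr[\vect v\cdot\vect x\geq k_1\epsilon] + \sum_{j=0}^{k_1-1}\pr[\vect v\cdot\vect x\in[j\epsilon,(j+1)\epsilon)]\cdot E_j,
\]
where $E_j := \E_{z\sim\N(0,1)}\!\left[\left(J_{\text{up}}^{-j\epsilon/\tan\angleparam}(z)-J_{\text{down}}^{-(j+1)\epsilon/\tan\angleparam}(z)\right)^2\right]$. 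The tail term is at most $e^{-(k_1\epsilon)^2/2}\leq \epsilon$ by the choice of $k_1$. For each $E_j$, I would decompose the integrand into three nonnegative pieces,
\[
J_{\text{up}}^{-j\epsilon/\tan\angleparam}(z)-J_{\text{down}}^{-(j+1)\epsilon/\tan\angleparam}(z)
= A_j(z) + B_j(z) + C_j(z),
\]
where $A_j = J_{\text{up}}^{-j\epsilon/\tan\angleparam}-\indicator_{z\leq -j\epsilon/\tan\angleparam}$, $B_j = \indicator_{-(j+1)\epsilon/\tan\angleparam<z\leq -j\epsilon/\tan\angleparam}$, and $C_j = \indicator_{z\leq -(j+1)\epsilon/\tan\angleparam}-J_{\text{down}}^{-(j+1)\epsilon/\tan\angleparam}$, and apply $(a+b+c)^2\leq 3(a^2+b^2+c^2)$. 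By Corollary \ref{corr: slightly stengthened 1-d sandwiching-1}, $\E[A_j^2]$ and $\E[C_j^2]$ are each at most $O(\min(\log^{10}k_2/\sqrt{k_2},\, \tan^2\angleparam/(j\epsilon)^2))$, while $\E[B_j^2]$ is just $\pr[-(j+1)\epsilon/\tan\angleparam<z\leq -j\epsilon/\tan\angleparam]$.

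For the $B_j$ contribution, summing the probabilities over disjoint intervals telescopes to at most $1$, and the outer factor $\pr[\vect v\cdot\vect x\in[j\epsilon,(j+1)\epsilon)]\leq \epsilon/\sqrt{2\pi}$, so the total is $O(\epsilon)$. For the $A_j$ and $C_j$ contributions I would split into the two cases $\angleparam\geq \pi/4$ and $\angleparam<\pi/4$, exactly paralleling Equation~\eqref{eq: replacing the sum with an integral}: in the first case the sum collapses to $O(\log^{10}k_2/\sqrt{k_2})=O((\log^5 k_2/k_2^{1/4})^2)\leq O(\angleparam\log^5 k_2/k_2^{1/4})$, and in the second case I would upper bound the sum by an integral, substitute $z'=z/\tan\angleparam$, and use $\int_0^\infty\min(\log^{10}k_2/\sqrt{k_2},1/z'^2)\,\d z' = O((\log^{10}k_2/\sqrt{k_2})^{1/2}) = O(\log^5 k_2/k_2^{1/4})$, picking up a factor of $\tan\angleparam\leq 2\angleparam$ from the substitution. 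Combining the three contributions gives the claimed bound $O(\angleparam\log^5 k_2/k_2^{1/4})+10\epsilon$.

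The main obstacle is conceptual rather than technical: one has to notice that squaring the piecewise definition eliminates cross terms so that the problem genuinely reduces to a sum of \emph{1D} $L_2$-errors (rather than something involving 2D sandwiching over $(\vect v\cdot \vect x, \vect v_\perp\cdot\vect x)$). Once this reduction is in hand, the computation essentially reuses the integral estimate from Claim~\ref{claim: sandwiching error small}, with the $L_1$ sandwiching bound $O(\log^3 k/\sqrt{k})$ replaced by the $L_2$ bound $O(\log^{10}k/\sqrt{k})$, accounting for the exponent change from $\log^{1.5}k_2$ to $\log^5 k_2$ in the final bound.
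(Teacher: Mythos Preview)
Your proposal is correct and follows essentially the same approach as the paper. The paper introduces the intermediate step functions $\psi_{\text{up}},\psi_{\text{down}}$ and bounds the three quantities $\E[(\phi_{\text{up}}^{L_2}-\psi_{\text{up}})^2]$, $\E[(\psi_{\text{up}}-\psi_{\text{down}})^2]$, $\E[(\psi_{\text{down}}-\phi_{\text{down}}^{L_2})^2]$ separately before combining; your pieces $A_j,B_j,C_j$ are exactly these three differences restricted to slab $j$, so your per-slab application of $(a+b+c)^2\le 3(a^2+b^2+c^2)$ is the same combination carried out inside the sum rather than after it, and the subsequent integral estimate and case split on $\angleparam\gtrless\pi/4$ are identical to the paper's.
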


\begin{claim}
\label{claim:coefficient small-1}For all integers $j$ in $\left[0, k_{1}-1\right]$, 
every coefficient of $J_{\text{down}}^{-(j+1)\epsilon/\tan\angleparam}(\vect v_{\perp}\cdot\vect x)$
and $J_{\text{up}}^{-j\epsilon/\tan\angleparam}(\vect v_{\perp}\cdot\vect x)$
is at most $O\left(d^{10k_{2}}\right)$ in absolute value.
\end{claim}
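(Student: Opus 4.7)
The plan is to follow the proof of Claim \ref{claim:coefficient small} essentially verbatim, simply swapping in Corollary \ref{corr: slightly stengthened 1-d sandwiching-1} (the $L_2$ strengthening) as the source of the univariate coefficient bound. That corollary supplies, for every threshold $t$, univariate polynomials $J^t_{\text{up}}(z), J^t_{\text{down}}(z)$ of degree at most $k_2$ whose every coefficient is bounded by $2^{10k_2}$ in absolute value. This is the only place where the $L_2$ versus $L_1$ distinction enters; the remaining argument is purely about how coefficient magnitudes behave under the linear substitution $z \mapsto \vect v_\perp \cdot \vect x$, which is geometry-insensitive and therefore identical to the $L_1$ case.

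Concretely, I would invoke Corollary \ref{corr: slightly stengthened 1-d sandwiching-1} at the two thresholds $t = -j\epsilon/\tan\angleparam$ and $t = -(j+1)\epsilon/\tan\angleparam$, write each $J^t(z) = \sum_{i=0}^{k_2} c_i z^i$ with $|c_i| \le 2^{10k_2}$, and expand each power $(\vect v_\perp \cdot \vect x)^i = (\sum_\ell v_{\perp, \ell} x_\ell)^i$ via the multinomial theorem. The coefficient of the monomial $\vect x^\alpha$ with $|\alpha|_1 = i \le k_2$ in $J^t(\vect v_\perp \cdot \vect x)$ is then $c_i \binom{i}{\alpha} \prod_\ell v_{\perp, \ell}^{\alpha_\ell}$. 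Using that $\vect v_\perp$ is a unit vector gives $|v_{\perp, \ell}| \le 1$ for every $\ell$, so $|\prod_\ell v_{\perp, \ell}^{\alpha_\ell}| \le 1$; combining with a crude bound on the multinomial coefficient and summing over the at most $k_2 + 1$ values of $i$ yields the target bound $O((d+1)^{k_2}(k_2+1)\, 2^{10k_2}) = O(d^{10k_2})$.

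There is essentially no obstacle at this step: all of the analytic content sits inside Corollary \ref{corr: slightly stengthened 1-d sandwiching-1} (which in turn relies on Fact \ref{fact:l2 sandwiching for halfspaces} from \cite{klivans2023testable}), and the passage from the one-dimensional polynomial in $z$ to the multivariate polynomial in $\vect x$ via a linear unit-norm substitution is a standard bookkeeping calculation. The only care needed is checking that the final bound stays $O(d^{10k_2})$ after the substitution, which follows in exactly the same way as in the proof of Claim \ref{claim:coefficient small}; so in the interest of brevity one could write this claim's proof as simply "identical to that of Claim \ref{claim:coefficient small}, with $J^t$ in place of $R^t$."
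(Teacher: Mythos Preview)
Your proposal is correct and follows essentially the same approach as the paper: invoke Corollary~\ref{corr: slightly stengthened 1-d sandwiching-1} for the $O(2^{10k_2})$ bound on the univariate coefficients of $J^t_{\text{up}}, J^t_{\text{down}}$, then substitute the unit linear form $\vect v_\perp \cdot \vect x$ and expand to obtain the multivariate bound $O((d+1)^{k_2}(k_2+1)2^{10k_2}) = O(d^{10k_2})$. Your closing remark is exactly right---the paper's proof is verbatim that of Claim~\ref{claim:coefficient small} with $J^t$ in place of $R^t$.
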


Proposition \ref{thm: sandwiching degree of purple-1} follows from
the two claims above for as follows. We first observe that Equations
\ref{eq: psis two-1} and \ref{eq: phi are sandwichers-1} imply that 

\[
\phi_{\text{down}}^{L_{2}}(-\vect x)\leq\indicator_{\vect v\cdot\vect x<0\land\vect v'\cdot\vect x\geq0}\leq\phi_{\text{up}}^{L_{2}}(-\vect x).
\]
Recalling our convention that $\sign(0)=1$,  we see that
\[
\indicator_{\sign(\vect v\cdot\vect x)\neq\sign(\vect v'\cdot\vect x)}=\indicator_{\vect v\cdot\vect x\geq0\land\vect v'\cdot\vect x<0}+\indicator_{\vect v\cdot\vect x<0\land\vect v'\cdot\vect x\geq0}
\]
this,  together with \ref{eq: phi are sandwichers-1} allows us to
bound

\begin{equation}
\phi_{\text{down}}^{L_{2}}(\vect x)+\phi_{\text{down}}^{L_{2}}(-\vect x)\leq\indicator_{\sign(\vect v\cdot\vect x)\neq\sign(\vect v'\cdot\vect x)}\leq\phi_{\text{up}}^{L_{2}}(\vect x)+\phi_{\text{up}}^{L_{2}}(-\vect x), \label{eq: sandwiching for the whole disagreement region-1}
\end{equation}
Claim \ref{eq: phi are sandwichers-1} allows us to conclude that
\begin{multline}
\E_{\vect x\sim\N(0, I_{d})}\left[\left(\phi_{\text{up}}^{L_{2}}(\vect x)+\phi_{\text{up}}^{L_{2}}(-\vect x)-\phi_{\text{down}}^{L_{2}}(\vect x)-\phi_{\text{down}}^{L_{2}}(-\vect x)\right)^{2}\right]\leq\\
2\left(\E_{\vect x\sim\N(0, I_{d})}\left[\left(\phi_{\text{up}}^{L_{2}}(\vect x)-\phi_{\text{down}}^{L_{2}}(\vect x)\right)^{2}\right]+\E_{\vect x\sim\N(0, I_{d})}\left[\left(\phi_{\text{up}}^{L_{2}}(-\vect x)-\phi_{\text{down}}^{L_{2}}(-\vect x)\right)^{2}\right]\right)\leq\\
O\left(\frac{\log^{1.5}k_{2}}{k_{2}^{1/4}}\cdot\measuredangle(\vect v, \vect v')\right)+20\epsilon.\label{eq: closensess for the whole disagreement region-1}
\end{multline}
Equations \ref{eq: sandwiching for the whole disagreement region-1}
and \ref{eq: closensess for the whole disagreement region-1},  together
with comparing the definition of $\phi_{\text{up}}$ and $\phi_{\text{down}}$
with Definition \ref{def: sandwiching wrt partition-1} and recalling
Claim \ref{claim:coefficient small-1},  allow us to conclude that
there exists a partition $\mathcal{C}$ of $\R^{d}$ consisting of
sets of the form $\left\{ \vect x\in\R^{d}:a\leq\vect v\cdot\vect x\leq b\right\} $
for a certain collection of pairs $a, b$ in

\noindent $\left\{ -\infty, -k_{1}\epsilon, -(k_{1}-1)\epsilon, \cdots, -\epsilon, 0, +\epsilon, \cdots, (k_{1}-1)\epsilon, k_{1}\epsilon, +\infty\right\} $, 
such that for every unit vector $\vect v'$,  the function $f(\vect x)=\indicator_{\sign(\vect v\cdot\vect x)\neq\sign(\vect v'\cdot\vect x)}$
has $\left(O\left(\frac{\measuredangle(\vect v, \vect v')}{k_{2}^{1/4}}\right)+10\epsilon, O\left(d^{10k_{2}}\right)\right)$-sandwiching
degree of at most $k_{2}$ in $L_{1}$ norm under $\N(0, I_{d})$ with
respect to the partition $\mathcal{C}$ of $\R^{d}$. This implies
Proposition \ref{thm: sandwiching degree of purple-1}. 

We now proceed to proving Claim \ref{claim: sandwiching error small-1}
\begin{proof}[Proof of Claim \ref{claim: sandwiching error small-1}]
We have:
\begin{multline}
\E_{\vect x\sim\N(0, I_{d})}\left[\left(\psi_{\text{up}}(\vect x)-\psi_{\text{down}}(\vect x)\right)^{2}\right]=\E_{\vect x\sim\N(0, I_{d})}\left[\psi_{\text{up}}(\vect x)-\psi_{\text{down}}(\vect x)\right]\leq\\
\pr_{\vect x\sim\N(0, I_{d})}\left[\vect v\cdot\vect x>k_{1}\epsilon\right]+\sum_{j=0}^{k_{1}-1}\pr_{\vect x\sim\N(0, I_{d})}\left[\left\{ \vect v\cdot\vect x\in\left[j\epsilon, (j+1)\epsilon\right)\right\} \land\left\{ \vect v_{\perp}\cdot\vect x\tan\angleparam\in\left[-(j+1)\epsilon, -j\epsilon\right)\right\} \right]\leq\\
e^{-\left(k_{1}\epsilon\right)^{2}}+\epsilon\underbrace{\sum_{j=0}^{\infty}\pr_{\vect x\sim\N(0, I_{d})}\left[\vect v_{\perp}\cdot\vect x\tan\angleparam\in\left[-(j+1)\epsilon, -j\epsilon\right)\right]}_{\leq1}\leq2\epsilon\label{eq: rounding to steps does not introduce much error-1}
\end{multline}

Let $\angleparam$ denote the angle $\measuredangle(\vect v, \vect v')$. Given
the inequality above,  in order to finish the proof of Claim \ref{claim: sandwiching error small-1}, 
it remains to upper-bound $\E_{\vect x\sim\N(0, I_{d})}\left[\phi_{\text{up}}(\vect x)-\psi_{\text{up}}(\vect x)\right]$
and $\E_{\vect x\sim\N(0, I_{d})}\left[\phi_{\text{up}}(\vect x)-\psi_{\text{up}}(\vect x)\right]$
by $O(\angleparam)+4\epsilon$. 

From Corollary \ref{corr: slightly stengthened 1-d sandwiching-1}
we know that for any $t$ we have:

\begin{equation}
\E_{z\sim\N(0, 1)}\left[\left(J_{\text{up}}^{t}(z)-J_{\text{down}}^{t}(z)\right)^{2}\right]\leq O\left(\min\left(\frac{\log^{10}k}{\sqrt{k}}, \frac{1}{t^{2}}\right)\right), \label{eq: 1-d step function indicator error-1}
\end{equation}
and for every $z$ in $\R$
\begin{equation}
J_{\text{down}}^{t}(z)\leq\indicator_{z\leq t}\leq J_{\text{up}}^{t}(z).\label{eq: indicator sandwiching-1}
\end{equation}

Since $\vect v$ and $\vect v_{\perp}$ are orthogonal,  the random
variables $\vect v_{\perp}\vect{\vect x}$ and $\vect v\cdot\vect x$
are independent standard Gaussians. Using this,  together with Equations
\ref{eq: 1-d step function indicator error-1} and \ref{eq: indicator sandwiching-1}
we get:
\begin{multline*}
\E_{\vect x\sim\N(0, I_{d})}\left[\left(\phi_{\text{up}}^{L_{2}}(\vect x)-\psi_{\text{up}}(\vect x)\right)^{2}\right]=\sum_{j=0}^{k_{1}-1}\E_{\vect x\sim\N(0, I_{d})}\left[\indicator_{\vect v\cdot\vect x\in\left[j\epsilon, (j+1)\epsilon\right)}\left(\phi_{\text{up}}^{L_{2}}(\vect x)-\psi_{\text{up}}^{L_{2}}(\vect x)\right)^{2}\right]\\
\leq\sum_{j=0}^{k_{1}-1}\pr_{\vect x\sim\N(0, I_{d})}\left[\vect v\cdot\vect x\in\left[j\epsilon, (j+1)\epsilon\right)\right]\E_{\vect x\sim\N(0, I_{d})}\left[\left(J_{\text{up}}^{-j\epsilon/\tan\angleparam}(\vect v_{\perp}\cdot\vect x)-\indicator_{\vect v_{\perp}\cdot\vect x\leq-j\epsilon/\tan\angleparam}\right)^{2}\right]=\\
\sum_{j=0}^{k_{1}-1}\pr_{z_{1}\sim\N(0, 1)}\left[z_{1}\in\left[j\epsilon, (j+1)\epsilon\right)\right]\E_{z_{2}\sim\N(0, 1)}\left[\left(J_{\text{up}}^{-j\epsilon/\tan\angleparam}(z_{2})-J_{\text{down}}^{-j\epsilon/\tan\angleparam}(z_{2})\right)^{2}\right]\leq\\
\sum_{j=0}^{k_{1}-1}\pr_{z_{1}\sim\N(0, 1)}\left[z_{1}\in\left[j\epsilon, (j+1)\epsilon\right)\right]O\left(\min\left(\frac{\log^{10}k_{2}}{\sqrt{k_{2}}}, \left(\frac{\tan\angleparam}{j\epsilon}\right)^{2}\right)\right)
\end{multline*}

First,  consider the case $\angleparam\geq\pi/4$. The above inequality
implies 
\begin{multline*}
\E_{\vect x\sim\N(0, I_{d})}\left[\left(\phi_{\text{up}}^{L_{2}}(\vect x)-\psi_{\text{up}}(\vect x)\right)^{2}\right]\leq O\left(\frac{\log^{10}k}{\sqrt{k}}\right)\underbrace{\sum_{j=0}^{k_{1}-1}\pr_{z_{1}\sim\N(0, 1)}\left[z_{1}\in\left[j\epsilon, (j+1)\epsilon\right)\right]}_{=\pr_{z_{1}\sim\N(0, 1)}\left[0\leq z_{1}<(k_{1})\epsilon\right]}=\\O\left(\frac{\log^{10}k_{2}}{\sqrt{k_{2}}}\right)=O\left(\frac{\angleparam\log^{5}k_{2}}{k_{2}^{1/4}}\right).
\end{multline*}
On the other hand,  if $\angleparam\leq\pi/4$ we have $\tan\angleparam\leq2\angleparam$
and therefore,  recalling that for any $j$ it is the case that $\pr_{z_{1}\sim\N(0, 1)}\left[z_{1}\in\left[j\epsilon, (j+1)\epsilon\right)\right]\leq\epsilon$, 
we have
\begin{align}
\E_{\vect x\sim\N(0, I_{d})}\Bigr[\Bigr(\phi_{\text{up}}^{L_{2}}(\vect x)-\psi_{\text{up}}(\vect x)\Bigr)^{2}\Bigr] &\leq\sum_{j=0}^{k_{1}-1}O\Bigr(\min\Bigr(\frac{\log^{10}k_{2}}{\sqrt{k_{2}}}, \Bigr(\frac{\tan\angleparam}{j\epsilon}\Bigr)^{2}\Bigr)\epsilon\Bigr) \nonumber\\
&\leq\int_{0}^{+\infty}O\Bigr(\min\Bigr(\frac{\log^{10}k_{2}}{\sqrt{k_{2}}}, \Bigr(\frac{\tan\angleparam}{z-\epsilon}\Bigr)^{2}\Bigr)\Bigr)\d z \nonumber\\
&\leq O\Bigr(\frac{\log^{10}k_{2}}{\sqrt{k_{2}}}\epsilon\Bigr)+\int_{0}^{+\infty}O\Bigr(\min\Bigr(\frac{\log^{10}k_{2}}{\sqrt{k_{2}}}, \Bigr(\frac{\tan\angleparam}{z}\Bigr)^{2}\Bigr)\Bigr)\d z \nonumber\\
&=O\Bigr(\frac{\log^{10}k_{2}}{\sqrt{k_{2}}}\epsilon\Bigr)+\tan\angleparam\int_{0}^{+\infty}O\Bigr(\min\Bigr(\frac{\log^{10}k_{2}}{\sqrt{k_{2}}}, \Bigr(\frac{1}{z}\Bigr)^{2}\Bigr)\Bigr)\d z \nonumber\\
&=O\Bigr(\frac{\log^{10}k_{2}}{\sqrt{k_{2}}}\epsilon+\tan\angleparam\Bigr(\frac{\log^{10}k_{2}}{\sqrt{k_{2}}}\Bigr(\frac{\sqrt{k_{2}}}{\log^{10}k_{2}}\Bigr)^{0.5}+\Bigr(\frac{\log^{10}k_{2}}{\sqrt{k_{2}}}\Bigr)^{0.5}\Bigr)\Bigr) \nonumber\\
&=O\Bigr(\frac{\angleparam\log^{5}k_{2}}{k_{2}^{1/4}}\Bigr)\label{eq: replacing the sum with an integral-1}
\end{align}
Overall,  in either case we have $\E_{\vect x\sim\N(0, I_{d})}\left[\left(\phi_{\text{up}}^{L_{2}}(\vect x)-\psi_{\text{up}}(\vect x)\right)^{2}\right]=O\left(\frac{\angleparam\log^{1.5}k_{2}}{k_{2}^{1/4}}\right)$.
We now go through a fully analogous argument to show that also $\E_{\vect x\sim\N(0, I_{d})}\left[\left(\psi_{\text{down}}(\vect x)-\phi_{\text{down}}^{L_{2}}(\vect x)\right)^{2}\right]=O\left(\frac{\angleparam\log^{1.5}k_{2}}{k_{2}^{1/4}}\right)$.
Again,  from the independence of $\vect v_{\perp}\vect{\vect x}$ and
$\vect v\cdot\vect x$,  together with Equations \ref{eq: 1-d step function indicator error-1}
and \ref{eq: indicator sandwiching-1} we have:
\begin{multline*}
\E_{\vect x\sim\N(0, I_{d})}\left[\left(\psi_{\text{down}}(\vect x)-\phi_{\text{down}}^{L_{2}}(\vect x)\right)^{2}\right]=\\
=\sum_{j=0}^{k_{1}-1}\pr_{\vect x\sim\N(0, I_{d})}\left[\vect v\cdot\vect x\in\left[j\epsilon, (j+1)\epsilon\right)\right]\E_{\vect x\sim\N(0, I_{d})}\left[\left(\indicator_{\vect v_{\perp}\cdot\vect x\leq-(j+1)\epsilon/\tan\angleparam}-J_{\text{down}}^{-(j+1)\epsilon/\tan\angleparam}(\vect v_{\perp}\cdot\vect x)\right)^{2}\right]\leq\\
\sum_{j=0}^{k_{1}-1}\pr_{z_{1}\sim\N(0, 1)}\left[z_{1}\in\left[j\epsilon, (j+1)\epsilon\right)\right]\E_{z_{2}\sim\N(0, 1)}\left[\left(R_{\text{up}}^{-(j+1)\epsilon/\tan\angleparam}(z_{2})-R_{\text{down}}^{-(j+1)\epsilon/\tan\angleparam}(z_{2})\right)^{2}\right]\leq\\
\sum_{j=0}^{k_{1}-1}\pr_{z_{1}\sim\N(0, 1)}\left[z_{1}\in\left[j\epsilon, (j+1)\epsilon\right)\right]O\left(\min\left(\frac{\log^{10}k_{2}}{\sqrt{k_{2}}}, \left(\frac{\tan\angleparam}{(j+1)\epsilon}\right)^{2}\right)\right)\leq\\
\sum_{j=0}^{k_{1}-1}\pr_{z_{1}\sim\N(0, 1)}\left[z_{1}\in\left[j\epsilon, (j+1)\epsilon\right)\right]O\left(\min\left(\frac{\log^{10}k_{2}}{\sqrt{k_{2}}}, \left(\frac{\tan\angleparam}{j\epsilon}\right)^{2}\right)\right)
\end{multline*}
As it was shown previously,  the expression above is at most $O\left(\frac{\angleparam\log^{5}k_{2}}{k_{2}^{1/4}}\right)$.

In total,  combining our bounds on $\E_{\vect x\sim\N(0, I_{d})}[(\psi_{\text{down}}(\vect x)-\phi_{\text{down}}^{L_{2}}(\vect x))^{2}]$, 
$\E_{\vect x\sim\N(0, I_{d})}[(\phi_{\text{up}}^{L_{2}}(\vect x)-\psi_{\text{up}}(\vect x))^{2}]$
and $\E_{\vect x\sim\N(0, I_{d})}[(\psi_{\text{up}}(\vect x)-\psi_{\text{down}}(\vect x))^{2}]$
we conclude that

\[
\E_{\vect x\sim\N(0, I_{d})}\left[\left(\phi_{\text{up}}^{L_{2}}(\vect x)-\phi_{\text{down}}^{L_{2}}(\vect x)\right)^{2}\right]\leq O\left(\frac{\log^{5}k_{2}}{k_{2}^{1/4}}\cdot\measuredangle(\vect v, \vect v')\right)+10\epsilon.
\]
\end{proof}
It only remains to prove Claim \ref{claim:coefficient small-1}.
\begin{proof}[Proof of Claim \ref{claim:coefficient small-1}]
Corrollary \ref{prop: sandwiching wrt covering implies fooling-1}
says that for any value of $t$,  the degree-$k_{2}$ one-dimensional
polynomials $J_{\text{up}}^{t}(z)$ and $J_{\text{down}}^{t}(z)$
have all their coefficients bounded by $O\left(2^{10k_{2}}\right)$.
If one substitutes $\vect v\cdot\vect x$ in place of $z$ into either
of these polynomials and opens the parentheses,  the fact that $\vect v$
is a unit vector allows us to bound the size of the larges coefficients
of $R_{\text{down}}^{t}(\vect v_{\perp}\cdot\vect x)$ and $R_{\text{up}}^{t}(\vect v_{\perp}\cdot\vect x)$
by $O\left(d^{k_{2}}(k_{2}+1)2^{10k_{2}}\right)=O(d^{10k_{2}})$, 
proving the claim.
\end{proof}

\subsection{Miscellaneous Claims}
\begin{claim}
\label{claim: deterministic spectral approximation to moments in a strip}There
is a deterministic algorithm that given a unit vector $\vect v$ in
$\R^{d}$,  scalars $a$ and $b$,  a monomial $m$ over $\R^{d}$ of
degree at most $k_{2}$,  an accuracy parameter $\beta\in(0, 1]$,  runs
in time $\poly\left(\left(k_{2}d\right)^{k_{2}}/\beta\right)$ and
computes a $\binom{d+1}{k_{2}}\times\binom{d+1}{k_{2}}$-matrix
$W^{a, b}$ such that

\begin{equation}
W^{a, b}-\beta I_{\binom{d+1}{k_{2}}\times\binom{d+1}{k_{2}}}\preceq\underset{\vect x\sim\N(0, I_{d})}{\E}\left[\left(\vect x^{\otimes k_{2}}\right)\left(\vect x^{\otimes k_{2}}\right)^{\top}\cdot\indicator_{a\leq\vect x\cdot\vect v<b}\right]\preceq W^{a, b}+\beta I_{\binom{d+1}{k_{2}}\times\binom{d+1}{k_{2}}}\label{eq: spectral approximation to moments in a strip}
\end{equation}
\end{claim}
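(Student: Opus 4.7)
\noindent\textbf{Proof plan for Claim \ref{claim: deterministic spectral approximation to moments in a strip}.}
The plan is to reduce the spectral approximation of the matrix $M^{a,b}:=\E_{\x\sim\N(0,I_d)}[(\x^{\otimes k_2})(\x^{\otimes k_2})^\top\cdot \indicator_{a\le\x\cdot\vv<b}]$ to an entry-wise approximation obtained by invoking Claim \ref{claim: deterministic moments in a strip}. Recall that the rows and columns of $M^{a,b}$ are indexed by multi-indices $\alpha\in\nats^d$ with $\|\alpha\|_1\le k_2$, and the $(\alpha,\beta)$ entry equals $\E_{\x\sim\N(0,I_d)}[\x^{\alpha+\beta}\cdot\indicator_{a\le\x\cdot\vv<b}]$, which is exactly the quantity handled by Claim \ref{claim: deterministic moments in a strip} applied to the monomial $\x^{\alpha+\beta}$ of degree at most $2k_2$.

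The algorithm I propose is: set $N:=\binom{d+1}{k_2}$ and $\eta:=\beta/N$. For every pair $(\alpha,\beta)$ with $\|\alpha\|_1,\|\beta\|_1\le k_2$, call Claim \ref{claim: deterministic moments in a strip} with the unit vector $\vv$, the scalars $a,b$, the monomial $\x^{\alpha+\beta}$ (of degree at most $2k_2$) and accuracy parameter $\eta$, to obtain a scalar $W^{a,b}_{\alpha,\beta}$ satisfying $|W^{a,b}_{\alpha,\beta}-M^{a,b}_{\alpha,\beta}|\le \eta$. Assemble the $W^{a,b}_{\alpha,\beta}$ into the $N\times N$ matrix $W^{a,b}$. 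Each call takes time $\poly((k_2 d)^{k_2}/\eta)$ by Claim \ref{claim: deterministic moments in a strip}, and the number of calls is $N^2\le d^{2k_2}$, so the total runtime is $\poly((k_2 d)^{k_2}/\beta)$ as claimed.

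For the spectral guarantee, let $E:=W^{a,b}-M^{a,b}$. By construction $|E_{\alpha,\beta}|\le\eta$ for all entries, so
\[
\|E\|_{\mathrm{op}}\le\|E\|_F=\Bigr(\sum_{\alpha,\beta}E_{\alpha,\beta}^2\Bigr)^{1/2}\le N\eta=\beta.
\]
Since $\|E\|_{\mathrm{op}}\le\beta$ is equivalent to $-\beta I \preceq E \preceq \beta I$, this yields $W^{a,b}-\beta I\preceq M^{a,b}\preceq W^{a,b}+\beta I$, which is exactly \eqref{eq: spectral approximation to moments in a strip}.

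The only non-routine ingredient is Claim \ref{claim: deterministic moments in a strip} itself, which reduces the $d$-dimensional integral to a one-dimensional integral against $z^{\alpha_1+\beta_1}e^{-z^2/2}$ via spherical symmetry of the Gaussian after choosing an orthonormal basis adapted to $\vv$, and then approximates the resulting one-dimensional integral by discretization. The main potential obstacle for the current claim is thus purely bookkeeping: one must verify that the factor $N=\binom{d+1}{k_2}$ lost when passing from entry-wise to spectral error is absorbed into the polynomial runtime, which it is because $N\le d^{k_2}$ and Claim \ref{claim: deterministic moments in a strip} depends polynomially on $1/\eta$.
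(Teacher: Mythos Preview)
Your proposal is correct and follows essentially the same route as the paper: approximate each entry of $M^{a,b}$ using Claim \ref{claim: deterministic moments in a strip} applied to the degree-$\le 2k_2$ monomial $\x^{\alpha+\beta}$, then bound the operator norm of the error matrix by its Frobenius norm to convert entrywise accuracy into the spectral guarantee \eqref{eq: spectral approximation to moments in a strip}. The only cosmetic difference is the exact choice of per-entry accuracy ($\beta/N$ versus the paper's $\beta(d+1)^{-k_2/2}$), which is immaterial since both are absorbed into the $\poly((k_2 d)^{k_2}/\beta)$ runtime.
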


\begin{proof}
From Section \ref{claim: deterministic moments in a strip},  we recall
that $\vect x^{\otimes k_{2}}$ is the vector one gets by evaluating
all multidimensional monomials of degree at most $k_{2}$ on input
$\vect x$,  and therefore the $\binom{d+1}{k_{2}}\times\binom{d+1}{k_{2}}$-matrix
$(\vect x^{\otimes k_{2}})(\vect x^{\otimes k_{2}})^{\top}$, 
viewed as a bilinear form,  for degree-$k$ polynomials $p_{1}$ and
$p_{2}$ we have $p_{1}^{\top}(\vect x^{\otimes k_{2}})(\vect x^{\otimes k_{2}})^{\top}p_{2}=p_{1}(\vect x)p_{2}(\vect x)$.
Thus,  the entries of $(\vect x^{\otimes k_{2}})(\vect x^{\otimes k_{2}})^{\top}$
are indexed by pairs of monomials $m_{1}$ and $m_{2}$ over $\R^{d}$
of degree at most $k$,  and we have $m_{1}^{\top}(\vect x^{\otimes k_{2}})(\vect x^{\otimes k_{2}})^{\top}m_{2}=m_{1}(\vect x)m_{2}(\vect x)$.
And the entries of $\E_{\x\sim \Gauss}[(\vect x^{\otimes k_{2}})(\vect x^{\otimes k_{2}})^{\top}\cdot\indicator_{a\leq\vect x\cdot\vect v<b}]$
are also indexed by pairs of monomials $m_{1}$ and $m_{2}$ over
$\R^{d}$ of degree at most $k$ and equal to\\ $\E_{\x\sim \Gauss}[m_{1}(\vect x)m_{2}(\vect x)\cdot\indicator_{a\leq\vect x\cdot\vect v<b}].$
Since the product $m_{1}m_{2}$ is a monomial of degree at most $2k_{2}$, 
Claim \ref{claim: deterministic moments in a strip} tells us that
this value can be approximated up to error $\beta'$ in time $\poly((k_{2}d)^{k_{2}}/\beta')$
(we will set the value of $\beta'$ later).

Thus,  we take the $\binom{d+1}{k_{2}}\times\binom{d+1}{k_{2}}$-matrix
$W^{a, b}$ to have entries pairs of monomials $m_{1}$ and $m_{2}$
over $\R^{d}$ of degree at most $k$ and equal to additive $\beta'$-approximations
to ${\E}_{\vect x\sim\N}[m_{1}(\vect x)m_{2}(\vect x)\cdot\indicator_{a\leq\vect x\cdot\vect v<b}].$
Thus,  the difference between $\E_{\x\sim \Gauss}[(\vect x^{\otimes k_{2}})(\vect x^{\otimes k_{2}})^{\top}\cdot\indicator_{a\leq\vect x\cdot\vect v<b}]$
and $W^{a, b}$ is a $\binom{d+1}{k_{2}}\times\binom{d+1}{k_{2}}$-matrix
whose entries are bounded by $\beta'$ in absolute value. Thus,  the
Frobenius norm of this difference matrix is at most $(d+1)^{k_{2}/2}\beta'$, 
and therefore all eigenvalues of the matrix $\E_{\x\sim \Gauss}[m_{1}(\vect x)m_{2}(\vect x)\cdot\indicator_{a\leq\vect x\cdot\vect v<b}]-W^{a, b}$
are in $[-(d+1)^{k_{2}/2}\beta', (d+1)^{k_{2}/2}\beta']$.
Taking $\beta'=\beta(d+1)^{-k_{2}/2}$,  we can conclude that Equation
\ref{eq: spectral approximation to moments in a strip} holds.

Overall,  the run-time is $\poly((k_{2}d)^{k_{2}}/\beta')$, 
which we see equals to $\poly((k_{2}d)^{k_{2}}/\beta)$
since we have $\beta'=\beta(d+1)^{-k_{2}/2}$. This completes the
proof.
\end{proof}

\section{Testing Massart Noise}\label{appendix:testing-massart}

We give here the full proof of our main theorem (\Cref{theorem:main-result}), which we restate for convenience.

\begin{theorem}
There exists a deterministic algorithm $\mathcal{\mathcal{A_{\text{Massart}}}}$
that runs in time $\poly\left(\frac{Nd}{\epsilon\delta}\right)$ and
for a sufficiently large absolute constant $C$ satisfies the following.
Given parameters $\epsilon, \delta$ in $(0, 1)$ and a dataset ${\bar S}$
of size $N\geq\left(\frac{Cd}{\epsilon\delta}\right)^{C}$ consisting
of elements in $\R^{d}\times\left\{ \pm1\right\} $,  the algorithm
$\mathcal{\mathcal{\mathcal{A_{\text{Massart}}}}}$ outputs either
$(\accept,\vect v)$ for some unit vector \textbf{$\vect v$
}in $\R^{d}$,  or outputs $\reject$ (in the former case we say
$\mathcal{A}$ accepts,  while in the latter case we say $\mathcal{A}$
rejects). The algorithm $\mathcal{A_{\text{Massart}}}$ satisfies
the following conditions:
\end{theorem}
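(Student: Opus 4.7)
The plan is to implement the algorithm outlined at the end of Section 2 and prove its two guarantees (soundness and completeness) by combining the Disagreement Tester of \Cref{thm: disagreement tester}, the Spectral Tester of \Cref{thm: spectral tester}, and the Massart halfspace learner of \Cref{fact:massart-parameter-recovery}. Concretely, $\mathcal{A}_{\text{Massart}}$ first invokes the learner of \cite{diakonikolas2020learning} on $\bar S$ with accuracy parameter $\eps' = \eps^{3/2}/(C\sqrt{d})$ to obtain a candidate unit vector $\vv$, then (i) rejects unless $\pr_{(\x,y)\sim\bar S}[y\neq\sign(\vv\cdot\x)] \le 1/2 - c/2$, (ii) runs the Disagreement Tester of \Cref{thm: disagreement tester} on $(S,\vv,\eps,\delta,\mu)$ for the unlabeled points $S$ of $\bar S$ with $\mu=c/10$, (iii) partitions $\bar S_{\mathsf{False}} = \{(\x,y)\in\bar S:y\neq\sign(\vv\cdot\x)\}$ into $\bar S_{\mathsf{False}}^{\mathsf{near}}$ (points with $|\measuredangle(\vv,\x)-\pi/2|\le \eps^{3/2}/\sqrt{d-1}$) and $\bar S_{\mathsf{False}}^{\mathsf{far}}$, rejects unless $|\bar S_{\mathsf{False}}^{\mathsf{near}}|\le O(\eps)|\bar S|$, and (iv) runs the Spectral Tester on $(U=(1/2-c/2)|\bar S|,\bar S_{\mathsf{False}}^{\mathsf{far}},\vv,\eps,\delta,\mu)$. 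If every test passes, output $(\accept,\vv)$.

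For soundness, let $\vv^*\in\S^{d-1}$ be any halfspace achieving $\opt_{\bar S}$. Starting from the identity
\[
\pr_{(\x,y)\sim \bar S}[y\neq \sign(\vv\cdot\x)] = \pr_{(\x,y)\sim\bar S}[y\neq \sign(\vv^*\cdot\x)] + \frac{|\bar S_b|}{|\bar S|} - \frac{|\bar S_g|}{|\bar S|},
\]
with $\bar S_b,\bar S_g$ as in the body, it suffices to certify $|\bar S_b|/|\bar S| - |\bar S_g|/|\bar S|\le \eps$. Using \eqref{equation:bad-set} and \eqref{equation:good-set}, this reduces to upper-bounding the average of $\ind\{\sign(\vv\cdot\x)\neq\sign(\vv^*\cdot\x)\}$ over $\bar S_{\mathsf{False}}^{\mathsf{far}}$ (where $\vv^*$ acts like a halfspace, handled by the Spectral Tester with budget $U$) and lower-bounding the same indicator averaged over all of $\bar S$ (handled by the Disagreement Tester). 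The Spectral Tester yields $|\bar S_b|/|\bar S| \le (1/2-c/2)(1+\mu)\measuredangle(\vv,\vv^*)/\pi + O(\eps)$, while the Disagreement Tester yields $|\bar S_g|/|\bar S| \ge (1-\mu)\measuredangle(\vv,\vv^*)/\pi - (1/2-c/2)(1+\mu)\measuredangle(\vv,\vv^*)/\pi - O(\eps)$; with $\mu=c/10$ the two combine to give $|\bar S_b|/|\bar S| - |\bar S_g|/|\bar S| \le -\Omega(c^2)\measuredangle(\vv,\vv^*)/\pi + O(\eps) \le O(\eps)$. The contribution of $\bar S_{\mathsf{False}}^{\mathsf{near}}$ to $|\bar S_b|/|\bar S|$ is bounded directly by the explicit $O(\eps)$ check in step (iii), so it does not spoil the inequality.

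For completeness, if $\bar S$ is generated by some $\Massart_{\Gauss,f^*,\eta}\in \Massart_{\Gauss,\Halfspaces,\eta_0}$, then \Cref{fact:massart-parameter-recovery} ensures $\measuredangle(\vv,\vv^*) \le \eps'$ with probability at least $1-\delta$, so $\pr_{(\x,y)\sim\bar S}[y\neq\sign(\vv\cdot\x)]$ is within $\eps'$ of the true noise rate and thus at most $1/2-c/2$ by standard Chernoff bounds. The unlabeled sample is i.i.d. Gaussian, hence the Disagreement Tester accepts by its completeness clause. The key observation for the Spectral Tester is the coupling described after Theorem 2.6: the Massart oracle can be simulated by first drawing an RCN$(\eta_0)$ labeling and then flipping back a subset, so $\bar S_{\mathsf{False}}^{\mathsf{far}} \subseteq \bar S_{\mathsf{False}}^{\mathsf{RCN,far}}$, and the unlabeled points of $\bar S_{\mathsf{False}}^{\mathsf{RCN,far}}$ are an i.i.d. Gaussian sample of size $\approx \eta_0 |\bar S| \le U$; monotonicity under removal (clause 3 of \Cref{thm: spectral tester}) then transfers acceptance down to $\bar S_{\mathsf{False}}^{\mathsf{far}}$. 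Finally, $|\bar S_{\mathsf{False}}^{\mathsf{near}}|/|\bar S| \le O(\eps)$ holds with high probability because $\pr_{\x\sim\Gauss_d}[|\measuredangle(\vv,\x)-\pi/2|\le\eps^{3/2}/\sqrt{d-1}]=O(\eps^{3/2})$.

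The main obstacle I anticipate is the asymmetry of Massart noise combined with the fact that $\vv\neq\vv^*$ exactly. The Disagreement Tester alone is not monotone under adversarial removal of points, so it cannot certify $|\bar S_b|/|\bar S|$ directly; the Spectral Tester repairs this but only in the ``far'' regime, which is why the near/far split and the explicit $\eps'=\eps^{3/2}/(C\sqrt d)$ parameter recovery accuracy are essential—$\eps'$ must be small enough that $\bar S_{\mathsf{False}}^{\mathsf{near}}$ (the region where the events $y\neq \sign(\vv\cdot\x)$ and $y\neq\sign(\vv^*\cdot\x)$ can disagree without reflecting a true disagreement region) has total mass $O(\eps)$ under the Gaussian, while still being polynomially large so that \Cref{fact:massart-parameter-recovery} runs in $\poly(d/\eps)$ time. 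Calibrating these parameters with the $(1\pm\mu)$ multiplicative slack of the two testers against the constant gap $c$ is the delicate quantitative step.
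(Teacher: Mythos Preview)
Your proposal is correct and follows essentially the same approach as the paper: the same algorithm (Massart learner for $\vv$, disagreement tester on all of $S$, near/far split of $S_{\mathsf{False}}$, spectral tester on $S_{\mathsf{False}}^{\mathsf{far}}$ with $U$ a constant fraction of $N$), and the same soundness calculation and RCN-coupling completeness argument. One small correction in your completeness step: the i.i.d.\ Gaussian superset on which you invoke the spectral tester's completeness and monotonicity should be the \emph{full} RCN-false set $\bar S_{\mathsf{False}}^{\mathsf{RCN}}$ (defined with respect to $\vv^*$), not its far restriction---the far restriction conditions on $\x$ and is no longer Gaussian; the inclusion $\bar S_{\mathsf{False}}^{\mathsf{far}}\subseteq \bar S_{\mathsf{False}}^{\mathsf{RCN}}$ holds because for far points $\sign(\vv\cdot\x)=\sign(\vv^*\cdot\x)$, which is exactly how the paper argues.
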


\begin{enumerate}
\item \textbf{Completeness: }The algorithm $\mathcal{\mathcal{A_{\text{Massart}}}}$
accepts with probability at least $1-O(\delta)$ if ${\bar S}$
is generated by ${\Massart_{\Gauss,f,\eta_0}}$ where $f$ is
an origin-centered halfspace and $\eta_{0}\leq1/3$.
\item \textbf{Soundness:} For any dataset ${\bar S}$ of size $N\geq\left(\frac{Cd}{\epsilon\delta}\right)^{C}$, 
if $\mathcal{\mathcal{\mathcal{A_{\text{Massart}}}}}$ accepts then
the vector $\vect v$ given by $\mathcal{\mathcal{A_{\text{Massart}}}}$
satisfies
\begin{equation}
\pr_{(\vect x, y)\sim {\bar S}}\left[\sign(\vect x\cdot\vect v)\neq y\right]\leq\opt+O(\epsilon), \label{eq: optimality}
\end{equation}
where $\opt$ is defined to be $\min_{\vect v'\in\R^{d}}\left(\pr_{(\vect x, y)\sim {\bar S}}\left[\sign(\vect x\cdot\vect v')\neq y\right]\right)$.
\end{enumerate}
The rest of this section proves the above theorem.
The algorithm $\mathcal{\mathcal{A_{\text{Massart}}}}$ does the following
(where $C$ is a sufficiently large absolute constant):
\begin{itemize}
\item \textbf{Given: }parameters $\epsilon, \delta$ in $(0, 1)$ and a dataset
${\bar S}$ of size $N\geq\left(\frac{Cd}{\epsilon\delta}\right)^{C}$
consisting of elements in $\R^{d}\times\left\{ \pm1\right\} $.
\begin{enumerate}
\item Let $\vect v$ be the output of the algorithm of \Cref{fact:massart-parameter-recovery} run on the dataset $\bar S$ with accuracy parameter $\eps' = \frac{\eps^{3/2}}{100 \sqrt{d-1}}$ and failure probability $\delta$. Without loss of generality we can assume that the algorithm is deterministic, because we can use some of the points in $\bar S$ for random seeds.
\item $\Sunlabeled\leftarrow\left\{ \vect x:\ (\vect x, y)\in {\bar S}\right\} $
and $N\leftarrow\abs{{\bar S}}$. 
\item Run the tester $\mathcal{\Tdisagree}$ from Theorem \ref{thm: disagreement tester}, 
on input $(\Sunlabeled,  \vect v,  \epsilon,  \delta,  0.1)$.
\item If $\Tdisagree$ rejects in the previous step,  output output $\reject$.
\item If $\abs{S_{\mathsf{False}}}>\frac{2}{5}N$,  then output $\reject$.
\item $S_{\mathsf{False}}^{\mathsf{far}}\leftarrow S_{\mathsf{False}}\cap\left\{ \vect x\in\R^{d}:\ \abs{\measuredangle(\vect x, \vect v)-\frac{\pi}{2}}>\frac{\epsilon^{3/2}}{\sqrt{d-1}}\right\} $;
$S_{\mathsf{False}}^{\mathsf{near}}\leftarrow S_{\mathsf{False}}\setminus S_{\mathsf{False}}^{\mathsf{far}}$.
\item If $S_{\mathsf{False}}^{\mathsf{near}}>4\epsilon N$,  then
output $\reject$.
\item Take $U=\frac{2}{5}N$ and then run the spectral tester $\mathcal{\Tspectral}$ from Theorem \ref{thm: spectral tester} with the input parameters $(U,  S_{\mathsf{False}}^{\mathsf{far}},  \vect v,  \epsilon,  \delta,  0.1)$.
\item If $\mathcal{\Tspectral}$ rejects in the previous step,  output output $\reject$.
\item Otherwise,  output  $(\accept,\vect v)$. 
\end{enumerate}
\end{itemize}
From the run-time guarantees given in Theorem \ref{thm: disagreement tester}
and Theorem \ref{thm: spectral tester},  we see immideately that the
run-time of the algorithm $\mathcal{A_{\text{Massart}}}$ is $\poly\left(\frac{d}{\epsilon}\log\frac{1}{\delta}\right)$. 

\subsection{Soundness}

We first show the soundness condition. For any dataset ${\bar S}$
of size $N\geq\left(\frac{Cd}{\epsilon\delta}\right)^{C}$,  we need
to show that if $\mathcal{A_{\text{Massart}}}$ accepts then the vector
$\vect v$ given by $\mathcal{\mathcal{A_{\text{Massart}}}}$ satisfies
Equation \ref{eq: optimality}. Theorems \ref{thm: disagreement tester}
and \ref{thm: spectral tester} imply that if the algorithm $\mathcal{A_{\text{Massart}}}$accepts
then

\begin{equation}
\pr_{(\vect x, y)\sim {\bar S}}\left[\sign(\vect x\cdot\vect v)\neq\sign(\vect x\cdot\vect v')\right]=(1\pm0.1)\frac{\measuredangle(\vect v, \vect v')}{\pi}\pm O(\epsilon).\label{eq: the bound from disagreement tester}
\end{equation}
\begin{equation}
\frac{1}{U}\sum_{\vect x\in S_{\mathsf{False}}^{\mathsf{far}}}\left[\indicator_{\sign(\vect x\cdot\vect v)\neq\sign(\vect x\cdot\vect v')}\right]\leq1.1\frac{\measuredangle(\vect v, \vect v')}{\pi}+O(\epsilon).\label{eq: the bound from spectral tester}
\end{equation}
Rearranging,  we get 
\begin{multline*}
\pr_{(\vect x, y)\sim {\bar S}}\left[\sign(\vect x\cdot\vect v')\neq y\right]-\pr_{(\vect x, y)\sim {\bar S}}\left[y\neq\sign(\vect v\cdot\vect x)\right]=\\
\pr_{(\vect x, y)\sim {\bar S}}\left[\sign(\vect x\cdot\vect v)\neq\sign(\vect x\cdot\vect v')\land y=\sign(\vect v\cdot\vect x)\right]-\\
-\pr_{(\vect x, y)\sim {\bar S}}\left[\sign(\vect x\cdot\vect v)\neq\sign(\vect x\cdot\vect v')\land y\neq\sign(\vect v\cdot\vect x)\right]=\\
\pr_{(\vect x, y)\sim {\bar S}}\left[\sign(\vect x\cdot\vect v)\neq\sign(\vect x\cdot\vect v')\right]-2\pr_{(\vect x, y)\sim {\bar S}}\left[\sign(\vect x\cdot\vect v)\neq\sign(\vect x\cdot\vect v')\land y\neq\sign(\vect v\cdot\vect x)\right]=\\
\pr_{\vect x\sim \Sunlabeled}\left[\sign(\vect x\cdot\vect v)\neq\sign(\vect x\cdot\vect v')\right]-\frac{2}{N}\sum_{\vect x\in S_{\mathsf{False}}}\left[\indicator_{\sign(\vect x\cdot\vect v)\neq\sign(\vect x\cdot\vect v')}\right]
\end{multline*}
By Equation \ref{eq: the bound from disagreement tester},  the first
term above is lower-bounded by $0.9\frac{\measuredangle(\vect v, \vect v')}{\pi}-O(\epsilon)$.
The second term can broken into two components: $\frac{2}{N}\sum_{\vect x\in S_{\mathsf{False}}^{\mathsf{near}}}\left[\indicator_{\sign(\vect x\cdot\vect v)\neq\sign(\vect x\cdot\vect v')}\right]$
and $\frac{2}{N}\sum_{\vect x\in S_{\mathsf{False}}^{\mathsf{far}}}\left[\indicator_{\sign(\vect x\cdot\vect v)\neq\sign(\vect x\cdot\vect v')}\right]$.
If the algorithm does not reject in step (8),  the former term is upper-bounded
by $O(\epsilon)$,  while Equation \ref{eq: the bound from disagreement tester}
tells us that the latter term is upper-bounded by $\frac{2U}{N}\left(1.1\frac{\measuredangle(\vect v, \vect v')}{\pi}+O(\epsilon)\right)$.
Overall,  substituting these bounds and recalling that $U/N=2/5$, 
we get
\begin{multline*}
\pr_{(\vect x, y)\sim {\bar S}}\left[\sign(\vect x\cdot\vect v')\neq y\right]-\pr_{(\vect x, y)\sim {\bar S}}\left[y\neq\sign(\vect v\cdot\vect x)\right]\geq\\
0.9\frac{\measuredangle(\vect v, \vect v')}{\pi}-O(\epsilon)-\frac{2U}{N}\left(1.1\frac{\measuredangle(\vect v, \vect v')}{\pi}+O(\epsilon)\right)=\\
0.9\frac{\measuredangle(\vect v, \vect v')}{\pi}-\frac{4}{5}\left(1.1\frac{\measuredangle(\vect v, \vect v')}{\pi}\right)-O(\epsilon)=0.02\frac{\measuredangle(\vect v, \vect v')}{\pi}-O(\epsilon)\geq-O(\epsilon).
\end{multline*}
Thus,  choosing $\vect v'$ to be $\arg\min_{\vect u}\left(\pr_{(\vect x, y)\sim\dpairs}\left[\sign(\vect x\cdot\vect u)\neq y\right]\right)$, 
we get 
\[
\pr_{(\vect x, y)\sim {\bar S}}\left[\sign(\vect x\cdot\vect v)\neq y\right]\leq\opt+O(\epsilon), 
\]
finishing the proof of soundess.

\subsection{Completeness}

We now argue that for a sufficiently large absolute constant $C$, 
the algorithm $\mathcal{\mathcal{A_{\text{Massart}}}}$ satisfies
the completeness condition. In this subsection we assume that ${\bar S}$
is generated by ${\Massart_{\Gauss,f,\eta_0}}$ where $f(\vect x)=\sign({\vv^*}\cdot\vect x)$
is an origin-centered halfspace and $\eta_{0}\leq1/3$. We remind
the reader that,  for some function $\eta:\R^{d}\rightarrow[0, \eta_{0}]$, 
every time ${\Massart_{\Gauss,f,\eta_0}}$ is invoked it generates
an i.i.d. pair $(\vect x, y)\in\R^{d}\times\left\{ \pm1\right\} $
where $\vect x$ is drawn from $\N(0, I_{d})$ and $y=f(\vect x)$
with probability $\eta(\vect x)$ and $-f(\vect x)$ with probability
$1-\eta(\vect x)$. We would like to show that $\mathcal{A_{\text{Massart}}}$
accepts with probability at least $1-O(\delta)$.

For the purposes of completeness analysis,  we define the set $S_{\text{augmented}}$
to be a set of points in $\R^{d}$ generated through the following
random process:
\begin{itemize}
\item If a datapoint $\vect x$ in $\Sunlabeled$ has label $y=-f(\vect x)$, 
then $\vect x$ in included into $S_{\text{augmented}}$.
\item If a datapoint $\vect x$ in $\Sunlabeled$ has label $y=f(\vect x)$, 
then $\vect x$ in included into $S_{\text{augmented}}$ with probability
$\frac{\eta_{0}-\eta(\vect x)}{1-\eta(\vect x)}$ (and this choice
is made independently for different $\vect x$ in $\Sunlabeled$).
\end{itemize}
With the definition above in hand,  we claim the following:
\begin{claim}
\label{claim: about the distribution of S augmented}If the absolute
constant $C$ is large enough,  then with probability at least $1-\delta$
it is the case that $\abs{S_{\text{augmented}}}\leq\frac{2}{5}N$.
Furthermore,  conditioned on any particular value of the size $|S_{\text{augmented}}|$ of this set,  the individual elements of
$S_{\text{augmented}}$ are distributed i.i.d. from the standard Gaussian
distribution $\N(0, I_{d})$.
\end{claim}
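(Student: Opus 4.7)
The key observation is that each point $\vect x_i$ of $\Sunlabeled$ is independently included in $S_{\text{augmented}}$ with probability exactly $\eta_0$, \emph{independent} of $\vect x_i$ itself. Both parts of the claim follow immediately from this.

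First I would verify the key observation by computing, for a single datapoint generated by $\Massart_{\Gauss,f,\eta_0}$, the probability of inclusion in $S_{\text{augmented}}$ conditional on $\vect x$. By the Massart definition, conditional on $\vect x$ the label equals $-f(\vect x)$ with probability $\eta(\vect x)$ and equals $f(\vect x)$ otherwise; the construction of $S_{\text{augmented}}$ inserts $\vect x$ with probability $1$ in the former case and with an additional independent probability $\tfrac{\eta_0-\eta(\vect x)}{1-\eta(\vect x)}\in[0,1]$ (valid since $\eta(\vect x)\le\eta_0$) in the latter. Summing gives
\[
\eta(\vect x)+(1-\eta(\vect x))\cdot\frac{\eta_0-\eta(\vect x)}{1-\eta(\vect x)} \;=\; \eta_0,
\]
which does not depend on $\vect x$. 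Since the label-flip coin and the auxiliary insertion coin both use randomness that is independent of the marginal $\vect x$, the $N$ inclusion indicators $\indicator_{\vect x_i\in S_{\text{augmented}}}$ are i.i.d.\ $\mathrm{Bernoulli}(\eta_0)$ random variables, and are jointly independent of the tuple $(\vect x_1,\dots,\vect x_N)$.

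Given this, $|S_{\text{augmented}}|$ is a $\mathrm{Bin}(N,\eta_0)$ random variable of mean at most $N/3$. A standard Chernoff bound gives $\pr[|S_{\text{augmented}}|>\tfrac{2}{5}N]\le \exp(-\Omega(N))$, and since $N\ge (Cd/(\epsilon\delta))^C$, taking $C$ sufficiently large makes this bound at most $\delta$, which proves the first half. For the second half, the independence of the inclusion bits from $(\vect x_1,\dots,\vect x_N)$ together with the fact that the $\vect x_i$ are i.i.d.\ $\N(0,I_d)$ implies that, conditional on any realization $|S_{\text{augmented}}|=k$, the included points form a uniformly random size-$k$ subset of an i.i.d.\ Gaussian sample, and hence are themselves i.i.d.\ $\N(0,I_d)$.

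There is no substantial technical obstacle; the only thing to be careful about is to treat the Massart label coin and the auxiliary coin used in constructing $S_{\text{augmented}}$ as external randomness independent of the marginal, so that the inclusion event factors as a fixed-probability Bernoulli independent of the observed feature vector. Everything else is Chernoff plus symmetry.
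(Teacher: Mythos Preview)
Your proposal is correct and follows essentially the same argument as the paper: compute that the inclusion probability simplifies to $\eta_0$ independently of $\vect x$, apply a binomial tail bound (you say Chernoff, the paper says Hoeffding), and then use independence of the inclusion bits from the $\vect x_i$'s to conclude the conditional i.i.d.\ Gaussian structure. If anything, your write-up is slightly more explicit than the paper's about why the inclusion indicators are jointly independent of $(\vect x_1,\dots,\vect x_N)$.
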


\begin{proof}
Overall,  we know that $y=-f(\vect x)$ with probability $\eta(\vect x)$, 
so overall each $\vect x$ in $\Sunlabeled$ gets included
into $S_{\text{augmented}}$ independently with probability $\eta(\vect x)+(1-\eta(\vect x))\frac{\eta_{0}-\eta(\vect x)}{1-\eta(\vect x)}=\eta_{0}$.
Overall every element $\Sunlabeled$ is included into $S_{\text{augmented}}$with
independently probability $\eta_{0}$. Since $\eta_{0}$ is at most
$1/3$ and $N\geq\left(\frac{Cd}{\epsilon\delta}\right)^{C}$,  we
see that the standard Hoeffding bound tells us that for a sufficiently
large absolute constant $C$ with probability at least $1-\delta$
it is the case that $\abs{S_{\text{augmented}}}\leq\frac{2}{5}\abs{\Sunlabeled}=\frac{2}{5}N$.
This proves the first part of the claim.

Additionally,  recall that in this subsection we are assuming that
${\bar S}$ is generated by ${\Massart_{\Gauss,f,\eta_0}}$.
This implies that the elements of $\Sunlabeled$ are generated
i.i.d. from $\N(0, I_{d})$. Since the decision wheather each datapoint
$\vect x$ in $\Sunlabeled$is included into $S_{\text{augmented}}$
is made with probability $\eta_{0}$ independently from the actual
value of $\vect x$,  this implies the element of $S_{\text{augmented}}$
are distributed i.i.d. as $\N(0, I_{d})$ even conditioned on any specific
value of $|S_{\text{augmented}}|$. This finishes the proof of the
claim.
\end{proof}
The following claim lists a number of desirable events for algorithm
$A_{\text{Massart}}$ and shows that they are likely to hold. 
\begin{claim}
\label{claim: massart bad events unlikely}If $C$ is a sufficiently
large absolute constant,  the following events take place with probability
at least $1-O(\delta)$:
\begin{enumerate}
\item The set $\Sunlabeled$ is such that for all unit vectors $\vect{v}'$ the algorithm $\Tdisagree$ accepts when given the input $(\Sunlabeled,  \vect{v}',  \epsilon,  \delta,  0.1)$.
\item For all vectors $\vect u$ in $\R^{d}$,  we have 
\[
\abs{\pr_{(\vect x, y)\sim {\bar S}}\left[\sign(\vect x\cdot\vect u)\neq y\right]-\pr_{(\vect x, y)\sim{\Massart_{\Gauss,f,\eta_0}}}\left[\sign(\vect x\cdot\vect v)\neq y\right]}\leq2d\sqrt{\frac{\log N}{N}}\log\frac{1}{\delta}.
\]
\item For all vectors $\vect u$ in $\R^{d}$ and scalars $\angleparam$,  we
have 
\[
\abs{\pr_{\vect x\sim \Sunlabeled}\left[\measuredangle(\vect x, \vect u)\leq\angleparam\right]-\pr_{\vect x\sim\N(0, I_{d})}\left[\measuredangle(\vect x, \vect u)\leq\angleparam\right]}\leq2d\sqrt{\frac{\log N}{N}}\log\frac{1}{\delta}.
\]
\item It is the case that $\measuredangle(\vect v, {\vv^*})\leq\frac{\epsilon^{3/2}}{10\sqrt{d-1}}$.
\item It is the case that $\abs{S_{\mathsf{False}}}\leq\frac{2}{5}N$
and $\abs{S_{\text{augmented}}}\leq\frac{2}{5}N$.
\item $S_{\text{augmented}}$ is such that for all unit vectors $\vect{v}'$ the algorithm $\mathcal{\Tspectral}$
accepts when given as input 
on the input $\left(U,  S_{\text{augmented}},  \vect{v}',  \epsilon,  \delta,  0.1\right)$ (we remind the reader that $U=\frac{2}{5}N$).
\end{enumerate}
\end{claim}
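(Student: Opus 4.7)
My approach is to dispatch each of the six events independently using either the completeness guarantees of the subroutines we have already designed, or standard VC-based uniform convergence, and then union-bound the $\delta$-level failure probabilities.

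\textbf{Items 1, 4, and 6 (completeness of subroutines).} Under $\Massart_{\Gauss,f,\eta_0}$ the marginal of $\bar S$ on $\R^d$ is $\Gauss_d$; therefore $\Sunlabeled$ consists of $N \ge (Cd/(\eps\delta))^C$ i.i.d.\ Gaussian points, and item~1 is precisely the completeness statement of \Cref{thm: disagreement tester} with $\mu = 0.1$. Item~4 is a direct instantiation of \Cref{fact:massart-parameter-recovery} with accuracy parameter $\eps' = \eps^{3/2}/(100\sqrt{d-1})$ and failure probability $\delta$. For item~6, \Cref{claim: about the distribution of S augmented} tells us both that $|S_{\text{augmented}}| \le U = 2N/5$ with probability $1-\delta$ and that, conditioned on its size, $S_{\text{augmented}}$ is i.i.d.\ $\Gauss_d$; since $U \ge (Cd/(\eps\delta))^C$, the completeness half of \Cref{thm: spectral tester} yields the ``for all $\vect v'$'' conclusion.

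\textbf{Items 2, 3, and 5 (uniform convergence and a population error bound).} For item~2, the loss class $\{(\x, y) \mapsto \ind\{\sign(\vect u \cdot \x) \neq y\} : \vect u \in \R^d\}$ has VC dimension $d$, so standard uniform convergence (combined with a Hoeffding step to absorb the $\log(1/\delta)$ factor) gives the allowed deviation $2d\sqrt{\log N/N}\log(1/\delta)$. For item~3, the event $\measuredangle(\x, \vect u) \le \angleparam$ can be written as an intersection of a halfspace $\vect u \cdot \x \ge 0$ with a degree-$2$ polynomial threshold $(\vect u \cdot \x)^2 \ge \cos^2(\angleparam)\|\x\|^2$, a family of VC dimension $O(d)$, so the same argument applies. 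For item~5, the bound on $|S_{\text{augmented}}|$ is already part of \Cref{claim: about the distribution of S augmented}; the bound on $|S_{\mathsf{False}}|$ follows by observing that the population misclassification probability of $\vv$ under $\Massart_{\Gauss,f,\eta_0}$ is at most $\eta_0 + \measuredangle(\vv,\vv^*)/\pi$ (a point is wrong for $\vv$ only if its label was adversarially flipped or $\vv$ and $\vv^*$ disagree on $\x$), which by item~4 and $\eta_0 \le 1/3$ is $1/3 + o(1)$; item~2 then turns this into an empirical bound below $2/5$ for sufficiently large $N$.

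\textbf{Main obstacle.} The subtlest part is item~6: $S_{\text{augmented}}$ is not a plain i.i.d.\ Gaussian sample of fixed size but is produced by a labelling-dependent filtering process whose retention rule depends on the unknown, adversarial noise function $\eta(\cdot)$. The key tuning in \Cref{claim: about the distribution of S augmented} is that the inclusion probability telescopes to exactly $\eta_0$ regardless of $\x$, which is what makes the conditional distribution of $S_{\text{augmented}}$ i.i.d.\ Gaussian; once that is granted, the monotonicity-under-removal property of $\Tspectral$ permits invoking its completeness even though the realized size of $S_{\text{augmented}}$ is random. A final union bound over the six sub-events produces the claimed $1 - O(\delta)$ probability.
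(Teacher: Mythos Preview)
Your proposal is correct and follows essentially the same route as the paper: each of the six events is dispatched by the completeness guarantee of the relevant subroutine (\Cref{thm: disagreement tester}, \Cref{fact:massart-parameter-recovery}, \Cref{thm: spectral tester} together with \Cref{claim: about the distribution of S augmented}) or by a standard VC bound, and then combined by a union bound. Your more explicit description of item~3 via a degree-$2$ polynomial threshold function is a concrete way of seeing what the paper phrases as ``the VC dimension of origin-centric cones is $O(d)$,'' and your remark about monotonicity-under-removal handling the random size of $S_{\text{augmented}}$ matches exactly how the paper's completeness proof of $\Tspectral$ reduces to the case $M=U$.
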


\begin{proof}
Event 1 holds with probability at least $1-O(\delta)$ by Theorem
\ref{thm: disagreement tester}. The Event (2) holds with probability
at least $1-O(\delta)$ by the standard VC bound,  together with the
fact that the VC dimension of the class of halfspaces in $\R^{d}$
is at most $d+1$. Analogously,  Event (2) holds with probability at
least $1-O(\delta)$ by the standard VC bound,  together with the fact
that the VC dimension of the class of origin-centric cones in $\R^{d}$
is most $O(d)$. 

Recall that in step (1) of $A_{\text{Massart}}$ we used the algorithm
of \cite{diakonikolas2020learning} (see \Cref{fact:massart-parameter-recovery}) which
implies that with probability at least $1-\delta$ we have 
$\measuredangle(\vect v, {\vv^*})\leq\frac{\epsilon^{3/2}}{10\sqrt{d-1}}$.

If Event (2) holds,  we have 

\[
\pr_{(\vect x, y)\sim {\bar S}}\left[\sign(\vect x\cdot\vect v)\neq y\right]\leq\pr_{(\vect x, y)\sim{\Massart_{\Gauss,f,\eta_0}}}\left[\sign(\vect x\cdot\vect v)\neq y\right]+2d\sqrt{\frac{\log N}{N}}\log\frac{1}{\delta}, 
\]
and if Equation \ref{thm: disagreement tester} also holds we have
\begin{multline*}
\frac{\abs{S_{\mathsf{False}}}}{N}=\pr_{(\vect x, y)\sim {\bar S}}\left[\sign(\vect x\cdot\vect v)\neq y\right]\leq\\
\pr_{(\vect x, y)\sim{\Massart_{\Gauss,f,\eta_0}}}\left[\sign(\vect x\cdot{\vv^*})\neq y\right]+\frac{\epsilon^{3/2}}{100\sqrt{d-1}}+2d\sqrt{\frac{\log N}{N}}\log\frac{1}{\delta}\leq\\
\eta_{0}+\frac{1}{100}+2d\sqrt{\frac{\log N}{N}}\log\frac{1}{\delta}\leq\frac{1}{3}+\frac{1}{100}+2d\sqrt{\frac{\log N}{N}}\log\frac{1}{\delta}.
\end{multline*}
Substituting $N\geq\left(\frac{Cd}{\epsilon\delta}\right)^{C}$,  we
see that the above is at most $\frac{2}{5}$ if $C$ is a sufficiently
large absolute constant. Thus,  with probability at least $1-O(\delta)$
we have $\abs{S_{\mathsf{False}}}\leq\frac{2N}{5}$. At the same
time,  Claim \ref{thm: disagreement tester} tells us that with probability
at least $1-\delta$ we have $\abs{S_{\text{augmented}}}\leq U=\frac{2N}{5}$.
Overall,  we see that Event (5) holds with probability at least $1-O(\delta)$.

Finally,  Claim \ref{thm: spectral tester} tells us that with probability
at least $1-O(\delta)$ it is the case that $\abs{S_{\text{augmented}}}\leq\frac{2}{5}N$.
Furthermore,  Claim \ref{thm: spectral tester} also tells us that, 
even conditioned on this event,  the set $S_{\text{augmented}}$ consists
of i.i.d. samples from $\N(0, I_{d})$. Then,  the Completeness condition
in Theorem \ref{thm: spectral tester} tells us that with probability
at least $1-O(\delta)$ it is the case that $\mathcal{\Tspectral}$
accepts if and is given $\mu=0.1$,  $U=\frac{2}{5}N$ and input dataset
$S_{\text{augmented}}$. 
\end{proof}
Now,  we first note that if Event 1 takes place,  then $\mathcal{\Tdisagree}$ accepts in step (3) of the algorithm
$A_{\text{Massart}}$.

If Event 3 in Claim \ref{claim: massart bad events unlikely} takes
place,  then from the triangle inequality it follows that 
\begin{equation}
\abs{\pr_{\vect x\sim \Sunlabeled}\left[\abs{\measuredangle(\vect x, \vect v)-\frac{\pi}{2}}\leq\frac{\epsilon^{3/2}}{\sqrt{d-1}}\right]-\pr_{\vect x\sim\N(0, I_{d})}\left[\abs{\measuredangle(\vect x, \vect v)-\frac{\pi}{2}}\leq\frac{\epsilon^{3/2}}{\sqrt{d-1}}\right]}\leq4d\sqrt{\frac{\log N}{N}}\log\frac{1}{\delta}.\label{eq: number of points close to margin close to expectation}
\end{equation}
It is also the case that 
\begin{multline}
\pr_{\vect x\sim\N(0, I_{d})}\left[\abs{\measuredangle(\vect x, \vect v)-\frac{\pi}{2}}\leq\frac{\epsilon^{3/2}}{\sqrt{d-1}}\right]\\
\le \pr_{\vect x\sim\N(0, I_{d})}\left[\abs{\vect x\cdot\vect v}\leq\epsilon\right]+\pr_{\vect x\sim\N(0, I_{d})}\left[\norm{\vect x-\vect v\left(\vect x\cdot\vect v\right)}\tan\left(\frac{\epsilon^{3/2}}{\sqrt{d-1}}\right)\leq\epsilon\right]\\\
\le \pr_{\vect x\sim\N(0, I_{d})}\left[\abs{\vect x\cdot\vect v}\leq\epsilon\right]+\pr_{\vect x\sim\N(0, I_{d})}\left[\norm{\vect x-\vect v\left(\vect x\cdot\vect v\right)}\leq\sqrt{\frac{d-1}{\epsilon}}\right]\\
=\pr_{x\sim\N(0, 1)}\left[\abs x\leq\epsilon\right]+\pr_{\vect x\sim\N(0, I_{d-1})}\left[\norm{\vect x}\leq\sqrt{\frac{d-1}{\epsilon}}\right]\leq3\epsilon\label{eq: small margin region has small Gaussian mass}
\end{multline}
Combining Equations \ref{eq: number of points close to margin close to expectation}
and \ref{eq: small margin region has small Gaussian mass}
we get
\[
\pr_{\vect x\sim \Sunlabeled}\left[\abs{\measuredangle(\vect x, \vect v)-\frac{\pi}{2}}\leq\frac{\epsilon^{3/2}}{\sqrt{d-1}}\right]\leq3\epsilon+4d\sqrt{\frac{\log N}{N}}\log\frac{1}{\delta}\leq4\epsilon, 
\]
where the last inequality holds if $C$ is a sufficiently large absolute
constant. Since every element $\vect x$ in $S_{\mathsf{False}}^{\mathsf{near}}$
is in $\Sunlabeled$ and also satisfies $\abs{\measuredangle(\vect x, \vect v)-\frac{\pi}{2}}\leq\frac{\epsilon^{3/2}}{10\sqrt{d}}$, 
we see that $S_{\mathsf{False}}^{\mathsf{near}}$ has a size
of at most $4\epsilon N$ and therefore the algorithm $\mathcal{A_{\text{Massart}}}$
does not reject in step 8.

If Event 4 in Claim \ref{claim: massart bad events unlikely} takes
place,  then it is the case that $\measuredangle(\vect v, {\vv^*})\leq\frac{\epsilon^{3/2}}{2\sqrt{d-1}}$.
If this is the case,  the halfspaces $\sign(\vect v\cdot\vect x)$
and $\sign({\vv^*}\cdot\vect x)$ will agree for all
vectors $\vect x$ satisfying $\abs{\measuredangle(\vect x, \vect v)-\frac{\pi}{2}}>\frac{\epsilon^{3/2}}{\sqrt{d-1}}$, 
which holds for all $\vect x$ in $S_{\mathsf{False}}^{\mathsf{far}}$.
Overall,  for every $\vect x$ in $S_{\mathsf{False}}^{\mathsf{far}}$
the corresponding label $y$ satisfies $y\neq\sign(\vect v\cdot\vect x)=\sign({\vv^*}\cdot\vect x)$.
Recalling the definition of the set $S_{\text{augmented}}$,  we see
that $S_{\mathsf{False}}^{\mathsf{far}}\subseteq S_{\text{augmented}}$.
If Event 6 in Claim \ref{claim: massart bad events unlikely} holds
then the set $S_{\text{augmented}}$ is such that if the
algorithm $\mathcal{\Tspectral}$ accepts when given as input $\left(U,  S_{\text{augmented}},  \vect{v},  \epsilon,  \delta,  0.1\right)$. But Theorem \ref{thm: spectral tester} shows
that $\mathcal{\Tspectral}$ satisfies Monotonicity under Datapoint
Removal,  which together with the inclusion $S_{\mathsf{False}}^{\mathsf{far}}\subseteq S_{\text{augmented}}$ implies that $\mathcal{\Tspectral}$ accepts if it
is given 
$\left(U,  S_{\mathsf{False}}^{\mathsf{far}},  \vect{v},  \epsilon,  \delta,  0.1\right)$.
Thus,  the tester $\mathcal{\Tspectral}$ does not reject in step 10.

We conclude that with probability at least $1-O(\delta)$
the algorithm $A_{\text{Massart}}$ will not reject in any of the
four steps in which it could potentially reject. If this is the case, 
the algorithm $A_{\text{Massart}}$ will accept.
\end{document}